\newcommand{\affilicon}[2][1.15em]{%
  \raisebox{-0.2\height}{\includegraphics[height=#1]{#2}}%
}
\definecolor{darkred}{RGB}{150,0,0}
\definecolor{darkgreen}{RGB}{0,150,0}
\definecolor{darkblue}{RGB}{0,0,200}
\newcommand{\beq}{\begin{equation}}
\newcommand{\ba}{\begin{align}}
\newcommand{\ea}{\end{align}}
\newcommand{\eeq}{\end{equation}}
\newcommand{\vct}[1]{\bm{#1}}
\newcommand{\tr}[1]{\texttt{tr}(#1)}
\newcommand{\mtx}[1]{\bm{#1}}
\newcommand{\etast}{\eta^*}
\newcommand{\hb}{\vct{h}}
\newcommand{\A}{{\mtx{A}}}
\newcommand{\Lc}{{\cal{L}}}
\newcommand{\Lctt}{{\cal{L}}_{TT}}
\newcommand{\Lchtt}{{\hat{\cal{L}}_{\Sctt}}}
\newcommand{\Nc}{{\cal{N}}}
\newcommand{\Dczpt}{{\cal{D}_{\text{z}}^{\text{PT}}}}
\newcommand{\Dcztt}{{\cal{D}_{\text{z}}^{\text{TT}}}}
\newcommand{\Pb}{{\mtx{P}}}
\newcommand{\La}{{\boldsymbol{\Lambda}}}
\newcommand{\Labt}{{\boldsymbol{\Lambda}_{\bt}}}
\newcommand{\Lax}{{\boldsymbol{\Lambda}_{\x}}}
\newcommand{\Iden}{{\mtx{I}}}
\newcommand{\M}{{\mtx{M}}}
\newcommand{\z}{{\vct{z}}}
\newcommand{\bt}{{\boldsymbol{\beta}}}
\newcommand{\Sctt}{\mathcal{S}_{\text{TT}}}
\newcommand{\Nn}{\mathcal{N}}
\newcommand{\vb}{\vct{v}}
\newcommand{\w}{\vct{w}}
\newcommand{\g}{{\vct{g}}}
\newcommand{\Z}{\mtx{Z}}
\newcommand{\bSi}{\boldsymbol{\Sigma}}
\newcommand{\bSix}{\boldsymbol{\Sigma}_{\x}}
\newcommand{\bSibt}{\boldsymbol{\Sigma}_{\boldsymbol{\beta}}}
\newcommand{\x}{\vct{x}}
\newcommand{\rb}{\vct{r}}
\newcommand{\y}{\vct{y}}
\newcommand{\W}{\mtx{W}}
\newcommand{\Wstb}{\Bar{\mtx{W}}^*}
\newcommand{\X}{{\mtx{X}}}
\newcommand{\Y}{{\mtx{Y}}}
\newcommand{\Q}{{\mtx{Q}}}
\newcommand{\betapt}{{\vct{\beta}_{\text{PT}}}}
\newcommand{\betatt}{{\vct{\beta}_{\text{TT}}}}
\newcommand{\betattb}{{\bar{\vct{\beta}}_{\text{TT}}}}
\newcommand{\betattt}{{\tilde{\vct{\beta}}_{\text{TT}}}}
\newcommand{\Xprompt}{{\mtx{X}_{\text{context}}}}
\newcommand{\Xpromptb}{{\bar{\mtx{X}}_{\text{context}}}}
\newcommand{\Yprompt}{{\vct{y}_{\text{context}}}}
\newcommand{\uprompt}{{\vct{u}_{\text{context}}}}
\newcommand{\Xtrain}{{\mtx{X}_{\text{train}}}}
\newcommand{\Xtrainb}{{\bar{\mtx{X}}_{\text{train}}}}
\newcommand{\Ztrain}{{\mtx{Z}_{\text{train}}}}
\newcommand{\Ytrain}{{\vct{y}_{\text{train}}}}
\newcommand{\Wnew}{{\mtx{W}_{\text{TT}}}}
\newcommand{\Wnewb}{{\bar{\mtx{W}}_{\text{TT}}}}
\newcommand{\Wopt}{{\mtx{W}^{\text{opt}}}}
\newcommand{\xitrain}{{\vct{\xi}_{\text{train}}}}
\newcommand{\xiprompt}{{\vct{\xi}_{\text{context}}}}
\newcommand{\qb}{{\vct{q}}}
\newcommand{\eb}{\vct{e}}
\newcommand{\R}{\mathbb{R}}
\newcommand{\E}{\operatorname{\mathbb{E}}}
\newcommand{\Wst}{\mtx{W}^*}
\newcommand{\distas}{\overset{\text{i.i.d.}}{\sim}}
\newcommand{\tn}[1]{\|{#1}\|}
\newcommand{\SMi}[2]{\text{SM}({#1}, {#2})}
\newcommand{\SM}{\text{SM}}
\newcommand{\Wb}{\mtx{\bar{W}}}
\newcommand{\Xb}{\mtx{\bar{X}}}
\newcommand{\xb}{\vct{\bar{x}}}
\newcommand{\ybb}{\bar{y}}
\theoremstyle{plain}
\newtheorem{theorem}{Theorem}[section]
\newtheorem{lemma}[theorem]{Lemma}
\theoremstyle{definition}
\newtheorem{assumption}[theorem]{Assumption}
\theoremstyle{remark}
\newtheorem{remark}[theorem]{Remark}
\title{{\bfseries\LARGE Test-Time Training Provably Improves \\ Transformers as In-Context Learners}}
\date{}
\author{ Halil Alperen Gozeten$^*$\,$^1$ \quad M. Emrullah Ildiz$^*$\,$^1$ \quad Xuechen Zhang$^1$ \\\vspace{0.15cm}{Mahdi Soltanolkotabi\,$^2$ \quad Marco Mondelli\,$^3$  \quad Samet Oymak\,$^1$}}
\affil{$^1$ \affilicon{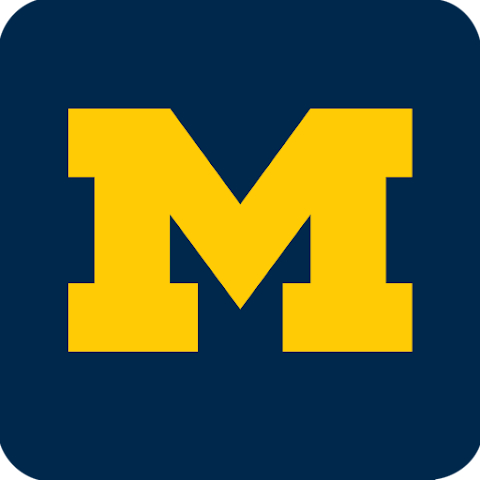}\ University of Michigan, Ann Arbor\\
{\small \texttt{\{alperen,eildiz,zxuechen,oymak\}@umich.edu}}

\vspace{0.2cm}
{$^2$ \affilicon{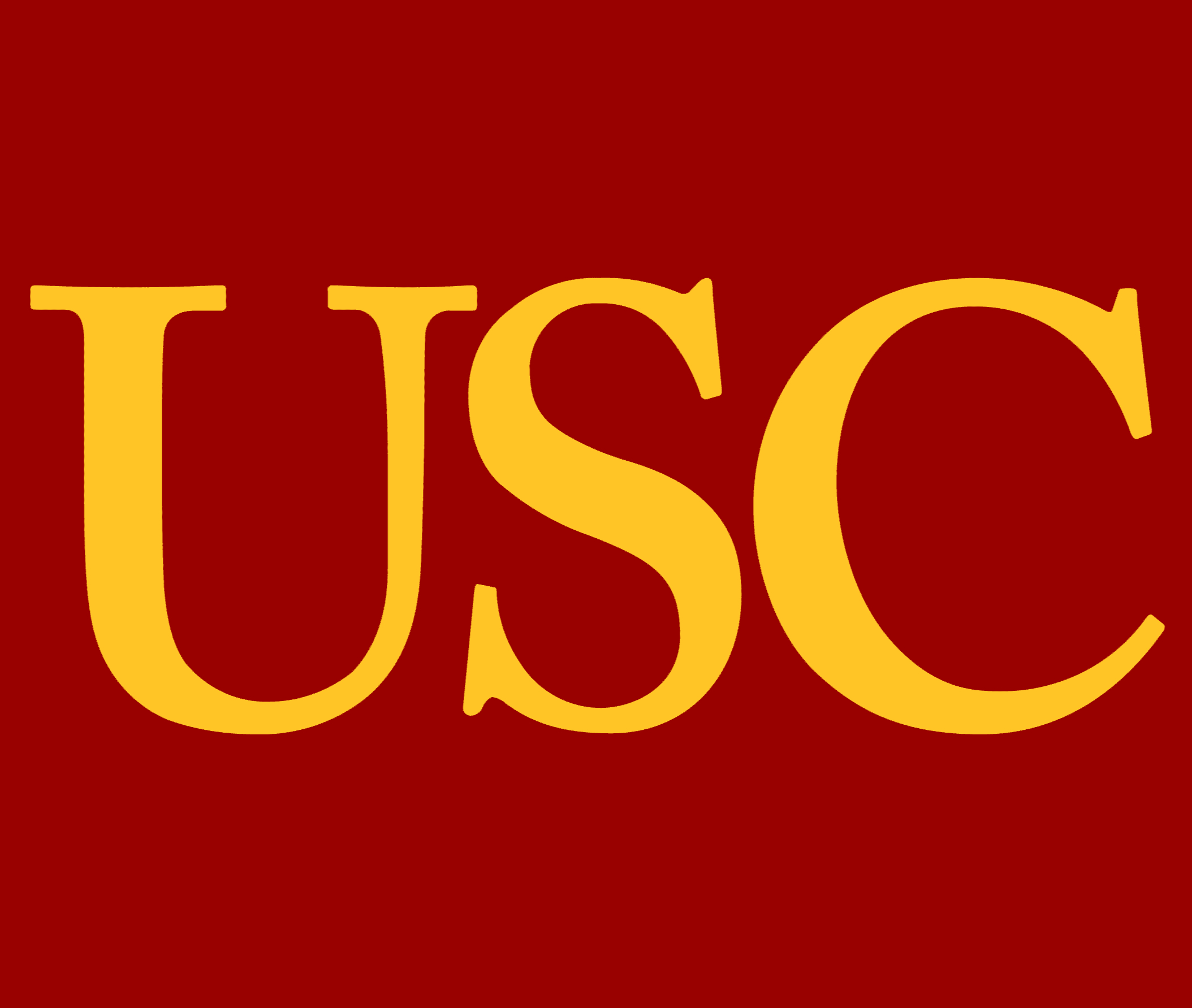} University of Southern California} 
\\
{\small \texttt{soltanol@usc.edu} 
\vspace{-1mm}}

\vspace{0.2cm}
{$^3$ \affilicon{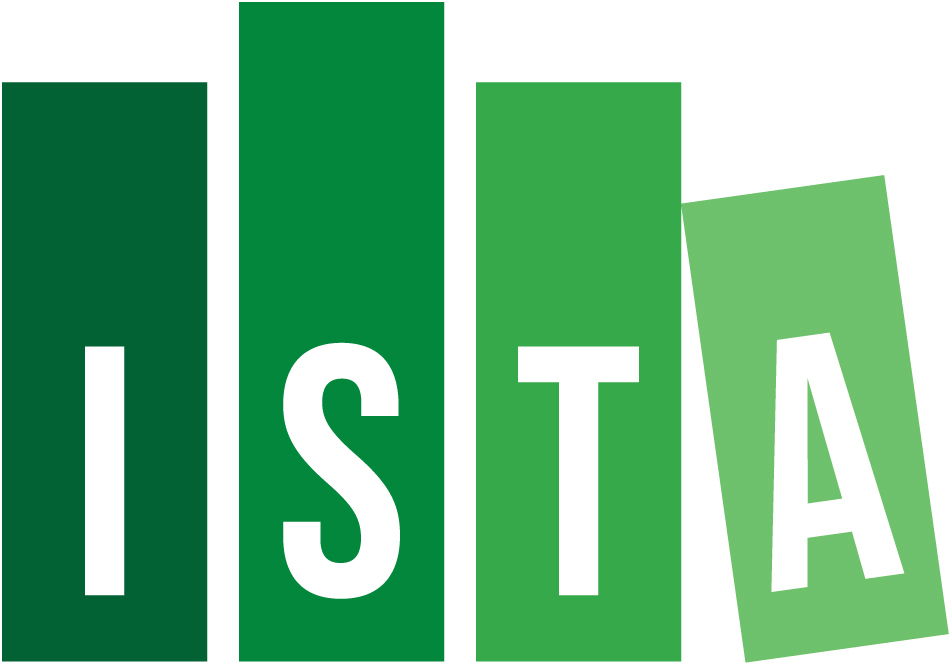}\ Institute of Science and Technology Austria} 
\\
{\small \texttt{marco.mondelli@ist.ac.at} 
\vspace{-1mm}}
} 
\begin{document}

\maketitle
\def\thefootnote{*}\footnotetext{Equal Contribution}
\setcounter{footnote}{0} 
\renewcommand{\thefootnote}{\arabic{footnote}}
\vspace{-1cm}
\begin{abstract}
Test-time training (TTT) methods explicitly update the weights of a model to adapt to the specific test instance, and they have found success in a variety of settings, including most recently language modeling and reasoning. To demystify this success, we investigate a gradient-based TTT algorithm for in-context learning, where we train a transformer model on the in-context demonstrations provided in the test prompt. Specifically, we provide a comprehensive theoretical characterization of linear transformers when the update rule is a single gradient step. Our theory \emph{(i)} delineates the role of alignment between pretraining distribution and target task, \emph{(ii)} demystifies how TTT can alleviate distribution shift, and \emph{(iii)} quantifies the sample complexity of TTT including how it can significantly reduce the eventual sample size required for in-context learning. As our empirical contribution, we study the benefits of TTT for 
TabPFN, a tabular foundation model. In line with our theory, we demonstrate that TTT significantly reduces the required sample size for tabular classification (3 to 5 times fewer) unlocking substantial inference efficiency with a negligible training cost.
\end{abstract}

\section{Introduction}

Modern language models can be viewed as general-purpose, addressing a diverse range of user queries in a zero-shot fashion \cite{kojima2022large,weifinetuned}. However, as the complexity or novelty of the query increases, such as in complex multi-step reasoning scenarios, the pretrained model may falter. This has motivated two popular approaches: 
in-context learning and test-time computation. In-context learning (ICL) incorporates demonstrations related to the query as part of the prompt facilitating better inference by the model \cite{brown2020language,min2022rethinking}. Test-time computation methods explicitly increase the inference-time compute to elicit higher quality responses \cite{snell2024scaling,jaech2024openai}. An important instance of test-time computation is test-time training (TTT) where the weights of a model are explicitly updated to adapt to specific test instances \cite{sun2020test,liu2021ttt}. A gradient-based TTT approach can be described as follows: Given a test prompt and a pretrained sequence model, 
we update the pretrained weights by performing a few gradient iterations on a suitable self-supervised objective (e.g.~next-token prediction objective).
We can then use the resulting model 
to perform the inference on the test prompt. This procedure is applicable beyond language models, and it is conceptually similar to meta-learning approaches such as model-agnostic meta-learning \cite{finn2017model}.

Recently, TTT has found significant 
success in the context of language modeling by 
boosting the accuracy and reasoning capability of state-of-the-art models \cite{akyurek2024surprising,sun2024learning}. This success is in part due to the fact that TTT can be naturally integrated within in-context learning: We can fine-tune the transformer model to fit to the labels or chain-of-thought rationales provided as part of the in-context demonstrations, and then re-apply ICL on the adapted model. In fact, recent work \cite{akyurek2024surprising} utilizes a variation of this procedure and additional data augmentation to obtain remarkable improvements in the ARC reasoning benchmark. This motivates the central question of our work:
\begin{quote}
\emph{What are the provable benefits of test-time training of transformers, specifically, for enhancing in-context learning?}
\end{quote}
\noindent \textbf{Contributions.} As our central contribution, we address this question by providing a comprehensive theoretical study of test-time training for one-layer linear transformers. Focusing on prompts following a linear dataset model, we provide a \emph{precise risk characterization of TTT with one gradient step update rule}. This characterization is established in terms of three ingredients: (i) \emph{context length (number of in-context examples during inference)}, (ii) \emph{target sample size available for TTT} in Section \ref{sect:main results}, and  (iii) \emph{alignment between the pre-trained model and target task} in Section \ref{sect:generalcovariance}. We show that, as sample size increases, TTT can alleviate the distribution shift bottlenecks that arise in standard ICL. This also reveals regimes where TTT with zero or small initialization is preferable to TTT with the pre-trained model (i.e.~cold vs. warm start). Interestingly, our experiments with the GPT2 architecture \cite{radford2019language} show that multilayer transformers exhibit behavior in line with our theory. Our theory and experiments demonstrate that one step of TTT yields significant performance gains with less computation, consistent with recent empirical observations \citep{akyurek2024surprising} that only a few gradient steps offer substantial test-time improvements. Additionally, while standard ICL requires $\Omega(d)$ context length under an isotropic task prior, with $d$ denoting feature dimension, we prove that TTT can succeed with $o(d)$ context length by effectively memorizing the target task. Our technical novelty stems from accurately capturing the statistical benefit of TTT during in-context learning. Specifically, while the transformer is trained on a single prompt, we characterize how the sample complexity benefit of TTT is proportional to the number of target examples within the prompt.




As empirical corroboration of our theory, we explore tabular learning and TabPFN \cite{tabpfnv1, tabpfnv2} -- a state-of-the-art tabular foundation model pretrained with structural causal model priors. TabPFN is well-aligned with our theoretical setting with similar token encodings but different prior distributions. An important drawback of TabPFN lies in its inference cost, as it uses a full tabular dataset as context during inference. In line with our theory, we demonstrate that TTT can convert TabPFN into a task-specific tabular model that works equally well with significantly less data (up to 5 times fewer). This in turn implies substantial inference gains given that the complexity of softmax-attention is quadratic in the sequence length. 



\section{Related Work}\label{sec related}

We organize our discussion of related work into two main areas: in-context learning and test-time training.

\noindent\textbf{In-context learning.} In-context learning has received significant interest during the past few years \citep{brown2020language,liu2023pre,agarwal2024many}. This has also motivated works toward a stronger theoretical understanding of ICL \citep{zhang2024trained, mahankali2024one, ahn2024transformers,xie2022an,garg2022can, pmlr-v202-li23l}. Closer to us, \citet{mahankali2024one, ahn2024transformers,zhang2024trained} study the optimization landscape of a one-layer linear attention model and show that the optimized model implements a single step of projected gradient descent over the in-context demonstrations. More recent works \cite{lu2024context,wu2023many} extend these to characterize the pretraining task sample complexity of ICL. While these works focus on pretraining capabilities, we focus on the adaptation of the pretrained model to the target task. Additionally, rather than empirical risk minimization, we use gradient descent for adaptation and characterize its risk when the sample size is determined by the target prompt length. In our theoretical model, each token represents a data point containing input features and the target label. Remarkably, TabPFN \cite{tabpfnv1,tabpfnv2} shows that, using this simple encoding and pretraining the model with sufficiently rich data priors, transformers can accomplish state-of-the-art classification on tabular datasets via in-context learning. There have been also efforts \citep{thomas2024retrievalfinetuningincontext} to fine-tune in-context tabular models like TabPFN at the dataset level using retrieval-based local context selection. 

\noindent\textbf{Test-time training.} Test-time training \cite{sun2020test,liu2021ttt,gandelsman2022test} and related test-time adaptation \cite{wangtent,niu2022efficient,yuan2023robust} methods aim to overcome distribution shift bottlenecks. This is typically accomplished by adapting the model with the test example using self-supervised or unsupervised objectives. These methods can work with just a single text example or admit a minibatch of examples. For sequence/language modeling tasks, one can utilize TTT on the test sequence via the next-token prediction objective \cite{sun2024learning,hardt2024testtimetrainingnearestneighbors,hübotter2025efficientlylearningtesttimeactive}. Specifically, during in-context learning, the query is unlabeled (e.g.~math problem to solve), but we are provided with related examples and associated labels (e.g.~through retrieval). Thus, we can fit to these labels as the source of supervision (essentially fine-tuning the model on the dataset of in-context examples). For instance, the training objective in \citet{akyurek2024surprising} utilizes this approach boosted by additional data augmentation. Finally, the computational efficiency of TTT is an important consideration which motivates our investigation of TTT with a single gradient update.


\section{Problem Setup}\label{sect problem setup}

\textbf{Notation.} Let $[n]$ denote the set $\{1,\cdots, n\}$ for an integer $n \geq 1$. We denote vectors and matrices using bold lower-case and upper-case letters, respectively, such as $\x$ for vectors and $\X$ for matrices. The element \(x_i\) refers to the \(i\)-th entry of the vector \(\x\). We represent the zero vector of size $n$ by $0_n$ and the zero matrix of size $m \times n$ by  $\boldsymbol{0}_{m \times n}$. The operator $\tr{\X}$ represents the trace of $\X$, $\X^\dagger$ is its Moore–Penrose pseudoinverse, and $\| \X \|_2$ denotes the spectral norm of $\X$. Given $ \x, \y \in \R^d$, the notation $[\x \; \y] \in \R^{d \times 2}$ represents the row-wise concatenation, while $[\x; \y] \in \R^{2d}$ represents the column-wise concatenation.

\textbf{In-context learning and test-time training.} In-context learning is the ability of the model to learn from demonstrations in the prompt, i.e.~\emph{in-context}. Specifically, given a sequence of demonstrations of desired input/output pairs $(\x_1,y_1),(\x_2,y_2),\ldots,(\x_n,y_n)\in\R^d\times \R$ followed by a query input $\x$ in the prompt, the model can guess the corresponding output query $y$. Concretely, we can define the context tokens $\z_i=[\x_i; y_i] \in \R^{d+1}$ for $i \in [n]$ and the query token $\z=[\x; 0] \in \R^{d+1}$. Then the input prompt can be written in the form
\[ 
\Z = [\z_1~\cdots~\z_n~\z]^\top=\begin{bmatrix}
\x_1~\x_2~\cdots~\x_n~\x \\    
y_1~y_2~\cdots~y_n~0 
\end{bmatrix}^\top\in\R^{(n+1)\times (d+1)}.
\] 
To estimate the output query $y$, we focus on a sequence model $\SM(\Z, \W)$ with $\Z$ the input prompt and $\W$ the model parameters. We assume that the sequence model is \textit{pre-trained} with a token distribution $\Dczpt$ where $(\z_i)_{i=1}^n, [\x; y] \distas \Dczpt$ with the corresponding optimal parameters given by
\begin{align}
    \Wst = \arg \min_{\W} \E_{(\z_i)_{i=1}^n, [\x; y] \sim \Dczpt} \left[(y - \SMi{\Z}{\W})^2\right]. \label{eq:pop-loss-pre-train}
\end{align}
During inference, 
we test the sequence model on another distribution $\Dcztt$ and observe $k$ samples of this test distribution $\Sctt = \{ (\Z_j, y_j) \}_{j=1}^k$ where $y_j$ is the label of the query token inside $\Z_j$. The main idea behind $\textit{Test-Time Training}~$(TTT) is to refine the model's parameters using the test data $\Sctt$ before performing inference. Concretely, the empirical loss of the sequence model on the test set $\Sctt$ with an arbitrary model parameter $\W$ is given by
\begin{align}
\Lchtt(\W) = \sum_{j = 1}^{k} (y_j - \SMi{\Z_{j}}{\W})^2. \label{eq:train-loss}
\end{align}
One can thus refine $\Wst$ by optimizing the above test-time empirical loss. In this paper, we focus on a TTT strategy involving a single gradient descent step over $\Lchtt$, i.e.~$\Wnew := \Wst - \eta \nabla \Lchtt(\Wst)$. The error incurred by the model can then be calculated using the population loss via
\begin{align}
    \Lc(\W) = \E_{(\z_i)_{i=1}^n, [\x; y] \sim \Dcztt}\left[(y - \SM(\Z, \W))^2\right]. \label{eq:pop-loss-new-task}
\end{align}
Now, we will define the expected loss of the weights $\Wnew$ we achieve after test-time training over all test-time training sets $\Sctt$:
\begin{align}\label{LTT}
\Lctt(\Wnew) := \E_{\Sctt} \left[ \Lc(\Wnew) \right].
\end{align}
Our goal 
is to characterize the loss $\Lctt(\Wnew)$ obtained after test-time training as a function of the number of context samples $n$, the number of data points in test-time training $k$, the distributions $(\Dczpt, \Dcztt)$, and the pretrained starting point $\Wst$. 

\textbf{Architecture.}
We study the one-layer linear attention model as the sequence model:
\begin{align}\label{eq:linear attention}
    \SMi{\Z}{\W}
 = \left[ \Z \W_Q \W_K^\top \Z^\top \Z \W_V \right]_{n + 1, d+1} = \x^\top \W \X^\top \y   ,
\end{align}
where $[\cdot]_{n + 1, d+1}$ denotes the entry $(n+1, d+1)$ of the corresponding matrix. Here, the query, key, and value matrices $\W_Q, \W_K, \W_V \in \R^{(d+1) \times (d+1)}$ are defined as
\begin{align*}
\W_Q \W_K^\top & = \begin{bmatrix}
\W & \boldsymbol{0}_{d \times 1} \\ \boldsymbol{0}_{1 \times d} & 0
\end{bmatrix} & \W_V = \begin{bmatrix}
\boldsymbol{0}_{d \times d} & \boldsymbol{0}_{d \times 1} \\ \boldsymbol{0}_{1 \times d} & 1
\end{bmatrix},
\end{align*}
and we set the data matrix $\X \in \R^{n \times d}$ and the corresponding labels $\y \in \R^{n}$ to be:
\begin{align*}
&\X = \begin{bmatrix} \x_{1} \ \hdots \ \x_{n}  \end{bmatrix}^\top \quad &&\y = \begin{bmatrix}
y_{1} ~ \dots ~ y_{n}
\end{bmatrix}^\top.
\end{align*}
Here, we collapse the query and key matrices by identifying the top-left $d\times d$ block of $\W_Q \W_K^\top \in \R^{(d+1)\times(d+1)}$ with a single matrix $\W \in \R^{d\times d}$, and choose the value matrix to retrieve the output of the query token with one-layer linear attention. We note that similar models have been used in prior work \cite{zhang2023trainedtransformerslearnlinear, mahankali2023stepgradientdescentprovably, ahn2023transformerslearnimplementpreconditioned, li2024finegrainedanalysisincontextlinear,li2025gating} for the theoretical analysis of a variety of phenomena, albeit not for characterizing the benefits of TTT.

\textbf{Data model.} We consider a linear model with Gaussian data. Specifically, during pre-training, we assume the context tokens $\z_i=[\x_i; y_i]$ and the query input/output vector $[\x; y]$ to be sampled i.i.d.\ from the distribution $\Dczpt(\bSibt)$. Concretely, the outputs are generated according to the 
linear model $y_{i} = \x_{i}^\top \betapt + \xi_{i}$, with task parameter $\betapt\sim\Nn(\boldsymbol{0}, \bSibt)$, input features $\x_{i}\sim\Nn(\boldsymbol{0}, \bSix)$, and noise terms $\xi_{i}\sim\Nn(0, \sigma^2)$ for $i \in [n]$.

During inference, we test the sequence model on a new task parameter $\betatt$ with the input prompts generated following the test-time distribution $\Dcztt(\betatt)$. This test-time distribution is governed by a similar linear setting. Concretely, the outputs are generated according to the 
linear model 
$y_i = \x_i^\top \betatt + \xi_i$, where the input features $\x_i$ are sampled i.i.d.\ from the same feature distribution $\Nn(\boldsymbol{0}, \bSix)$ and the noise terms are sampled i.i.d.\ from the same distribution $\Nn(0, \sigma^2)$. 

We construct the test-time set $\Sctt = \{ (\Z_j, y_j) \}_{j=1}^k$ by following the test-time distribution $\Dcztt(\betatt)$. We first sample $(n+k)$ query input/output pairs $\{(\xb_i, \ybb_i)\}_{i=1}^{(n+k)} \distas \Dcztt(\betatt)$. Then, we designate the first $n$ samples as fixed context tokens across the set, whereas we use the latter $k$ samples as queries in the set. In formulas, 
\begin{align}
    \Z_j = \begin{bmatrix}
\xb_1~\xb_2~\cdots~\xb_n~\xb_{j+n} \\    
\ybb_1~\ybb_2~\cdots~\ybb_n~0_{\textcolor{white}{j
+n}} 
\end{bmatrix}^\top, \quad y_j = \ybb_{j+n} \quad \forall j \in [k].
\end{align}

\noindent\textbf{Remark:} Our procedure can be implemented using a single forward-backward pass by putting all examples within the prompt and using a suitable attention mask to ensure that only the first $n$ examples (context examples with labels) attend the last $k$ examples (query examples without labels) and vice versa. This allows for efficient parallel training. This also means that we use a fixed context-query split where the same $n$ examples are used as context during TTT. This can be generalized to a K-fold procedure where all examples are used as both context and query during TTT (as in \citet{akyurek2024surprising}); however, we opted for the 1-fold option, which is more amenable to a precise statistical analysis.

\noindent\textbf{Single-step GD for TTT. } We now turn our attention to deriving the single-step gradient update over the test set $\Sctt$. To this aim, we denote by $\Xprompt$ and $\Yprompt$ the context inputs and their outputs, whereas we denote by $\Xtrain$ and $\Ytrain$ the training query inputs and their outputs: 
\begin{align*}
&\Xprompt = \begin{bmatrix} \xb_{1} \ \hdots \ \xb_{n}  \end{bmatrix}^\top, \quad &&\Yprompt= \begin{bmatrix}
\ybb_{1} ~ \dots ~ \ybb_{n}
\end{bmatrix}^\top, \\
&\Xtrain = \begin{bmatrix} \xb_{n + 1} \ \hdots \ \xb_{n + k}  \end{bmatrix}^\top, \quad &&\Ytrain = \begin{bmatrix}
\ybb_{n+1} ~ \dots ~ \ybb_{n+k}
\end{bmatrix}^\top.
\end{align*}
The following proposition (proved in Appendix \ref{app sect two}) characterizes the single-step GD TTT update.
\begin{restatable}{proposition}{onestepgradientupdate}\label{prop:one-step-update}
Consider the linear attention model with parameters $\W\in\R^{d\times d}$. Suppose the test-time training loss function is defined as in \eqref{eq:train-loss} and define $\uprompt := \Xprompt^\top \Yprompt \, \in \R^{d}$. Then, for any step size $\eta>0$, the new parameter $\Wnew$ after one gradient-descent step from $\W$ is given by the rank-$1$ update
\begin{align*}
\Wnew = \W + 2\,\eta\, \Xtrain^\top \left( \Ytrain - \Xtrain\,\W \,\uprompt \right)
\;\uprompt^\top.
\end{align*}
\end{restatable}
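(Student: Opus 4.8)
The plan is to compute the gradient of $\Lchtt(\W) = \sum_{j=1}^k (y_j - \SMi{\Z_j}{\W})^2$ directly, relying on the closed form of the linear attention model from \eqref{eq:linear attention}. The first step is to specialize that formula to the test-time prompts $\Z_j$. Since each $\Z_j$ shares the same context block $(\Xprompt, \Yprompt)$ and only the query token differs, I would write $\SMi{\Z_j}{\W} = \xb_{j+n}^\top \W \Xprompt^\top \Yprompt = \xb_{j+n}^\top \W \uprompt$, using the definition $\uprompt := \Xprompt^\top \Yprompt$. Collecting over $j \in [k]$ and stacking the query inputs into $\Xtrain$, the vector of predictions becomes $\Xtrain \W \uprompt \in \R^k$, so the residual vector is $\Ytrain - \Xtrain \W \uprompt$ and $\Lchtt(\W) = \tn{\Ytrain - \Xtrain \W \uprompt}^2$.

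Next I would differentiate this quadratic in $\W$. Writing $\phi(\W) = \Xtrain \W \uprompt$, this is linear in $\W$ with $D\phi(\W)[\Delta] = \Xtrain \Delta \uprompt$, and the chain rule gives $\nabla \Lchtt(\W) = -2\,\Xtrain^\top(\Ytrain - \Xtrain \W \uprompt)\,\uprompt^\top$ — the outer product structure comes precisely from the fact that $\W$ is sandwiched between a left factor $\Xtrain$ and a right factor $\uprompt$. Concretely, for a scalar function of the form $\|\bb - \A \W \cb\|^2$ one has gradient $-2\A^\top(\bb - \A\W\cb)\cb^\top$; I would verify this by the identity $\langle \nabla \Lchtt(\W), \Delta\rangle = -2\,(\Ytrain - \Xtrain\W\uprompt)^\top \Xtrain \Delta \uprompt = -2\,\tr{\uprompt (\Ytrain - \Xtrain\W\uprompt)^\top \Xtrain \Delta}$, which exhibits $\nabla\Lchtt(\W) = -2\,\Xtrain^\top(\Ytrain-\Xtrain\W\uprompt)\uprompt^\top$ after transposing. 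Then one gradient step $\Wnew = \W - \eta \nabla\Lchtt(\W)$ yields exactly $\Wnew = \W + 2\eta\,\Xtrain^\top(\Ytrain - \Xtrain\W\uprompt)\uprompt^\top$, which is rank-one since $\uprompt^\top$ is a single row.

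There is no serious obstacle here; the only thing requiring care is the bookkeeping in the first step — making sure that extracting entry $(n+1, d+1)$ of the attention output, with the specific block structure of $\W_Q\W_K^\top$ and $\W_V$, genuinely collapses to $\xb_{j+n}^\top \W \Xprompt^\top \Yprompt$ and that the zero in the label slot of the query token correctly kills all contributions except the desired one. I would double-check this by expanding $\Z_j \W_V = [\,\boldsymbol{0}_{(n+1)\times d}\ \ [\Yprompt; 0]\,]$ and $\Z_j \W_Q\W_K^\top \Z_j^\top$, so that the last row of the product picks out $\xb_{j+n}^\top \W$ acting on the context rows, paired against the labels $\Yprompt$. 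Once this reduction is in place, the differentiation is a one-line computation, so I would present the reduction carefully and then state the gradient, perhaps with the trace-identity verification relegated to a footnote or a single displayed line.
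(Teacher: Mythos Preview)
Your proposal is correct and follows essentially the same approach as the paper: reduce $\SMi{\Z_j}{\W}$ to $\xb_{j+n}^\top \W \uprompt$ so that $\Lchtt(\W)=\|\Ytrain-\Xtrain\W\uprompt\|^2$, then differentiate to obtain $\nabla\Lchtt(\W)=-2\,\Xtrain^\top(\Ytrain-\Xtrain\W\uprompt)\,\uprompt^\top$ and apply one gradient step. Your added verification via the trace identity and the block-structure check are extra care, but the logical route is identical.
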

In the coming sections, we will start our analysis when the covariance matrices $(\bSibt, \bSix)$ are isotropic (Section \ref{sect:main results}), and then we provide an extension to more general covariance matrices (Section \ref{sect:generalcovariance}). Finally, we corroborate our theoretical results with empirical evidence (Section \ref{sect:Experiments}). 
\section{Analysis for Isotropic Covariances}\label{sect:main results}

\begin{figure*}[t!]
     \centering
     \begin{subfigure}[b]{0.49\textwidth}
         \centering
         \includegraphics[width=\textwidth]{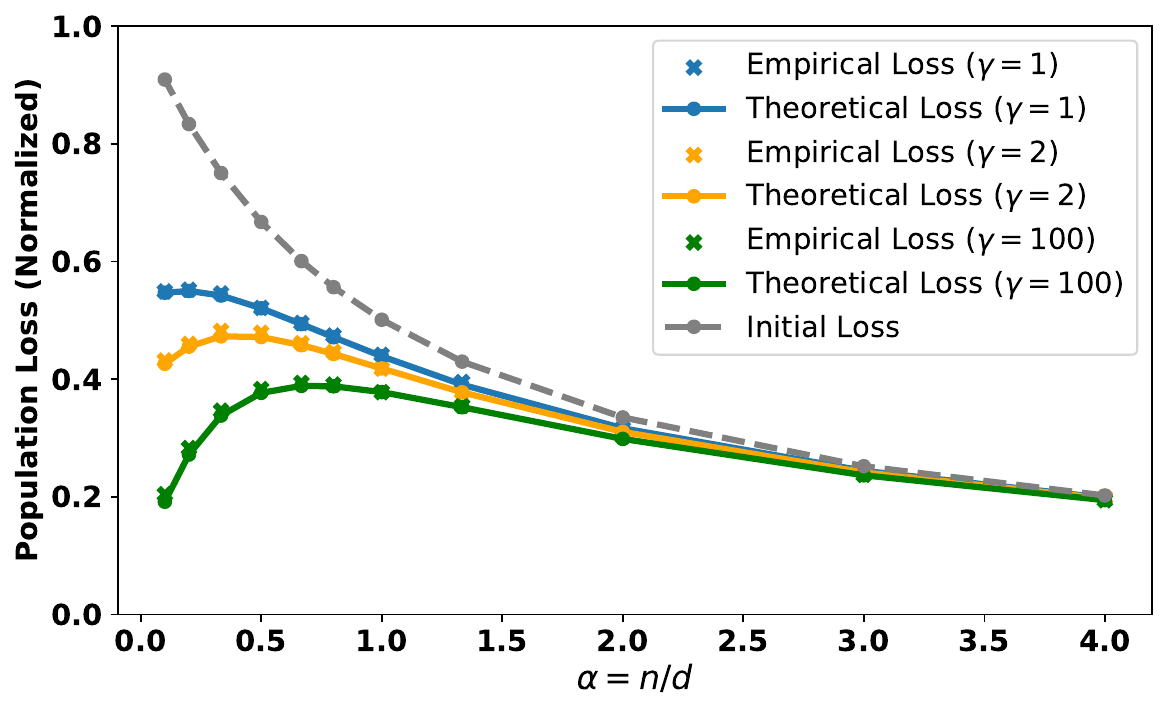}
         \caption{Non-monotonic behavior of the loss $\Lc(\Wnew)$ after the test-time-training update with respect to $\alpha=n/d$ for various $\gamma = k/d$ values.}
         \label{fig:isotropic non-monotonic}
     \end{subfigure}
     \hfill
     \begin{subfigure}[b]{0.49\textwidth}
         \centering
         \includegraphics[width=\textwidth]{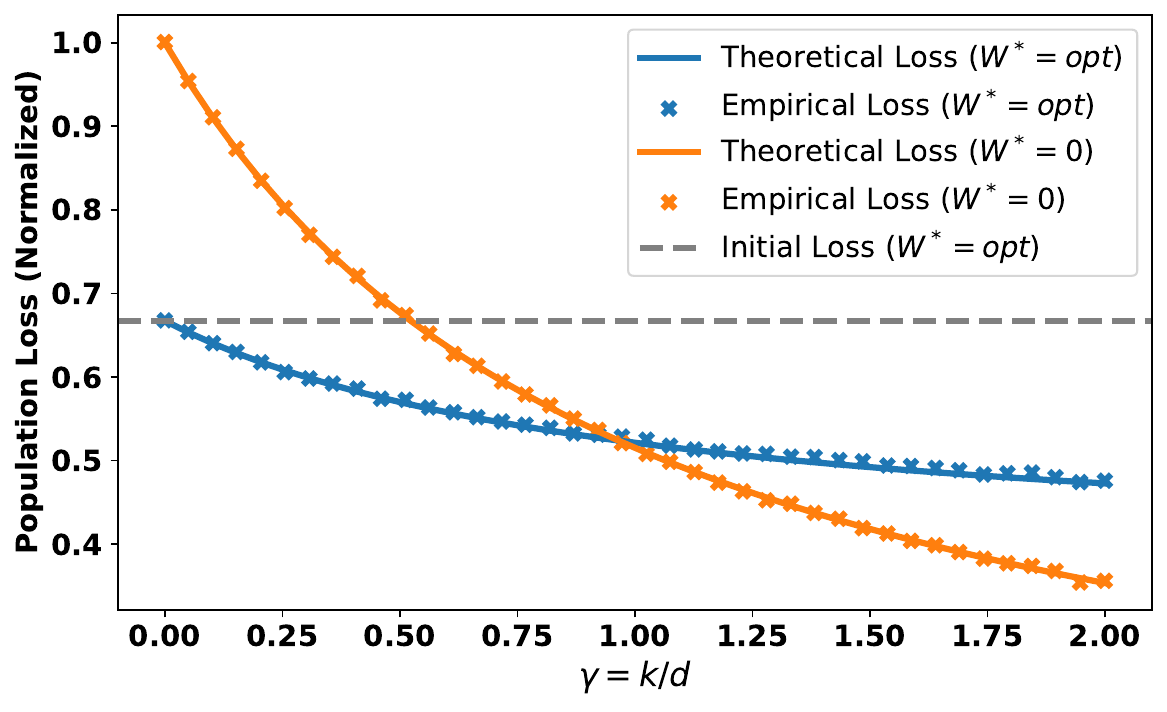}
         \caption{Losses $\Lc(\Wnew)$ after the test-time-training update as a function of $\gamma = k/d$ when using optimal pre-trained weights vs. $\boldsymbol{0}_{d \times d}$ (null).}
         \label{fig:isotropic pretrain scratch}
     \end{subfigure}
     \caption{Plot of the normalized population losses after test-time training when $\bSix = \bSibt=\Iden$, $\sigma^2=0$, and the new task is $\betatt=\boldsymbol{1}_{d}$. Solid lines denote theoretical predictions which match the empirical (markers) results. \textbf{(a):} The figure illustrates a non-monotonic trend as the ratio $\tfrac{n}{d}$ shifts, see Corollary~\ref{corol:iso-opt}. \textbf{Setting:} $n=300$; the ratio $\frac{n}{d}$ changes between $0.1$ and $4$; the three lines depict $\gamma = 1$, $\gamma = 2$, and $\gamma = 100$ where $\gamma = k/d$; the step size $\eta$ is selected optimally as per Theorem \ref{theo:iso-opt}. \textbf{(b):} The figure reveals a threshold in $k$ at which the preferable initialization switches from the optimal pre-trained $\Wst$ to the null model $\boldsymbol{0}_{d \times d}$ as $k$ increases. The transition threshold $k$ aligns well with Corollary \ref{corol:phase-transition-isotropic}. \textbf{Setting:} $n=200$; $d=400$; $k$ changing between $0$ and $4n$ with equal increments. }
        \label{fig:isotropic}
    \vspace{-0.3cm}
\end{figure*}

In this section, we study the loss $\Lctt(\Wnew)$ induced by a single-step gradient update in the isotropic scenario \((\bSibt,\bSix)\!=\!(\Iden,\Iden)\), for an arbitrary test-time parameter \(\betatt\).  The assumption of isotropic covariance matrices enables us to explicitly characterize the behavior of the loss as a function of $n,d,$ and $k$. We first analyze how \(\Lctt(\Wnew)\) varies with the number of in-context examples $(n)$ and the embedding dimension $(d)$. We then compare two distinct choices of $\Wst$, as a function of the test-time training set size \(k\): \emph{(i)} $\Wst$ is obtained from \eqref{eq:pop-loss-pre-train}, and \emph{(ii)} $\Wst = \boldsymbol{0}_{d\times d}$. The former represents a test-time training approach, whereas the latter corresponds to training from scratch with the single-step gradient update.

We start with the characterization of $\Wst$ and its loss $\Lc(\Wst)$.
\begin{restatable}{proposition}{Wstcharacterization}\label{prop:Wstcharacterization}
Let $\bSibt, \bSix = \Iden, \sigma^2 = 0$ and let $\betatt$ be an arbitrary new task parameter. Then, the optimal pre-trained model's  parameter $\Wst$ and its population loss 
are given by
\begin{align*}
    \Wst = \frac{1}{n+d+1} \Iden, \qquad \Lc(\Wst) = \|\betatt\|^2 \,\frac{d + 1}{\,n + d + 1\,}.
\end{align*}
\end{restatable}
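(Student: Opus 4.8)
The plan is to first collapse the pretraining population risk \eqref{eq:pop-loss-pre-train} into a deterministic least-squares problem in $\W$, solve it in closed form using the first two moments of a white Wishart matrix, and then substitute the resulting $\Wst$ into the test-time population risk \eqref{eq:pop-loss-new-task}.

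\textbf{Step 1 (reducing the pretraining objective).} Since $\sigma^2=0$ we have $y=\x^\top\betapt$ and $\y=\X\betapt$, so $\X^\top\y=\X^\top\X\,\betapt$ and the pretraining residual is $y-\SMi{\Z}{\W}=\x^\top(\Iden-\W\X^\top\X)\betapt$. Write $\M:=\Iden-\W\X^\top\X$, which depends only on $\X$ (and $\W$). Because $\x\sim\Nn(\boldsymbol{0},\Iden)$ and $\betapt\sim\Nn(\boldsymbol{0},\Iden)$ are independent of each other and of $\X$, integrating out $\x$ and $\betapt$ first gives $\E\big[(\x^\top\M\betapt)^2\,\big|\,\X\big]=\tr{\M^\top\M}=\|\M\|_F^2$. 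Hence \[ \Wst=\arg\min_{\W}\;\E_{\X}\!\left[\,\big\|\Iden-\W\X^\top\X\big\|_F^2\,\right]. \]

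\textbf{Step 2 (solving for $\Wst$).} Let $\A:=\X^\top\X$ and expand $\E\|\Iden-\W\A\|_F^2 = d-2\,\tr{\W\,\E[\A]}+\tr{\W^\top\W\,\E[\A^2]}$. This is a strictly convex quadratic in $\W$ (its Hessian is governed by $\E[\A^2]\succ 0$), so the unique minimizer satisfies $\W\,\E[\A^2]=\E[\A]$, i.e.\ $\Wst=\E[\A]\,(\E[\A^2])^{-1}$. The first moment is immediate, $\E[\A]=\sum_{i=1}^n\E[\x_i\x_i^\top]=n\Iden$. For the second moment I would compute $\E[\A^2]$ entrywise via the Gaussian fourth-moment (Wick/Isserlis) formula, splitting the double sum into diagonal and off-diagonal entries: the surviving index contractions give $n^2+2n$ on each diagonal entry and $0$ off the diagonal, so $\E[\A^2]=n(n+d+1)\Iden$ (equivalently, one can cite the standard second-moment identity for a white Wishart matrix). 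Plugging in, $\Wst=n\Iden\cdot\frac{1}{n(n+d+1)}\Iden=\frac{1}{n+d+1}\Iden$.

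\textbf{Step 3 (evaluating the test-time loss).} On $\Dcztt(\betatt)$ with $\sigma^2=0$, the same algebra yields $y-\SMi{\Z}{\Wst}=\x^\top\big(\Iden-\frac{1}{n+d+1}\X^\top\X\big)\betatt$, with $\betatt$ now a fixed vector. Integrating out the query $\x\sim\Nn(\boldsymbol{0},\Iden)$ conditionally on $\X$ gives $\betatt^\top\,\E_{\X}[\M^\top\M]\,\betatt$ where $\M=\Iden-\frac{1}{n+d+1}\A$; expanding and using $\E[\A]=n\Iden$ and $\E[\A^2]=n(n+d+1)\Iden$ gives $\E_{\X}[\M^\top\M]=\big(1-\frac{n}{n+d+1}\big)\Iden=\frac{d+1}{n+d+1}\Iden$. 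Therefore $\Lc(\Wst)=\betatt^\top\,\tfrac{d+1}{n+d+1}\Iden\,\betatt=\|\betatt\|^2\,\tfrac{d+1}{n+d+1}$, as claimed.

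\textbf{Main obstacle.} The only computation requiring care is the Wishart second moment $\E[(\X^\top\X)^2]=n(n+d+1)\Iden$; everything else reduces to one-line Gaussian integrations. The delicate bookkeeping is identifying which index contractions in the fourth-order Gaussian expectation survive — separating the $n^2+2n$ contribution from terms with matched outer rows from the $n(d-1)$ contribution obtained by summing over the contracted inner index — and verifying that the off-diagonal entries vanish. If preferred, this step can be shortcut by invoking a standard reference for moments of white Wishart matrices.
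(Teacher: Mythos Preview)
Your argument is correct and more self-contained than the paper's. The paper does not derive $\Wst$ itself; it simply cites \citet[Theorem~1]{li2024finegrainedanalysisincontextlinear} for the form $\Wst=\frac{1}{n+d+1+\sigma^2}\Iden$ and then substitutes into the general population-loss formula of Lemma~\ref{lem:pop-loss-expression} (which is proved separately via Lemma~\ref{lem:XX-M-XX}). You instead integrate out $\x$ and $\betapt$ first to reduce the pretraining objective to $\E_\X\|\Iden-\W\X^\top\X\|_F^2$, solve the resulting matrix least-squares problem using the Wishart second moment, and evaluate the test loss by the same direct moment computation. Both routes ultimately hinge on the identity $\E[(\X^\top\X)^2]=n(n+d+1)\Iden$ (your Step~2, the paper's Lemma~\ref{lem:XX-M-XX} with $\M=\Iden$), but your path avoids both the external citation and the full expansion of Lemma~\ref{lem:pop-loss-expression}. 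The paper's modular approach has the advantage that the same Lemma~\ref{lem:pop-loss-expression} is reused throughout the later analysis; your approach is cleaner for this single proposition.

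One expository slip: in Step~2 you write ``the surviving index contractions give $n^2+2n$ on each diagonal entry'' but then (correctly) conclude $\E[\A^2]=n(n+d+1)\Iden$. The diagonal entry is $n(n+d+1)$, not $n^2+2n$; your ``Main obstacle'' paragraph has the correct decomposition $n^2+2n$ (matched rows) plus $n(d-1)$ (inner index sum), which totals $n(n+d+1)$. Just make the Step~2 sentence consistent with that.
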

The characterization of $\Wst$ is obtained from \citet[Corollary~1]{li2024finegrainedanalysisincontextlinear} and its loss with respect to the new task parameter $\betatt$ is provided in Appendix \ref{app sect isotropic} for the more general setting with noise term $\sigma^2$. Due to the technical difficulty of analyzing the single-step gradient in the noisy setting and to have manageable expressions, we will focus on the noiseless setting.

Given \( k, n, \) and \( d \), the loss of the pre-trained parameters, \( \Lchtt(\Wst) \), is a random variable with respect to the test-time set \( \Sctt \). Consequently, the loss \( \Lc(\Wnew) \) is also a random variable dependent on \( \Sctt \). When applying a single-step update to the pre-trained model's parameters, we define the expectation of the loss \( \Lc(\Wnew) \) with respect to \( \Sctt \) in \eqref{LTT}, treating it as a function of the step size \( \eta \). The optimal step size is then selected to minimize the expected test-time training loss. In the following theorem (proved in Appendix \ref{app sect isotropic}), we present the optimal learning rate and the corresponding improvement in the loss induced by test-time training. 

\begin{restatable}{theorem}{isooptimal}\label{theo:iso-opt}
Let $n/d = \Theta(1)$. Recall the definition of the expected loss $\Lctt(\Wnew)$ with respect to $\Sctt$ in \eqref{LTT}. In the isotropic covariance and noiseless setting ($\sigma^2=0$), the optimal step-size that minimizes $\Lctt(\Wnew)$ is
\begin{align*}
\etast \approx \frac{d}{2(k+d) \, n^2 \, (n+d) \, \| \betatt \|_2^2}.
\end{align*}
With this optimal step-size $\etast$, the improvement in the loss 
due to the test-time training is
\begin{align*}
\Lc(\Wst) - \Lctt(\Wnew) \approx \frac{k}{k+d} \frac{d^3}{(n+d)^3} \, \| \betatt \|_2^2.
\end{align*}
\end{restatable}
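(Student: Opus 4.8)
The plan is to reduce $\Lctt(\Wnew)$ to a scalar quadratic in the step size $\eta$ and minimize it in closed form. First, I would specialize Proposition~\ref{prop:one-step-update} to the noiseless isotropic model, where $\Yprompt=\Xprompt\betatt$ and $\Ytrain=\Xtrain\betatt$, so the update becomes a rank-one perturbation of $\Wst=c\Iden$ with $c=1/(n+d+1)$:
\[
\Wnew = c\,\Iden + 2\eta\,\ab\,\bb^\top,\qquad \bb:=\Gb\betatt,\quad \ab:=\Kb(\Iden-c\Gb)\betatt,
\]
where $\Gb:=\Xprompt^\top\Xprompt$ and $\Kb:=\Xtrain^\top\Xtrain$ are independent Wishart matrices on $n$ and $k$ standard Gaussian samples. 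Since $\Wnew$ is independent of the fresh evaluation prompt, I can write $\Lctt(\Wnew)=\E_{\Gb,\Kb}\,\E_{\mathrm{eval}}\,[(y-\SM(\Z,\Wnew))^2]$ and handle the inner expectation first.

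The second step is a closed form for the population loss of an arbitrary fixed $\W$. Writing the evaluation label vector as $\X^\top\y=\Hb\betatt$ with $\Hb:=\X^\top\X$ a $d\times d$ Wishart on $n$ samples, independent of the query $\x\sim\Nn(0,\Iden)$, one obtains $\Lc(\W)=\E_\Hb\|(\Iden-\W\Hb)\betatt\|^2$; expanding and using the Wishart identities $\E[\Hb]=n\Iden$ and $\E[\Hb\M\Hb]=n^2\M+n\M^\top+n\,\mathrm{tr}(\M)\Iden$ gives
\[
\Lc(\W)=\|\betatt\|^2-2n\,\betatt^\top\W\betatt+(n^2+n)\,\|\W\betatt\|^2+n\,\|\betatt\|^2\,\|\W\|_F^2,
\]
and substituting $\W=c\Iden$ recovers Proposition~\ref{prop:Wstcharacterization} as a sanity check. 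Plugging $\W=\Wnew$ into this identity turns $\Lc(\Wnew)$ into an explicit quadratic $\Lc(\Wst)+\eta\,A(\Gb,\Kb)+\eta^2\,B(\Gb,\Kb)$; the identity $c(n+1)-1=-cd$ collapses the $\eta$-coefficient to $4cn\big[\|\betatt\|^2(\ab^\top\bb)-d\,(\betatt^\top\Gb\betatt)(\betatt^\top\ab)\big]$ and the $\eta^2$-coefficient to $4n\|\ab\|^2\big[(n+1)(\betatt^\top\Gb\betatt)^2+\|\betatt\|^2\|\bb\|^2\big]$.

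Then I would average over $\Sctt$. Since $\Kb\perp\Gb$, I integrate $\Kb$ out first using $\E[\Kb]=k\Iden$ and $\E[\Kb^2]=k(k+d+1)\Iden$; writing $\vb:=\betatt-c\bb$, this leaves moments in the scalars $s:=\betatt^\top\Gb\betatt$ and $\|\bb\|^2=\betatt^\top\Gb^2\betatt$. By rotational invariance I may take $\betatt=\|\betatt\|\eb_1$, so these reduce to moments of $\chi^2$ variables and Gaussian quadratic forms, computed by conditioning on the first column of $\Xprompt$; I will need $\E[s^j]$ for $j\le 3$ and $\E[\|\bb\|^2],\E[s\|\bb\|^2],\E[s^2\|\bb\|^2],\E[\|\bb\|^4]$. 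A useful cancellation, $\E[s]-c\,\E[\|\bb\|^2]=0$, removes the ``benign'' part of the linear term, leaving $A=\E[A(\Gb,\Kb)]=-4dkn^2(d-1)\|\betatt\|^4/(n+d+1)^2$, while $B=\E[B(\Gb,\Kb)]=4nk(k+d+1)\,\E_\Gb\big[\|\vb\|^2\big((n+1)s^2+\|\betatt\|^2\|\bb\|^2\big)\big]>0$ equals $\|\betatt\|^6 n$ times an explicit rational function of $n,d$.

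Finally, $\eta\mapsto\Lc(\Wst)+\eta A+\eta^2 B$ is a convex parabola, so the minimizer is $\etast=-A/(2B)$ and the drop from $\Lc(\Wst)$ is $A^2/(4B)$ — exactly the ``improvement'' claimed. Substituting $A$ and $B$ and simplifying in the regime $n/d=\Theta(1)$ (where $d-1\approx d$, $n+d+1\approx n+d$, $k+d+1\approx k+d$, and $\E_\Gb\big[\|\vb\|^2\big((n+1)s^2+\|\betatt\|^2\|\bb\|^2\big)\big]\approx\|\betatt\|^6 n^3 d/(n+d)$) produces $\etast\approx d/\big(2(k+d)n^2(n+d)\|\betatt\|^2\big)$ and $\Lc(\Wst)-\Lctt(\Wnew)\approx\frac{k}{k+d}\frac{d^3}{(n+d)^3}\|\betatt\|^2$. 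The hardest part will be the fourth-order Wishart bookkeeping for $B$ — especially $\E[s^2\|\bb\|^2]$ and $\E[\|\bb\|^4]$ — and the final simplification, which is delicate because the leading $\Theta(n^3/(n+d))$ term in the $B$-integrand nearly cancels its $\Theta(n^2)$ term to leave a $\Theta(n^2 d/(n+d))$ remainder; the subleading terms of the moments must be kept until the very end to get the constants right.
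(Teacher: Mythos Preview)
Your proposal is correct and lands on the same quadratic-in-$\eta$ structure and the same minimizer, but the route differs from the paper's in the one genuinely technical step. The paper does not compute the high-order Wishart moments you list. Instead it invokes a Gaussian linearization (its Lemma~\ref{lem:gaussian_approximation}): $\Xprompt^\top\Xprompt\,\betatt \approx n\,\betatt + \sqrt{n}\,\g$ with $\g\sim\Nn(0,\|\betatt\|^2\Iden)$ and an explicit $O(n^{-1})$ residual. After this substitution, all ``$\Xprompt$-moments'' become second and fourth moments of the auxiliary Gaussian $\g$, and the expectation over $\Xtrain$ is handled exactly as you do via $\E[\Kb^2]=k(k+d+1)\Iden$. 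So where you propose exact eighth-order moments ($\E[s^2\|\bb\|^2]$, $\E[\|\bb\|^4]$, etc.) and a final proportional-regime simplification, the paper approximates first and then computes low-order moments.

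Each approach buys something. Yours is more elementary --- no auxiliary lemma, and the needed moments are all obtainable by conditioning on the first column of $\Xprompt$ (your observation $\E[G_{j1}^2\mid G_{11}]=G_{11}$ makes $\E[s^2\|\bb\|^2]$ and $\E[\|\bb\|^4]$ tractable). It also produces exact pre-approximation expressions, which can be useful for finite-$(n,d)$ statements. The paper's route is lighter on bookkeeping and comes with a clean error budget: the linearization contributes $O(n^{-1})$, matching the order of the dropped ``$+1$'' terms (its Remark~\ref{rem:validity-of-approximations}). One small note: your description of the ``near cancellation'' in $B$ is slightly off --- the three pieces of $(n+1)\E[\|\vb\|^2 s^2]$ are each $\Theta(n^3)$ and combine to $n^3\,d/(n+d)$, which is still $\Theta(n^3)$; nothing drops an order, you just need to keep $c=1/(n+d+1)$ (not $1/n$) throughout so the bracket collapses to $d/(n+d)$.
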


\begin{remark}\label{rem:validity-of-approximations}
We note that in the proportional $n, d$ regime, the Gaussian approximation in Lemma~\ref{lem:gaussian_approximation}, which is utilized in the proof of \Cref{theo:iso-opt}, introduces an error of $\mathcal{O}(n^{-1})$. Additionally, during the derivation, we omit lower-order terms (such as $2n$ in $n^2 + 2n$), which introduces errors of one degree lower relative to the leading terms. Consequently, after carrying these approximations through the entire derivation, the overall approximation error remains at most $\mathcal{O}(n^{-1})$, since the final loss expression is zeroth-order in $n,d$. Hence, our result is robust to these errors as $n,d\rightarrow\infty$ while maintaining the ratio $\alpha = n/d$. The same reasoning also applies to the non-isotropic covariance case, which will be discussed in \Cref{sect:generalcovariance}.
\end{remark}

Theorem \ref{theo:iso-opt} establishes the improvement produced by the test-time training as a function of $k,n,$ and $d$. When $n$ and $d$ are fixed, such improvement is proportional to $\frac{k}{k+d}$. Additionally, if $k/d = \gamma$ is fixed, the loss $\Lctt(\Wnew)$ exhibits an intriguing behavior as a function of $n/d = \alpha$, as described by the next corollary (proved in Appendix \ref{app sect isotropic}).

\begin{restatable}{corollary}{nonmonotonicity}\label{corol:iso-opt}
Recall the definitions of $\gamma = k/d$, $\alpha = n/d$, and consider the loss $\Lctt(\Wnew)$ as a function of $\alpha$ in the isotropic covariance and noiseless setting. If $\gamma > \frac{1}{2}$, then the loss $\Lctt(\Wnew)$ is non-monotonic in $\alpha$. Specifically, the loss $\Lctt(\Wnew)$ is increasing for $\alpha < \sqrt{\frac{3\gamma}{\gamma + 1}} - 1$ and 
decreasing for $\alpha > \sqrt{\frac{3\gamma}{\gamma + 1}} - 1$. Conversely, if $\gamma < \frac{1}{2}$, then $\Lctt(\Wnew)$ is monotonic in $\alpha$.
\end{restatable}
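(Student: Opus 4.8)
The plan is to start from the explicit formula for the expected test-time loss $\Lctt(\Wnew)$ established in Theorem \ref{theo:iso-opt}. Combining the characterization $\Lc(\Wst) = \|\betatt\|_2^2\,\frac{d+1}{n+d+1}$ from Proposition \ref{prop:Wstcharacterization} with the improvement formula, and passing to the asymptotic regime $n,d\to\infty$ with $\alpha = n/d$ and $\gamma = k/d$ fixed (so that $\frac{d+1}{n+d+1}\to\frac{1}{\alpha+1}$ and $\frac{k}{k+d}\to\frac{\gamma}{\gamma+1}$, $\frac{d^3}{(n+d)^3}\to\frac{1}{(\alpha+1)^3}$), we get
\begin{align*}
\Lctt(\Wnew) \;\approx\; \|\betatt\|_2^2\left(\frac{1}{\alpha+1} - \frac{\gamma}{\gamma+1}\,\frac{1}{(\alpha+1)^3}\right).
\end{align*}
So the whole question reduces to a single-variable calculus exercise: analyze the sign of the derivative with respect to $\alpha$ of $g(\alpha) := \frac{1}{\alpha+1} - \frac{c}{(\alpha+1)^3}$ on $\alpha>0$, where $c = \frac{\gamma}{\gamma+1}\in(0,1)$.

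Next I would substitute $u = \alpha+1 > 1$, so $g = u^{-1} - c\,u^{-3}$ and $g'(\alpha) = g'(u) = -u^{-2} + 3c\,u^{-4} = u^{-4}(3c - u^2)$. Hence $g'$ has the sign of $3c - u^2$: the loss is increasing when $u < \sqrt{3c}$ and decreasing when $u > \sqrt{3c}$, i.e. increasing for $\alpha < \sqrt{3c}-1$ and decreasing for $\alpha > \sqrt{3c}-1$, with $\sqrt{3c} = \sqrt{3\gamma/(\gamma+1)}$ matching the stated threshold. The dichotomy in $\gamma$ then comes from whether the critical point $\sqrt{3c}-1$ lies in the admissible range $\alpha>0$ (equivalently $u>1$): we have $\sqrt{3c}-1 > 0 \iff 3c > 1 \iff \frac{3\gamma}{\gamma+1} > 1 \iff \gamma > \tfrac12$. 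When $\gamma > \tfrac12$ the turning point is a genuine interior maximum, giving non-monotonicity as stated; when $\gamma < \tfrac12$ we have $\sqrt{3c} < 1 \le u$ throughout, so $g' < 0$ everywhere and the loss is monotonically decreasing in $\alpha$.

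I would then briefly address the one point that needs a little care rather than being purely routine: the formulas for $\etast$ and for the improvement in Theorem \ref{theo:iso-opt} are \emph{approximate} for finite $n,d$, so strictly speaking the monotonicity statement should be read in the $n,d\to\infty$ limit (or with an $o(1)$ error), and I would point to Remark \ref{rem:validity-of-approximations} for the precise sense in which the approximation is controlled. The main (minor) obstacle is thus bookkeeping: making sure that the error terms absorbed into the $\approx$ in Theorem \ref{theo:iso-opt} are uniform in $\alpha$ on compact subsets of $(0,\infty)$, so that the sign of $g'$ genuinely dictates the sign of $\frac{\partial}{\partial\alpha}\Lctt(\Wnew)$ for large $n,d$; once that is granted, the corollary is immediate from the sign analysis of the cubic-in-$u$ expression above. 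No boundary subtlety arises at $\alpha\to 0$ or $\alpha\to\infty$ since $g$ is smooth there and the sign of $g'$ is already pinned down by the single factor $3c-u^2$.
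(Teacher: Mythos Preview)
Your proposal is correct and follows essentially the same route as the paper: both start from the approximate expression $\Lctt(\Wnew)\approx\|\betatt\|_2^2\bigl(\tfrac{1}{\alpha+1}-\tfrac{\gamma}{\gamma+1}\tfrac{1}{(\alpha+1)^3}\bigr)$ obtained from Proposition~\ref{prop:Wstcharacterization} and Theorem~\ref{theo:iso-opt}, then reduce to a one-variable sign analysis. The only cosmetic difference is that the paper substitutes $a=\tfrac{d}{n+d}=\tfrac{1}{\alpha+1}\in(0,1)$ and studies $f(a)=a-\tfrac{k}{k+d}a^3$, whereas you substitute the reciprocal $u=\alpha+1$; the critical point $a_{\mathrm{crit}}=\sqrt{(k+d)/(3k)}$ and your $u_{\mathrm{crit}}=\sqrt{3\gamma/(\gamma+1)}$ are inverses of one another, and the $\gamma\gtrless\tfrac12$ dichotomy and the threshold $\alpha=\sqrt{3\gamma/(\gamma+1)}-1$ fall out identically.
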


The phase transition points described in Corollary \ref{corol:iso-opt} are approximate but become exact as \( k, n, \) and \( d \) approach infinity while maintaining the ratios \( \gamma = k/d \) and \( \alpha = n/d \). The non-monotonic behavior identified in \Cref{corol:iso-opt} is observed as a result of two opposing effects, which are the initial loss and the improvement by TTT. As $n$ grows against $d$ (i.e. $\alpha$ increases), the pre-trained model does better initially and already has a lower loss before TTT, which makes it harder to be further reduced by TTT as the rank-1 update is unable to correct all directions. On the other hand, when $d$ grows against $n$  ($\alpha$ decreases), the initial loss is high, providing more room (error) to be corrected by rank-1 update, and thus, the improvement by TTT is larger. This intuition aligns with Theorem 4.2, which establishes that the TTT improvement scales as $(\frac{d}{n+d})^3$. Together, these two trends result in the non-monotonic behavior.

We plot the behavior of the loss $\Lctt(\Wnew)$ as a function of $\alpha = n/d$ for various values of $\gamma = k/d$ in Figure \ref{fig:isotropic non-monotonic}, making the following three observations: \emph{(i)} As stated in Theorem \ref{theo:iso-opt}, the improvement achieved through test-time training is approximately proportional to $1/(\alpha+1)^3$ for large $n$ and $d$. This behavior is evident in Figure \ref{fig:isotropic non-monotonic}, which shows that for every value of $\gamma$, the improvement diminishes as $\alpha = n/d$ increases. \emph{(ii)} The non-monotonic behavior described Corollary \ref{corol:iso-opt} is clearly observed in Figure \ref{fig:isotropic non-monotonic}. The increasing/decreasing regions in the loss as a function of $\alpha$ for different values of $\gamma$ are consistent with the theoretical results. \emph{(iii)} As $\gamma$ increases, the improvement from the gray curve by test-time training is proportional to the ratio $\gamma / (\gamma + 1)$ for a fixed $\alpha = n/d$. This observation aligns with Theorem \ref{theo:iso-opt}.


In addition to the scenario where we begin from an optimally pre-trained model $\Wst$, a natural question is how much improvement can be achieved when the initial weight matrix $\Wst=\boldsymbol{0}_{d\times d}$ (zero initialization). This initialization corresponds to training the model from scratch using a single-step gradient descent. In the following theorem, we quantify the improvement gained by applying a single-step gradient descent from this initialization.
\begin{restatable}{theorem}{isoimpoptimalstepwzero}\label{theo:iso-imp-optimal-step-w-zero}
Consider the isotropic covariance and noiseless setting $(\sigma^2=0)$. Suppose the initial weight matrix is $\Wst=\boldsymbol{0}_{d\times d}$. Then, the optimal step-size that minimizes $\Lctt(\Wnew)$ is
\begin{align*}
\etast = \frac{1}{2 (k+d+1) \, (n^2 + 4n + 3 + d) \, \| \betatt \|_2^2}.
\end{align*}
With this optimal step-size $\etast$, the improvement in the loss 
due to test-time training is
\begin{align*}
\Lc(\Wst) - \Lctt(\Wnew) =
\frac{k}{\,k+d+1\,}
\,\frac{n^2}{\,n^2 + 4n + 3 + d\,}
\, \|\betatt \|_2^2,
\end{align*}
where $\Lc(\Wst) = \|\betatt \|_2^2$.
\end{restatable}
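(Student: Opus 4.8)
The strategy is to reduce everything to moments of Wishart matrices and then minimize an \emph{exact} quadratic in the step size $\eta$ (this is why the statement, unlike Theorem~\ref{theo:iso-opt}, carries no approximation error).

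First I would invoke Proposition~\ref{prop:one-step-update} with $\W=\boldsymbol{0}_{d\times d}$, which collapses the update to the rank-one matrix $\Wnew = 2\eta\,\Xtrain^\top\Ytrain\,\uprompt^\top$ with $\uprompt=\Xprompt^\top\Yprompt$. In the noiseless isotropic model $\Ytrain=\Xtrain\betatt$ and $\Yprompt=\Xprompt\betatt$, so writing $\A:=\Xtrain^\top\Xtrain$ (Wishart, $k$ draws) and $\B:=\Xprompt^\top\Xprompt$ (Wishart, $n$ draws) we get $\Wnew\vct v = 2\eta\,(\betatt^\top\B\vct v)\,\A\betatt$. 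Next I would simplify the population loss itself: for a fresh test prompt with context matrix $\X$ and labels $\X\betatt$, one has $\SMi{\Z}{\W}=\x^\top\W\X^\top\X\betatt$, and integrating out the query $\x\sim\Nn(\boldsymbol 0,\Iden)$ (independent of $\X$) yields the identity $\Lc(\W)=\E_{\X}\bigl\|\betatt-\W\X^\top\X\betatt\bigr\|_2^2$.

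Substituting $\Wnew$, setting $\Cb:=\X^\top\X$ (Wishart, $n$ draws, independent of both $\A$ and $\B$) and the scalar $s:=\betatt^\top\B\Cb\betatt$, and expanding the square gives
\[
\Lctt(\Wnew) \;=\; \|\betatt\|^2 \;-\; 4\eta\,\E[s]\,\E\!\left[\betatt^\top\A\betatt\right] \;+\; 4\eta^2\,\E[s^2]\,\E\!\left[\betatt^\top\A^2\betatt\right],
\]
where the factorizations across the three expectations use that $\A\perp(\B,\Cb)$ and $\B\perp\Cb$. This is an exact quadratic $a\eta^2-b\eta+c$ in $\eta$, so $\etast=b/(2a)$ and $\Lc(\Wst)-\Lctt(\Wnew)=b^2/(4a)$; the remaining work is purely computing the four Wishart moments, after which substituting into $a,b,c$ and simplifying should reproduce both claimed formulas, with $\Lc(\Wst)=\|\betatt\|^2$ being the $\eta=0$ value.

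For the moments, the easy ones follow from $\E[\M]=m\Iden$ and $\E[\M^2]=m(m+d+1)\Iden$ for a $d$-dimensional Wishart with $m$ draws: $\E[\betatt^\top\A\betatt]=k\|\betatt\|^2$, $\E[\betatt^\top\A^2\betatt]=k(k+d+1)\|\betatt\|^2$, and $\E[s]=n^2\|\betatt\|^2$. The real obstacle is the nested fourth moment $\E[s^2]=\E\bigl[(\betatt^\top\B\Cb\betatt)^2\bigr]$, which couples two independent Wishart matrices. I would handle it by writing $\B=\sum_{i=1}^n\g_i\g_i^\top$, conditioning on $\Cb$ with $\vct w:=\Cb\betatt$, splitting the double sum over $(i,j)$ into the diagonal ($i=j$) and off-diagonal parts, and applying Isserlis' theorem to each Gaussian factor (in particular $\E[(\betatt^\top\g)^2(\vct w^\top\g)^2]=\|\betatt\|^2\|\vct w\|^2+2(\betatt^\top\vct w)^2$). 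This yields $\E[s^2\mid\Cb]=n\|\betatt\|^2\,\betatt^\top\Cb^2\betatt+n(n+1)(\betatt^\top\Cb\betatt)^2$; taking expectation over $\Cb$ with $\E[\betatt^\top\Cb^2\betatt]=n(n+d+1)\|\betatt\|^2$ and $\E[(\betatt^\top\Cb\betatt)^2]=n(n+2)\|\betatt\|^4$ collapses the bracket to $n^2(n^2+4n+3+d)\|\betatt\|^4$. The main risk is bookkeeping the coincidence patterns in the $s^2$ expansion correctly; everything downstream is routine algebra.
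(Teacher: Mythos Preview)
Your proposal is correct and the moment bookkeeping checks out line by line; in particular your computation of $\E[s^2]$ via conditioning on $\Cb$, splitting the diagonal/off-diagonal pairs in $\B=\sum_i\g_i\g_i^\top$, and then integrating over $\Cb$ gives exactly $n^2(n^2+4n+3+d)\|\betatt\|^4$, which is the combinatorial heart of the formula.

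The route differs from the paper's in two ways worth noting. First, the paper works through the general noisy case and specializes to $\sigma^2=0$ only at the end; it therefore never uses your compact identity $\Lc(\W)=\E_\X\|\betatt-\W\X^\top\X\betatt\|^2$ (which is specific to the noiseless, isotropic setting) but instead expands $\Lc(\Wst)-\Lc(\Wnew)$ via the full population-loss formula of Lemma~\ref{lem:pop-loss-expression} and computes the matrix expectations $\E[\Wnew]$ and $\E[\Wnew^\top\Wnew]$ directly. Second, and relatedly, the paper keeps everything at the matrix level (arriving at $\E[\Wnew^\top\Wnew]$ as an explicit $d\times d$ matrix, then evaluating traces and quadratic forms), whereas you exploit the rank-one structure of $\Wnew$ to collapse the loss to the scalar $s=\betatt^\top\B\Cb\betatt$ before taking expectations. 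Your path is shorter and cleaner for the exact noiseless statement; the paper's path buys the noisy generalization mentioned after the theorem (the closed-form $\etast$ and improvement with $\sigma^2>0$) essentially for free, at the cost of heavier intermediate expressions.
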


We provide the proof of Theorem \ref{theo:iso-imp-optimal-step-w-zero} in Appendix \ref{app sect isotropic}, where we also solve the more general noisy setting with initial weight $\Wst=\boldsymbol{0}_{d \times d}$. If $\sigma^2 = \mathcal{O}\left( \| \betatt \|_2^2 \right)$ and $k$ grows sufficiently faster than $d$ (e.g., $k/d \to \infty$), then the test-time training update can reduce the loss to near $0$ as $k, n, d \to \infty$ with $\alpha = n/d = \Theta(1)$.

Armed with Theorem \ref{theo:iso-opt} and \ref{theo:iso-imp-optimal-step-w-zero}, we can now 
establish when it is better to initialize with the pre-trained $\Wst$, as opposed to the zero (null) initialization, as the size $k$ of the test-time training set varies.  

\begin{restatable}{corollary}{phasetransitionisotropic}
\label{corol:phase-transition-isotropic}
Recall the definitions of $\alpha = n/d$ and $\gamma = k/d$. Consider the setting in Theorem \ref{theo:iso-opt} and test-time training described in Proposition \ref{prop:one-step-update} with both pre-trained and zero initializations. Then, under the isotropic covariance and noiseless setting, there exists a threshold $\gamma^{\star}$ given by $\gamma^{\,\star}
~\approx~
\dfrac{(\alpha+1)^2}{(\alpha+2)}$
such that $\gamma < \gamma^\star$ if and only if it is better to utilize the pre-trained initialization over the zero initialization $\boldsymbol{0}_{d\times d}$. 
\end{restatable}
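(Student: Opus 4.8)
The plan is to directly compare the two loss improvements given in Theorems~\ref{theo:iso-opt} and~\ref{theo:iso-imp-optimal-step-w-zero} and find the crossover in $k$. Since the noiseless starting loss with zero initialization is $\|\betatt\|_2^2$ while the starting loss with pre-training is $\Lc(\Wst) = \|\betatt\|_2^2 (d+1)/(n+d+1)$, the pre-trained start begins with a strictly smaller loss, but its possible improvement is smaller too. So the comparison is between the two \emph{final} losses $\Lctt(\Wnew)$, not the improvements. First I would write the final loss in each case as (starting loss) minus (improvement):
\begin{align*}
\Lctt^{\text{pre}} &\approx \|\betatt\|_2^2 \left( \frac{d+1}{n+d+1} - \frac{k}{k+d}\,\frac{d^3}{(n+d)^3} \right), \\
\Lctt^{\text{zero}} &= \|\betatt\|_2^2 \left( 1 - \frac{k}{k+d+1}\,\frac{n^2}{n^2+4n+3+d} \right).
\end{align*}
Passing to the proportional limit $n,d,k\to\infty$ with $\alpha=n/d$, $\gamma=k/d$ fixed, these become, after dividing by $\|\betatt\|_2^2$ and using $d+1\sim d$, $n^2+4n+3+d \sim n^2+d$, $\frac{n^2}{n^2+d}\sim \frac{\alpha^2}{\alpha^2+1}$, etc.:
\begin{align*}
\bar L^{\text{pre}}(\alpha,\gamma) = \frac{1}{\alpha+1} - \frac{\gamma}{\gamma+1}\,\frac{1}{(\alpha+1)^3}, \qquad
\bar L^{\text{zero}}(\alpha,\gamma) = 1 - \frac{\gamma}{\gamma+1}\,\frac{\alpha^2}{\alpha^2+1}.
\end{align*}

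Next I would set $\bar L^{\text{pre}} = \bar L^{\text{zero}}$ and solve for $\gamma$ as a function of $\alpha$. Writing $t = \gamma/(\gamma+1)\in(0,1)$, the equation is linear in $t$:
\[
\frac{1}{\alpha+1} - \frac{t}{(\alpha+1)^3} = 1 - \frac{t\,\alpha^2}{\alpha^2+1},
\]
so $t\left( \frac{\alpha^2}{\alpha^2+1} - \frac{1}{(\alpha+1)^3} \right) = 1 - \frac{1}{\alpha+1} = \frac{\alpha}{\alpha+1}$, giving $t^\star = \frac{\alpha}{\alpha+1}\Big/\left( \frac{\alpha^2}{\alpha^2+1} - \frac{1}{(\alpha+1)^3} \right)$. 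Then I invert $t = \gamma/(\gamma+1)$ via $\gamma = t/(1-t)$ to get $\gamma^\star$. The claimed answer $\gamma^\star \approx (\alpha+1)^2/(\alpha+2)$ should drop out after algebraic simplification; I would double-check by verifying that the messy rational function in $\alpha$ reduces (perhaps the intended statement keeps only the leading behavior in the relevant regime, e.g.\ large $\alpha$, where $\frac{\alpha^2}{\alpha^2+1}\approx 1$ and the $(\alpha+1)^{-3}$ term is lower order, yielding $t^\star \approx \frac{\alpha}{\alpha+1}$, hence $\gamma^\star \approx \alpha \approx (\alpha+1)^2/(\alpha+2)$ to leading order). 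I would state precisely which approximation is being invoked so the ``$\approx$'' in the corollary is justified.

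Finally, to get the ``if and only if'' direction, I would check monotonicity: $\bar L^{\text{zero}}$ is decreasing in $\gamma$ at a faster rate than $\bar L^{\text{pre}}$ for the relevant $\alpha$ range (the coefficient $\frac{\alpha^2}{\alpha^2+1}$ multiplying $t$ dominates $\frac{1}{(\alpha+1)^3}$ once $\alpha$ is not too small), so $\bar L^{\text{zero}} - \bar L^{\text{pre}}$ is monotone decreasing in $\gamma$; hence it has a unique sign change at $\gamma^\star$, with the pre-trained start winning ($\bar L^{\text{pre}} < \bar L^{\text{zero}}$) exactly when $\gamma < \gamma^\star$. I expect the main obstacle to be the bookkeeping in the algebraic simplification of $\gamma^\star$: the exact crossover is a ratio of polynomials in $\alpha$ that only collapses to the clean form $(\alpha+1)^2/(\alpha+2)$ under a controlled approximation, so the delicate part is identifying the correct regime/leading-order truncation and confirming that the neglected terms are genuinely lower order, consistent with the ``robust approximation / exact in the proportional limit'' framing used throughout Section~\ref{sect:main results}.
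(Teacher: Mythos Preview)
Your overall plan is the paper's plan, but the proposal contains a concrete scaling error that is the reason your algebra fails to collapse. You write $\frac{n^2}{n^2+d}\sim\frac{\alpha^2}{\alpha^2+1}$, but in the proportional regime $n=\alpha d$ one has
\[
\frac{n^2}{n^2+d}=\frac{\alpha^2 d^2}{\alpha^2 d^2+d}=\frac{\alpha^2 d}{\alpha^2 d+1}\xrightarrow[d\to\infty]{}1,
\]
not $\alpha^2/(\alpha^2+1)$ (your expression would be right only if the denominator were $n^2+d^2$). Consequently the zero-initialization final loss is simply $\bar L^{\text{zero}}=1-t$ with $t=\gamma/(\gamma+1)$, not $1-t\,\alpha^2/(\alpha^2+1)$.

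With that fix, set $\theta=\frac{d}{n+d}=\frac{1}{\alpha+1}$; the comparison $\bar L^{\text{pre}}<\bar L^{\text{zero}}$ is $\theta-t\theta^3<1-t$, i.e.\ $1-\theta>t(1-\theta^3)$. The factorization $1-\theta^3=(1-\theta)(1+\theta+\theta^2)$ gives $1/t>1+\theta+\theta^2$, hence $\frac{d}{k}=\frac{1-t}{t}>\theta+\theta^2=\frac{\alpha+2}{(\alpha+1)^2}$, yielding $\gamma^\star=\dfrac{(\alpha+1)^2}{\alpha+2}$ \emph{exactly} in the proportional limit. No further ``leading-order truncation'' or large-$\alpha$ approximation is needed; the clean form is not an additional approximation layered on top but the direct outcome of the correct scaling. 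The monotonicity/``iff'' part is also cleaner than you suggest: $\bar L^{\text{zero}}-\bar L^{\text{pre}}$ is linear in $t$ with slope $-(1-\theta^3)<0$, so the sign change in $\gamma$ is automatically unique.
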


Corollary \ref{corol:phase-transition-isotropic} identifies a phase transition point as a function of $\gamma$ and $\alpha$ that distinguishes the region where test-time training outperforms training from scratch. In Figure \ref{fig:isotropic pretrain scratch}, we illustrate the loss $\Lctt(\Wnew)$ as a function of $\gamma = k/d$ for pre-trained and zero initializations, with a fixed $\alpha = n/d = 1/2$. Based on Corollary \ref{corol:phase-transition-isotropic}, the phase transition point for $\gamma$ is $\frac{(3/2)^2}{5/2}= 9/10$, which aligns with the empirical results. 

The pre-trained initialization provides a significant advantage when the test-time training set is small, as it introduces a strong prior that facilitates better performance. However, as the test-time training set grows, this initialization becomes a limitation because the rank-one update to the pre-trained matrix is insufficient to achieve a near-zero loss. In contrast, with zero initialization, the rank-one update becomes highly effective when the test-time training set is sufficiently large, enabling it to achieve near-zero error. This demonstrates that while pre-trained initialization is beneficial in data-scarce scenarios, zero initialization is better suited for scenarios with sufficient test-time training data. 
\section{Analysis for General Covariance}\label{sect:generalcovariance}
\begin{figure*}[t!]
     \centering
     \begin{subfigure}[b]{0.32\textwidth}
         \centering
         \includegraphics[width=\textwidth]{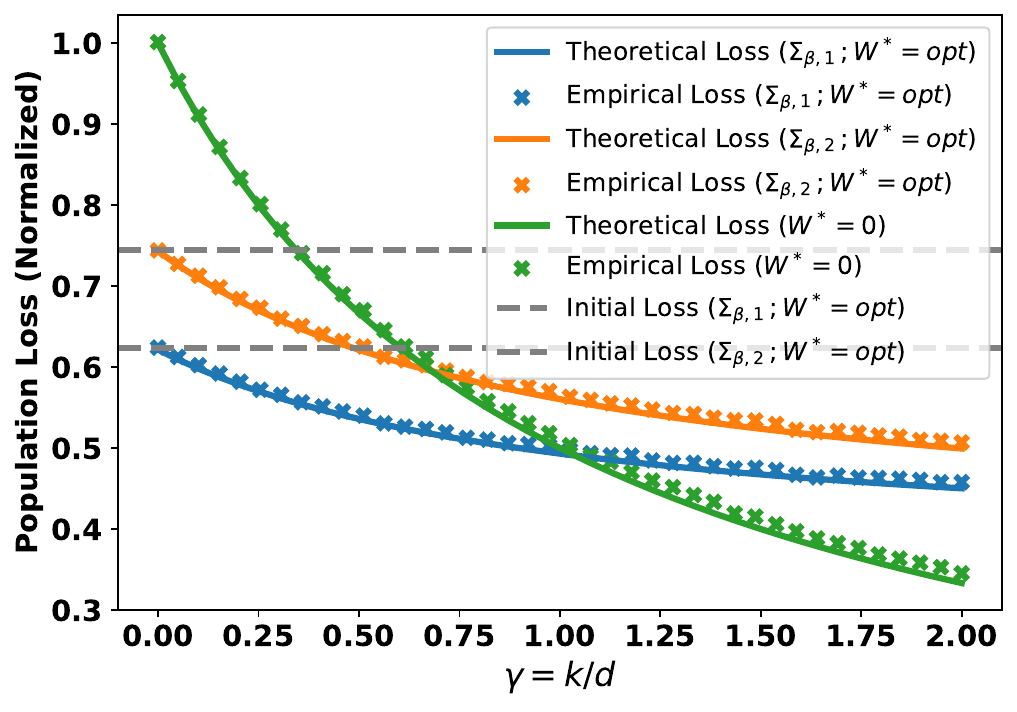}
         \caption{Losses after test-time-training as a function of $\gamma = k/d$ for optimal weights $\Wst$ based on two covariances and $\W=\boldsymbol{0}_{d \times d}$.}
         \label{fig:non-isotropic pretrain scratch}
     \end{subfigure}
     \hfill
     \begin{subfigure}[b]{0.32\textwidth}
         \centering
         \includegraphics[width=\textwidth]{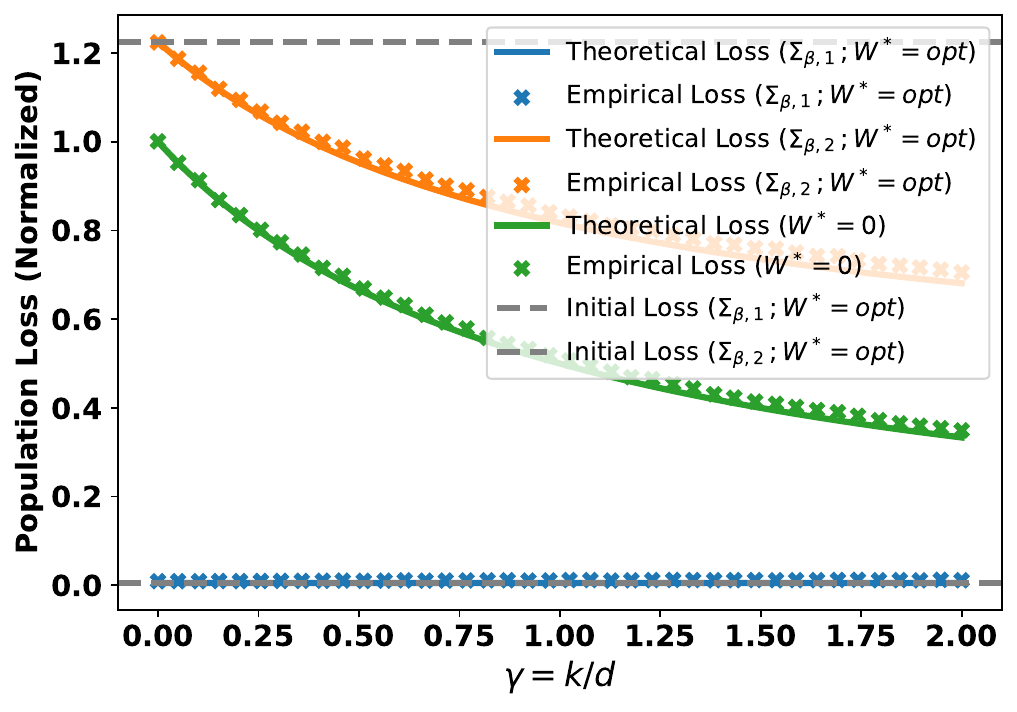}
         \caption{Losses after test-time-training vs. $\gamma$ for optimal weights $\Wst$ with worst/best aligned covariances and \(\W=\boldsymbol{0}_{d \times d} \).}
         \label{fig:non-isotropic perfect bad align}
     \end{subfigure}
     \hfill
     \begin{subfigure}[b]{0.32\textwidth}
         \centering
         \includegraphics[width=\textwidth]{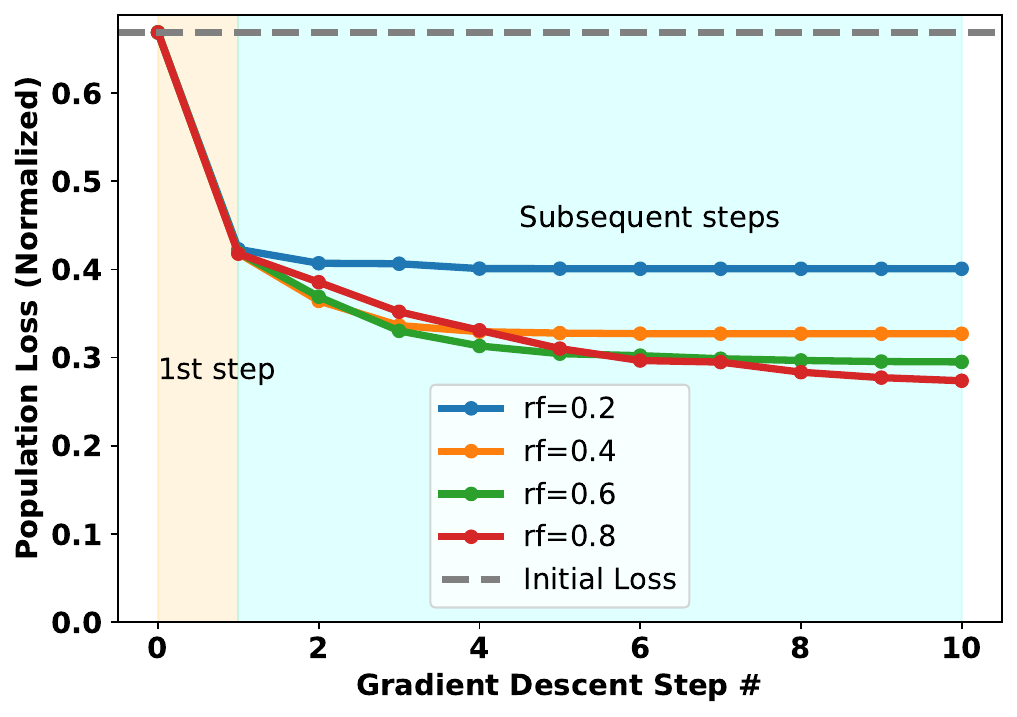}
         \caption{Comparison of losses after single and multiple steps of gradient descent at test time for different step size decays.}
         \label{fig:non-isotropic multi grad}
     \end{subfigure}
     \caption{Population losses in single and multi-step update scenarios. $\bSix = \Iden$, $\sigma^2=0$ and step sizes are optimal based on the formula in Theorem \ref{theo:non-iso-opt-step}. $\Wst = \text{opt}$ and $\Wst = 0$ represent the pre-trained matrix in \eqref{eq:pop-loss-pre-train} and zero (null) initialization, respectively. \textbf{(a):} \textbf{Setting:} $n=300$; $d=600$; $k$ changing between $0$ and $4n$ with equal increments; $\bSi_{\bt, 1} = \text{diag}(\Iden_{250}, 0.5 \cdot \Iden_{250}); \bSi_{\bt, 2} = \text{diag}(0.5 \cdot \Iden_{250}, \Iden_{250})$; $\betatt= \begin{bmatrix} \boldsymbol{1}_{250} \; ; 0.5 \cdot \boldsymbol{1}_{250} \end{bmatrix}$. Solid lines denote theoretical predictions which match the empirical (markers) results. \textbf{(b):} \textbf{Setting:} $n=250$; $d=500$; $k$ changing between $0$ and $4n$ with equal increments; $\bSi_{\bt, 1} = \text{diag}(1, \boldsymbol{0}_{499}); \bSi_{\bt, 2} = \text{diag}(0, \Iden_{499})$; $\betatt= \begin{bmatrix} 1 \; ; \boldsymbol{0}_{499} \end{bmatrix}$. \textbf{(c):} The figure illustrates the empirical results, which imply that with the initially optimal step size, a significant improvement can be gained by a single gradient step. Each line depicts a different reduce factor on the step size $\eta$ after each gradient step. \textbf{Setting:} $n=50$; $d=100$; $k=50 \times d$; $\bSibt=\Iden$; $\betatt=\boldsymbol{1}_{100}$; $\Wst$ is optimal.}
        \label{fig:non-isotropic}
    \vspace{-0.3cm}
\end{figure*}
In this section, we extend the previous analysis to the general 
case where $\bSix$ and $\bSibt$ are any jointly diagonalizable covariances. 
Specifically, we assume the task covariance $\bSibt$ to be a non-isotropic diagonal matrix while keeping the feature covariance matrix $\bSix$ isotropic.
\footnote{This assumption is equivalent to a more general case where $\bSix$ and $\bSibt$ are jointly diagonalizable, as proved in Appendix \ref{app sect generalcovariance}. Joint diagonalizability refers to the case where there exists a unitary matrix $\Q \in \R^{d \times d}$ such that the covariance matrices $\bSibt = \Q \Labt \Q^\top$ and $\bSix = \Q \Lax \Q^\top$ can be expressed with diagonal $\Labt$ and $\Lax$ matrices.} This assumption enables us to characterize the loss function $\Lctt(\Wnew)$ with respect to the alignment between the pre-trained initialization matrix $\Wst$ and the new task parameter $\betatt$. 

We start with the definition of two quantities, which will be used to express the optimal step size and the loss improvement via the test-time training.
\begin{restatable}{definition}{AB}\label{defn:AB}
For an arbitrary $\W \in \R^{d \times d}$, define $A = \betatt^\top (\Iden - n \W)^2 \betatt$ and $B = n \, \| \betatt \|_2^2 \, \| \W \|_F^2$. 
\end{restatable}
Here, the term $A$ represents the \emph{misalignment} between the test-time task parameter $\betatt$ and the initial parameter $\W$, whereas the term $B$ represents the total signal power of the one-layer linear attention system with parameter $\W$. Note that the terms $A$ and $B$ are a function of the new task parameter $\betatt$ and the initial weight parameter $\W$.

Using the definitions of $A$ and $B$, we are ready to generalize Proposition \ref{prop:Wstcharacterization} to the non-isotropic and rank-deficient covariance matrices. Note that for the rank-deficient covariance matrix, we take the optimal $\Wst$ as the minimum Frobenius-norm matrix that minimizes the loss in \eqref{eq:pop-loss-pre-train}.
\begin{restatable}{proposition}{Wstcharacterizationnonisot}\label{prop:Wstcharacterizationnonisot}
Let $\bSix = \Iden, \sigma^2 = 0$ and suppose $\bSibt$ is an arbitrary diagonal covariance matrix with the first $r$ diagonal entries being non-zero. Then, the minimizer $\Wst$ of the pre-training loss \eqref{eq:pop-loss-pre-train} that has minimal Frobenius norm and its population loss 
are given by
\begin{align*}
    \Wst = \left( (n+1) \Iden' + \tr{\bSibt} \bSibt^{\dagger} \right)^{\dagger}, \quad \Lc(\Wst) \approx A + B,
\end{align*}
where $\Iden' = \text{diag}(\boldsymbol{1}_{r}, \boldsymbol{0}_{d-r})$.
\end{restatable}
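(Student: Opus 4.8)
The plan is to reduce both statements to Gaussian moment computations. Because the value matrix is chosen to read off the query label, $\SM(\Z,\W)=\x^\top\W\X^\top\y$, and in the noiseless linear model $y_i=\x_i^\top\betapt$ we have $\X^\top\y=\M\betapt$ with $\M:=\X^\top\X=\sum_{i=1}^n\x_i\x_i^\top$, so the prediction residual is $y-\SM(\Z,\W)=\x^\top(\Iden-\W\M)\betapt$. The whole argument then runs on two Wishart identities: $\E[\M]=n\Iden$, and, for any fixed symmetric $\N$, $\E[\M\N\M]=n(n+1)\N+n\,\tr{\N}\,\Iden$. I would prove the second by expanding $\E[\M\N\M]=\sum_{i,j}\E[\x_i\x_i^\top\N\x_j\x_j^\top]$ and splitting into the $n(n-1)$ off-diagonal terms, each equal to $\E[\x_i\x_i^\top]\,\N\,\E[\x_j\x_j^\top]=\N$, and the $n$ diagonal terms, each equal to $\tr{\N}\Iden+2\N$ by the fourth-moment (Wick) identity for a standard Gaussian.

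\emph{Pretraining loss and $\Wst$.} Conditioning on $(\X,\betapt)$ and integrating $\x\sim\Nn(\boldsymbol{0},\Iden)$ turns the squared residual into $\|(\Iden-\W\M)\betapt\|^2$; then integrating $\betapt\sim\Nn(\boldsymbol{0},\bSibt)$ gives $\tr{(\Iden-\W\M)^\top(\Iden-\W\M)\bSibt}$; finally expanding and inserting the two Wishart identities yields the population pretraining loss
\[
\tr{\bSibt}-2n\,\tr{\W\bSibt}+n(n+1)\,\tr{\W^\top\W\bSibt}+n\,\tr{\bSibt}\,\|\W\|_F^2 .
\]
Written entrywise this equals $\tr{\bSibt}+\sum_{i,j}\bigl(n(n+1)(\bSibt)_{jj}+n\,\tr{\bSibt}\bigr)W_{ij}^2-2n\sum_j(\bSibt)_{jj}W_{jj}$: a separable quadratic in the entries of $\W$ whose coefficients are strictly positive whenever $\tr{\bSibt}>0$ and whose linear part only involves the diagonal. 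Hence the minimizer is unique, its off-diagonal entries vanish, and $(\Wst)_{jj}=(\bSibt)_{jj}/\bigl((n+1)(\bSibt)_{jj}+\tr{\bSibt}\bigr)$, which is $0$ for $j>r$. Comparing this diagonal matrix entrywise with $\bigl((n+1)\Iden'+\tr{\bSibt}\,\bSibt^\dagger\bigr)^\dagger$ gives the claimed formula, and the degenerate case $\tr{\bSibt}=0$ (so $r=0$) is covered by the minimal-Frobenius-norm tie-breaking, which returns $\boldsymbol{0}_{d\times d}$ — again consistent with the formula.

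\emph{Population loss on the new task.} Repeating the computation above with the fixed vector $\betatt$ in place of the random $\betapt$ (so there is no outer task expectation) and using the same two Wishart identities gives
\[
\Lc(\W)=\|\betatt\|^2-2n\,\betatt^\top\W\betatt+n(n+1)\,\|\W\betatt\|^2+n\,\|\W\|_F^2\,\|\betatt\|^2 .
\]
I then specialize to $\W=\Wst$, which is diagonal and hence symmetric so that $\betatt^\top\Wst^2\betatt=\|\Wst\betatt\|^2$; splitting $n(n+1)=n^2+n$, the $n^2$ piece merges with the first two terms into $\betatt^\top(\Iden-n\Wst)^2\betatt=A$, and the rest is $n\,\|\Wst\betatt\|^2+n\,\|\Wst\|_F^2\,\|\betatt\|^2=n\,\|\Wst\betatt\|^2+B$. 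This yields the exact identity $\Lc(\Wst)=A+B+n\,\|\Wst\betatt\|^2$; since $\|\Wst\betatt\|^2=\sum_i(\betatt)_i^2(\Wst)_{ii}^2\le\|\betatt\|^2\,\|\Wst\|_F^2$, the remainder $n\,\|\Wst\betatt\|^2$ is dominated by $B$, and in the asymptotic regime of this section it is a lower-order correction to $A+B$ — which is the meaning of $\Lc(\Wst)\approx A+B$.

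The steps are routine once the moment identities are available, so I expect the only real obstacle to be bookkeeping: deriving $\E[\M\N\M]$ cleanly and then tracking the support of $\bSibt$ carefully enough that the pseudoinverse formula for $\Wst$ is exact (equal to the claimed matrix on the first $r$ coordinates and identically zero elsewhere). The one point I would be careful to state precisely is the sense of the approximation: it is the exact relation $\Lc(\Wst)=A+B+n\|\Wst\betatt\|^2$ with a remainder that is $\order{B}$ and is negligible relative to $A+B$ in the regime used throughout the section.
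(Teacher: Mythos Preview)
Your proof is correct and in some respects tighter than the paper's. For $\Wst$, the paper simply cites \citet{li2024finegrainedanalysisincontextlinear} for the full-rank formula and then argues informally that in the rank-deficient case ``those coordinates do not appear in the pre-training loss,'' so the minimal-Frobenius-norm criterion zeroes them out. Your direct computation via the separable quadratic is cleaner and actually shows more: whenever $\tr{\bSibt}>0$, the coefficient $n(n+1)(\bSibt)_{jj}+n\,\tr{\bSibt}$ on each $W_{ij}^2$ is strictly positive, so the minimizer is \emph{unique} and the minimal-norm tie-breaking is only needed in the trivial case $\bSibt=\boldsymbol{0}$. For $\Lc(\Wst)\approx A+B$, the paper obtains it by replacing $n(n+1)$ with $n^2$ in the loss expression (inside the proof of the general-covariance theorem); your exact identity $\Lc(\Wst)=A+B+n\|\Wst\betatt\|^2$ together with the bound $n\|\Wst\betatt\|^2\le B$ makes the same approximation with an explicit remainder, which is a small improvement in precision.
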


We generalize the characterization of $\Wst$ obtained from \citet[Corollary~1]{li2024finegrainedanalysisincontextlinear} to rank-deficient covariance matrices, and its loss with respect to the new task parameter $\betatt$ is provided in Appendix \ref{app sect generalcovariance}.  \\
Note that the eigenvalues of $\Wst$ are smaller than $1/(n+1)$ as $\bSibt$ is a positive semi-definite matrix. This condition ensures the stability of the linear attention model described in $\eqref{eq:linear attention}$ as the magnitude of the $\SM$ output scales with the context length $n$ when $\W$ is fixed. Assuming a similar criterion for the set of $\W$, we provide the optimal step size parameter and the corresponding test-time training loss. 

\begin{restatable}{theorem}{nonisooptimalstep}\label{theo:non-iso-opt-step}
Let $n/d = \Theta(1)$, $\bSix = \Iden$, and $\sigma^2 = 0$. Suppose that the initial weight matrix  $\W$ is a diagonal matrix whose eigenvalues are in the interval $[0, \frac{1}{n+1}]$.\footnote{In the joint diagonalizability case, the restriction on $\W$ covers the space of $\W$ that is jointly diagonal with $\bSibt$ and $\bSix$.} Then, the optimal step size that minimizes the population loss given in \eqref{eq:pop-loss-new-task} after the test-time training update is
\begin{align*}
\etast \approx \frac{A}{2 (k+d) \, n^2 \, \| \betatt \|_2^2 \left( A + B \right)}.
\end{align*}
With this optimal step-size $\etast$, the improvement in the loss 
due to test-time training and the initial loss are approximately
\begin{align*}
\Lc(\W) - \Lctt(\Wnew) \approx \frac{k}{k+d} \frac{A^2}{A + B} \ , \quad \Lc(\W) \approx A + B.
\end{align*}
\end{restatable}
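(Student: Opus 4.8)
The plan is to compute the expected test-time population loss $\Lctt(\Wnew)$ as an explicit function of the step size $\eta$, minimize over $\eta$, and then substitute back. By Proposition \ref{prop:one-step-update}, $\Wnew = \W + 2\eta\, \Xtrain^\top(\Ytrain - \Xtrain \W \uprompt)\uprompt^\top$, which is a rank-one perturbation of $\W$ in the direction $\uprompt = \Xprompt^\top \Yprompt$. First I would plug this into the population loss \eqref{eq:pop-loss-new-task}. Using the linear model $y = \x^\top\betatt + \xi$ with $\sigma^2=0$ and $\bSix = \Iden$, a fresh query prompt $\Z$ has $\x \sim \Nn(0,\Iden)$ independent of its $n$-token context, so the population loss of any fixed $\W'$ takes the bias-variance form $\Lc(\W') = \betatt^\top(\Iden - n\W')^2\betatt + n\|\betatt\|_2^2\|\W'\|_F^2 + (\text{lower order})$ — this is exactly the content of Proposition \ref{prop:Wstcharacterizationnonisot}, whose proof I would reuse. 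Expanding $\Lc(\Wnew)$ in $\eta$ then gives a quadratic $\Lctt(\Wnew) \approx \Lc(\W) - 2\eta\, c_1 + 4\eta^2 c_2$, where $c_1 = \E_{\Sctt}[\langle \text{something}\rangle]$ is the first-order (descent) coefficient and $c_2$ the second-order (curvature) coefficient; the optimal step size is $\etast = c_1/(4 c_2)$ and the improvement is $c_1^2/(4 c_2)$.

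The bulk of the work is evaluating $c_1$ and $c_2$, which are expectations over $\Sctt$, i.e.\ over the context block $(\Xprompt,\Yprompt)$ (shared) and the $k$ query rows $(\Xtrain,\Ytrain)$. I would first take the expectation over the fresh query token and over $(\Xtrain,\Ytrain)$ conditionally on the context — these are Gaussian with $\Ytrain = \Xtrain\betatt$, so moments up to fourth order in $\Xtrain$ appear, producing terms like $\E[\Xtrain^\top\Xtrain] = k\Iden$ and $\E[\|\Xtrain v\|^2] = k\|v\|^2$ for fixed $v$. The surviving quantities are then functions of the context only, through $\uprompt$ and the scalar $\betatt^\top(\Iden-n\W)(\text{proj along }\uprompt)$. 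The remaining context expectations require the law of $\uprompt = \Xprompt^\top\Xprompt\betatt$: here $\E[\uprompt] = n\betatt$, $\E[\|\uprompt\|^2] \approx n(n+d)\|\betatt\|^2$, $\E[\uprompt\uprompt^\top] \approx n^2\betatt\betatt^\top + n(n+d)\|\betatt\|^2\cdot(\text{something})$, using Wishart moment identities for $\Xprompt^\top\Xprompt$. Keeping only leading-order terms as $n,d\to\infty$ with $n/d = \Theta(1)$, the coefficients collapse to $c_1 \propto n^2 A\,\|\betatt\|^2$ scaled by $k$, and $c_2 \propto n^2(k+d)\,\|\betatt\|^2(A+B)$, at which point $\etast \approx A/(2(k+d)n^2\|\betatt\|^2(A+B))$ and the improvement $\approx \frac{k}{k+d}\frac{A^2}{A+B}$ fall out, with $A,B$ as in Definition \ref{defn:AB} evaluated at the initial $\W$. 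The claim $\Lc(\W)\approx A+B$ is immediate from Proposition \ref{prop:Wstcharacterizationnonisot} applied to a general diagonal $\W$ with eigenvalues in $[0,\tfrac1{n+1}]$.

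The main obstacle is the concentration/approximation bookkeeping in the context expectations: $\uprompt$ enters $\Lc(\Wnew)$ both linearly and quadratically (inside $\Xtrain\W\uprompt$ and $\uprompt^\top$ factors), so naively one needs up to the fourth moment of $\Xprompt^\top\Xprompt$, i.e.\ eighth-order moments of a Gaussian matrix, and the cross terms between "signal" directions (along $\betatt$) and "bulk" directions must be tracked carefully to see which are $\Theta(1)$ versus vanishing after normalization. The cleanest route is to decompose $\uprompt = n\betatt + \bdel$ with $\bdel$ a fluctuation term, control $\E[\bdel] = 0$, $\E[\|\bdel\|^2] \approx nd\|\betatt\|^2$, and $\E[\bdel\bdel^\top]$, and argue that higher moments of $\bdel$ contribute only to the lower-order terms absorbed by the $\approx$; this is the same Gaussian-moment machinery underpinning Proposition \ref{prop:Wstcharacterizationnonisot} and Theorem \ref{theo:iso-opt}, so I would set it up once as a lemma and invoke it. The diagonal structure of $\W$ and $\bSibt$ is used only at the very end, to identify the scalar projections with $A$ and $B$; everything upstream is dimension-free.
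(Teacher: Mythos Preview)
Your proposal is correct and follows essentially the same route as the paper. The paper likewise writes the improvement as a quadratic in $\eta$, uses the exact population-loss formula (their Lemma \ref{lem:pop-loss-expression}, which gives your $\Lc(\W')\approx\betatt^\top(\Iden-n\W')^2\betatt + n\|\betatt\|^2\|\W'\|_F^2$ form), integrates out $\Xtrain$ via Wishart second moments (their Lemma \ref{lem:XX-M-XX}), and handles the context expectation by exactly the decomposition you suggest---writing $\Xprompt^\top\Xprompt\,\betatt = n\betatt + \sqrt{n}\,\g + (\text{lower order})$ with $\g\sim\Nn(\boldsymbol{0},\|\betatt\|^2\Iden)$, formalized as their Gaussian-approximation Lemma \ref{lem:gaussian_approximation}. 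Your stated scalings for $c_1,c_2$ are slightly loose (the curvature term $c_2$ carries an extra $k\,n^2\,\|\betatt\|_2^2$), but the final $\etast$ and improvement you quote are correct, so this is just sketch-level imprecision rather than a gap.
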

A more general and equivalent version of this result, which applies to any pair of jointly diagonalizable (not necessarily diagonal) covariance matrices $\bSix$ and $\bSibt$ and to any $\W$ that is also jointly diagonalizable with them, can be found in \Cref{app sect generalcovariance}. The above approximations are robust and become exact as \(n, d \to \infty\) while having a fixed ratio \(n/d\); see Remark~\ref{rem:validity-of-approximations}. Theorem \ref{theo:non-iso-opt-step} characterizes the improvement due to test-time training for a rather general class of $\W$. It shows that, for fixed $\|\W\|_F^2$, the loss improvement increases monotonically with $A$. In other words, when the misalignment measure $A$ is larger, test-time training achieves greater performance gains. Consequently, considering the optimal pre-trained $\Wst$, if the eigenvalue spectrum of $\bSibt$ aligns with the entries of the task $\betatt$ (i.e., larger eigenvalues match larger entries), then $A$ is larger and thus yields a bigger improvement. 

Furthermore, we note that the optimal $\Lc(\W)$ over all $\W \in \R^{d \times d}$ is $\mathcal{O}(n^{-1})$ by Lemma \ref{lem:opt-w-new-task}. This optimal loss behavior can be achieved through test-time training with the initial parameter satisfying $n \| \W \|_{2} < 1 - \delta(n,d)$ for some fixed $\delta(n,d) > 0$ as $k/d$ approaches infinity by Theorem \ref{theo:non-iso-opt-step}. This means that test-time training achieves the optimal solution when we have zero or small initialization of $\W$ as $k/d \to \infty$. 

\begin{figure*}[t!]
     \centering
     \begin{subfigure}[b]{0.48\textwidth}
         \centering
         \includegraphics[width=\textwidth]{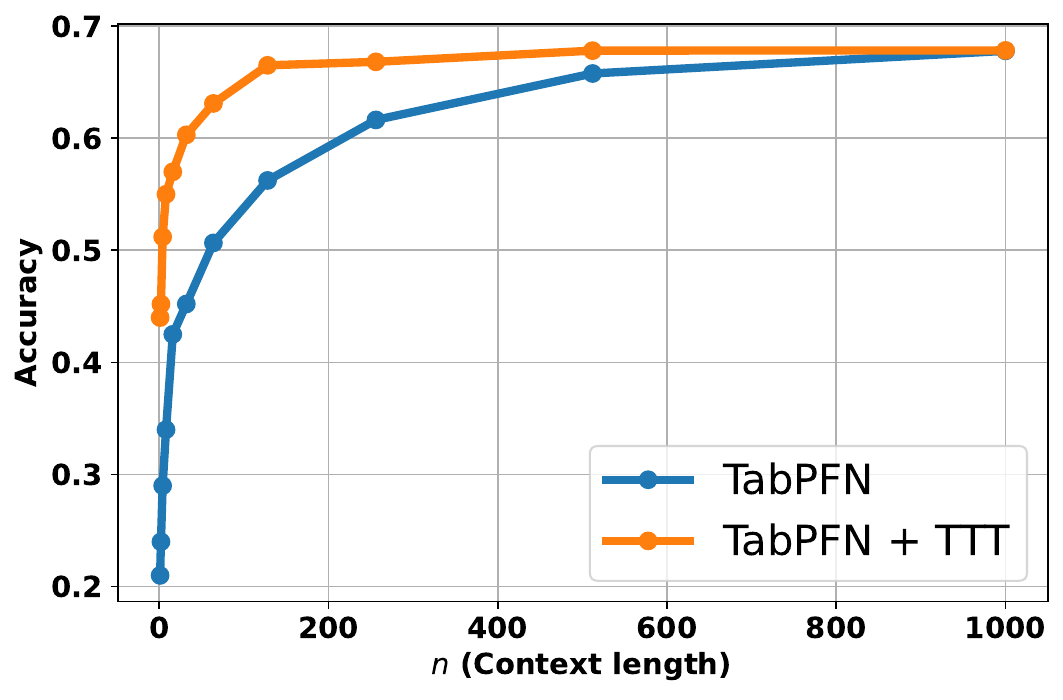}
         \caption{Accuracy of TabPFN v2 model with and without test-time-training as a function of number of in-context samples $n$.}
         \label{fig:TabPFN}
     \end{subfigure}
     \hfill
     \begin{subfigure}[b]{0.48\textwidth}
         \centering
         \includegraphics[width=\textwidth]{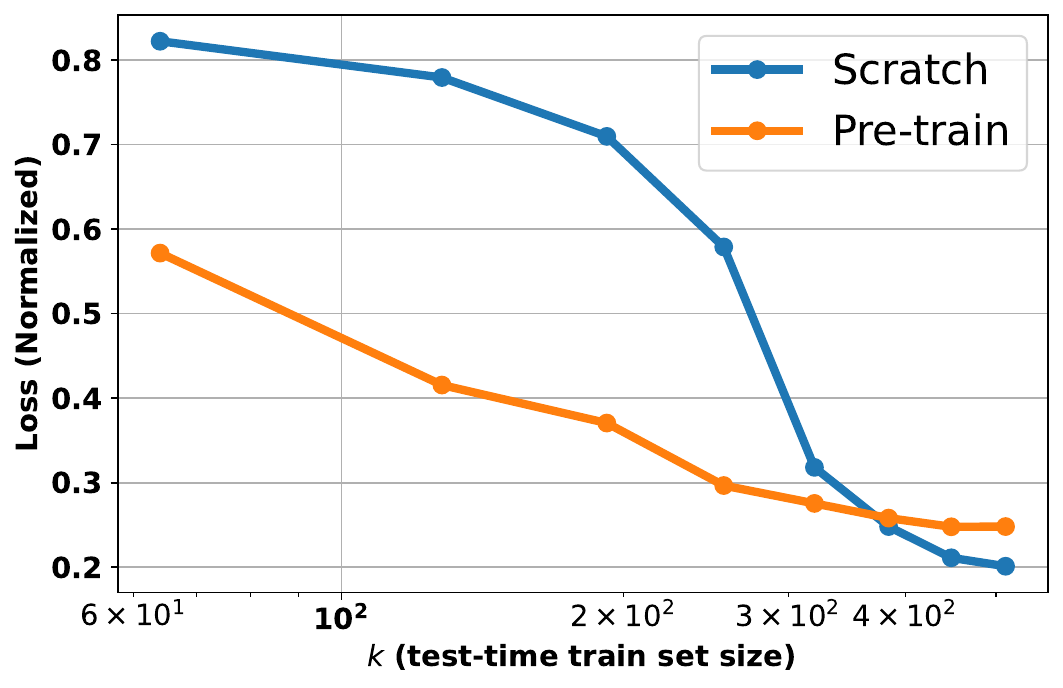}
         \caption{Losses after test-time-training for GPT2 comparing pre-trained and scratch models as a function of test-time training set size $k$.}
         \label{fig:gpt2}
     \end{subfigure}
     \caption{\textbf{(a):} Accuracy of TabPFN v2 with and without test-time training. \textbf{Setting:} $k$ = 1000; $n$ changing between $1$ and $1000$. \textbf{(b):} Normalized loss after test-time-training with pre-training and zero initialization. \textbf{Setting:} $d=60$; $n=40$; $k$ changing between $64$ and $512$ in increments of $64$; $\sigma^2 = 0.01$; $\bSibt = \text{diag}(0.1 \cdot \Iden_{25}, 0.5 \cdot \Iden_{10}, \Iden_{25})$; $\bSix = \Iden$. We sample $\betatt$ from the distribution $\Nn(\boldsymbol{0}, \bSi_{\betatt})$ where $\bSi_{\betatt} = \text{diag}(\Iden_{25},0.5 \cdot \Iden_{10},0.1 \cdot \Iden_{25})$.  }
     \label{fig:real-experiments}
\end{figure*}

For the remainder of this section, we focus on the optimal pre-trained $\Wst$ constructed in \Cref{prop:Wstcharacterizationnonisot}, i.e.\ the minimum-Frobenius-norm solution of the pre-training loss~\eqref{eq:pop-loss-pre-train}. When we initialize the pre-trained model weight as $\Wst$, the loss $\Lctt(\Wnew)$ can be computed by combining Proposition \ref{prop:Wstcharacterizationnonisot} and Theorem \ref{theo:non-iso-opt-step}. When we initialize the weights as $\W = \boldsymbol{0}_{d \times d}$, then the loss $\Lctt(\Wnew)$ is obtained by using the fact that $\Lc(\boldsymbol{0}_{d \times d}) = \| \betatt \|_2^2$. By combining these results, we identify the phase transition point in the non-isotropic task covariance setting, which determines when test-time training outperforms training from scratch. 
\begin{restatable}{corollary}{phasetransitionnonisotropic}
\label{corol:phase-transition-non-isotropic}
Consider the setting in Theorem \ref{theo:non-iso-opt-step}. Recall the definitions of $\alpha = n/d$ and $\gamma = k/d$ and define $c_1 = \dfrac{A}{\| \betatt \|_2^2}$, $c_2 = \dfrac{B}{\| \betatt \|_2^2}$. Then, there exists a phase transition point $\gamma^{\star}
~\approx~
\dfrac{(c_1 + c_2) - (c_1 + c_2)^2}{c_2 \, (2c_1 + c_2)}$
such that $\gamma < \gamma^\star$ if and only if it is better to utilize the pre-trained $\Wst$ over the null initialization 
$\boldsymbol{0}_{d\times d}$.
\end{restatable}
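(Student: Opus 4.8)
The plan is to directly compare the two post-TTT losses supplied by Theorem \ref{theo:non-iso-opt-step} and locate the value of $\gamma$ at which they cross. Starting from the pre-trained $\Wst$ of Proposition \ref{prop:Wstcharacterizationnonisot}, Theorem \ref{theo:non-iso-opt-step} gives $\Lctt(\Wnew^{\mathrm{pt}}) \approx (A+B) - \tfrac{\gamma}{\gamma+1}\tfrac{A^2}{A+B}$, since $k/(k+d) = \gamma/(\gamma+1)$ and $\Lc(\Wst)\approx A+B$. Starting from $\W = \boldsymbol{0}_{d\times d}$, we have $A_0 = \betatt^\top\betatt = \|\betatt\|_2^2$ and $B_0 = 0$, so the same theorem yields $\Lctt(\Wnew^{0}) \approx \|\betatt\|_2^2 - \tfrac{\gamma}{\gamma+1}\,\|\betatt\|_2^2 = \tfrac{1}{\gamma+1}\|\betatt\|_2^2$. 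Dividing everything by $\|\betatt\|_2^2$ and using $c_1 = A/\|\betatt\|_2^2$, $c_2 = B/\|\betatt\|_2^2$, the pre-trained initialization is preferable exactly when
\[
(c_1+c_2) - \frac{\gamma}{\gamma+1}\,\frac{c_1^2}{c_1+c_2} \;<\; \frac{1}{\gamma+1}.
\]

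The second step is to solve this inequality for $\gamma$. Multiplying through by $(\gamma+1)(c_1+c_2) > 0$ turns it into an affine inequality in $\gamma$: the left side becomes $(\gamma+1)(c_1+c_2)^2 - \gamma c_1^2$ and the right side becomes $(c_1+c_2)$. Collecting the $\gamma$ terms gives $\gamma\big[(c_1+c_2)^2 - c_1^2\big] < (c_1+c_2) - (c_1+c_2)^2$. Since $(c_1+c_2)^2 - c_1^2 = c_2(2c_1+c_2) > 0$ (note $c_2 > 0$ whenever $\W = \Wst \ne 0$, which is the interesting regime), we may divide to obtain the stated threshold
\[
\gamma^\star \;\approx\; \frac{(c_1+c_2) - (c_1+c_2)^2}{c_2\,(2c_1+c_2)},
\]
and $\gamma < \gamma^\star$ is equivalent to the pre-trained initialization being better. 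One should also record the consistency check that $\gamma^\star > 0$, which holds precisely when $c_1 + c_2 < 1$, i.e.\ when $\Lc(\Wst) < \|\betatt\|_2^2$ — the pre-trained model already beats the null predictor before TTT; if instead $c_1+c_2 \ge 1$ the threshold is non-positive and zero initialization is always at least as good, which is consistent with the interpretation in the text.

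The only real subtlety — rather than an obstacle — is bookkeeping about the approximation symbols: all the expressions from Theorem \ref{theo:non-iso-opt-step} are the $n,d\to\infty$ limits with $n/d$ fixed, so the crossing point $\gamma^\star$ inherits the same "$\approx$" and becomes exact in that limit (as already flagged via Remark \ref{rem:validity-of-approximations}). I would state this explicitly. A secondary point worth a sentence is that for the zero-initialization branch we are invoking Theorem \ref{theo:non-iso-opt-step} with $\W = \boldsymbol{0}$, whose eigenvalues trivially lie in $[0,\tfrac{1}{n+1}]$, so the hypotheses are met and $A_0,B_0$ are computed directly from Definition \ref{defn:AB}; alternatively one can cite the isotropic zero-init computation $\Lc(\boldsymbol{0}_{d\times d}) = \|\betatt\|_2^2$ used already in the proof of Corollary \ref{corol:phase-transition-isotropic}. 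No high-moment computations are needed here — all the heavy lifting was done in Theorem \ref{theo:non-iso-opt-step}; this corollary is purely an algebraic comparison.
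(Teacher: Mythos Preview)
Your proposal is correct and follows essentially the same approach as the paper's proof: both invoke Theorem~\ref{theo:non-iso-opt-step} for each initialization, subtract to get the two post-TTT losses, and then solve the resulting inequality for $\gamma$ by elementary algebra. Your added remarks (that $\W=\boldsymbol{0}$ trivially satisfies the eigenvalue hypothesis, and the consistency check $\gamma^\star>0 \iff c_1+c_2<1$) are not in the paper's proof but are correct and helpful.
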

In a similar way as Corollary \ref{corol:iso-opt}, 
the phase transition point mentioned above is approximate for finite $k,n$, and $d$, and it becomes exact when $k,n,$ and $d$ approach infinity while keeping the ratios $\alpha$ and $\gamma$ fixed. 

We illustrate the findings of Corollary \ref{corol:phase-transition-non-isotropic} in Figures \ref{fig:non-isotropic pretrain scratch} and \ref{fig:non-isotropic perfect bad align} and make the following three observations: \emph{(i)} The phase transition point is monotonic as a function of $c_1$, while keeping $c_2$ fixed, which implies that it is monotonic as a function of the misalignment term $A$. This means that, as we increase the alignment between the task covariance $\bSibt$ and the new task parameter $\betatt$, the crossing point of the green and blue curves occurs later than that of the green and orange curves, as observed in Figure \ref{fig:non-isotropic pretrain scratch}. \emph{(ii)} The worst-aligned case over the set of all possible $(\bSibt, \betatt)$ pairs is the one that satisfies $\betatt^\top \bSibt \betatt = 0$. In this case, the corresponding $c_1$ is equal to $1$, which implies that the phase transition point in Corollary \ref{corol:phase-transition-non-isotropic} is less than $0$ as $c_2 > 0$. This means that training from scratch is always better than utilizing the test-time training for the worst-aligned case, which is depicted in Figure \ref{fig:non-isotropic perfect bad align} as the orange curve. \emph{(iii)} The best-aligned case over the set of all possible $(\bSibt, \betatt)$ pairs is the one that satisfies $\betatt^\top \bSibt \betatt / \tr{\bSibt} = \| \betatt \|_2^2$. In this case, the corresponding $c_1$ scales on the order of $n^{-2}$ and $c_2$ on the order of $n^{-1}$, which in turn makes the phase transition point from Corollary \ref{corol:phase-transition-non-isotropic} approximately $n$. When $k,n,$ and $d$ all grow while maintaining the ratios $\alpha = n/d$ and $\gamma=k/d$ fixed, this phase transition point approaches infinity. As a result, test-time training is always better than training from scratch in the best-aligned case, which is depicted in Figure \ref{fig:non-isotropic perfect bad align} as the blue curve.



\section{Experiments}\label{sect:Experiments}
In this section, we provide empirical results in light of our theoretical insights from previous sections. The first discussion of this section focuses on whether further computation via multi-step gradient descent at test time yields significant improvements beyond a single-step update. Then, we focus on tabular learning by demonstrating how test-time training reduces context length for in-context learning on TabPFN. Finally, we demonstrate how the pre-trained model behaves against the scratch model under the test-time-training update for the GPT-2 model in the presence of a distribution shift.

\subsection{Multi-Step Gradient Updates}\label{sect:multiplegradient}
In Figure \ref{fig:non-isotropic multi grad}, we investigate the benefits of multiple gradient steps by choosing the optimal single-step size $\eta$ based on Theorem~\ref{theo:iso-opt}, and then applying different decay factors on step size after each subsequent step. We note that reducing the step size each time is necessary to ensure continued improvement of the loss. The results in Figure~\ref{fig:non-isotropic multi grad} confirm that a single-step test-time-training update can capture significant improvement, whereas additional steps yield diminishing returns compared to the initial step. Consequently, the single-step gradient update offers a favorable trade-off with less computation yet with the performance near multi-step finetuning.

\subsection{Experiments on TabPFN}\label{sect:tabpfn}
We demonstrate that test-time training (TTT) can significantly reduce the context length required for a given performance on TabPFN v2~\cite{tabpfnv2}. Specifically, we evaluate the TabPFN v2 model on the The Tremendous TabLib Trawl (T4) dataset~\cite{gardner2024large} for a more comprehensive evaluation, which is a large-scale high-quality collection of tabular benchmarks. Following the official TabPFN v2 implementation and our theoretical setup, we select the datasets containing at least 1,250 samples (with 1,000 for training, using an 80–20 split), limit the number of classes to 10 by choosing the most frequent ones, and restrict datasets to 100 randomly selected features. We also convert regression tasks into 10-class classification tasks based on quantiles to maintain consistency across training and evaluation. For each selected dataset from T4, we evaluate TabPFN v2 with context length $n$ varying from 1 to 1000. For the TabPFN (blue curve in Figure \ref{fig:TabPFN}), we directly load the pre-trained model and vary the context window length during evaluation. In contrast, for TabPFN+TTT (orange curve in Figure \ref{fig:TabPFN}), we finetune the model using different context lengths with $k=1000$ samples. As the context length $n$ decreases, the samples are divided into  $1000 / n$  groups, where each group undergoes $50$ training iterations. We then report the average performance across all datasets with enough training samples. 

In the previous sections, we have theoretically shown that TTT reduces the required in-context sample size for queries from a new task by performing only a single adaptation step, thus enhancing the efficiency of inference. As empirical corroboration of this, \Cref{fig:TabPFN} illustrates that TabPFN with test-time-training allows the model with $200$ in-context samples to almost match the performance of the TabPFN without test-time-training with $1000$ in-context samples. This means that test-time training reduces the number of required samples for tabular classification by about $5$ times. It is worth emphasizing that the sample complexity benefit is more evident near the zero-shot regime as TTT helps the model memorize new task dynamics by improving the accuracy from 0.2 to 0.45. A log-scaled version of \Cref{fig:TabPFN} is provided in Appendix~\ref{app sect experiments} (Figure~\ref{fig:TabPFN_log}) for a clearer view of performance gains across different scales of $n$.


\subsection{Experiments on GPT-2}\label{sect:gpt}
We demonstrate how the pre-trained model compares against the scratch model under test-time training with distribution shift using GPT-2 architecture \cite{radford2019language}, as shown in \Cref{fig:gpt2}. Our results are in agreement with \Cref{fig:isotropic pretrain scratch}.

Throughout the experiments, we consider the linear model with Gaussian data following Section \ref{sect problem setup}. In order to obtain a pre-trained model, we sample multiple tasks from the distribution with task covariance $\bSibt$ and train the model from scratch on these tasks until convergence. In contrast, in the scratch setting, the model is initialized randomly using the GPT-2 architecture without any pre-training. During test-time training, we evaluate the model on new task parameters sampled from the distribution $\bSi_{\betatt}$, where the covariance structure is provided in the caption of Figure \ref{fig:gpt2}. We then apply test-time training to both models by varying the test-time training set size and updating all parameters of the GPT-2 model. The results are averaged over all the new tasks sampled from $\bSi_{\betatt}$. Finally, we repeat the entire procedure three times using different random seeds and report the averaged results.

The results in \Cref{fig:gpt2} show that using a pre-trained model with TTT is more advantageous when the test-time training set size is small. However, as the training set size increases, training from scratch outperforms the pre-trained model, aligning with Figure \ref{fig:isotropic pretrain scratch}.

\section{Discussion}
We have developed a theoretical framework to characterize how a single-step gradient update at test time enhances in-context learning. In the case of an isotropic covariance matrix, we analyzed the improvement in loss under test-time training as a function of the number of in-context samples (\(n\)), embedding dimension (\(d\)), and test-time training set size (\(k\)). We then extended our analysis to the non-isotropic covariance case, examining how the alignment between the new task parameter (\(\betatt\)) and the task covariance matrix (\(\bSibt\)) affects performance. Our results reveal a phase-transition threshold on \(k\) in both isotropic and non-isotropic settings, beyond which zero initialization (training from scratch) can outperform pre-trained initialization. Empirical results align well with our theoretical findings, and together they underscore that single-step test-time training offers significant performance improvements at low computational costs. Overall,  our findings highlight test-time training as a lightweight, supervised enhancement to standard in-context learning.

As a future direction, one can investigate the analysis of TTT under a more general architecture and data model. Possible extensions to the architecture model might be to analyze the TTT with softmax attention or multilayer attention, whereas possible extensions to the data model might be to analyze K-Fold cross-validation instead of the current 1-Fold validation approach. Additionally, investigating TTT in unsupervised or semi-supervised in-context learning (ICL) settings presents another promising direction for extending this work. We hope that our results will serve as a foundation for future research on the interaction of TTT and ICL methods.


\section*{Acknowledgements}

H.A.G., M.E.I., X.Z., and S.O.~were supported in part by the NSF grants CCF2046816, CCF-2403075, CCF-2008020, and the Office of Naval Research grant N000142412289. M.~M.\ is funded by the European Union (ERC, INF$^2$, project number 101161364). Views and opinions expressed are, however,
those of the author(s) only and do not necessarily reflect those of the European Union or the European Research Council Executive Agency. Neither the European Union nor the granting authority can be held responsible for them. M.S. is supported by the Packard Fellowship in Science and Engineering, a Sloan Research Fellowship in Mathematics, an NSF-CAREER under award \#1846369, DARPA FastNICS program, and NSF-CIF awards \#1813877 and \#2008443, and NIH DP2LM014564-01. The authors also acknowledge further support from Open Philanthropy, OpenAI, Amazon Research, Google Research, and Microsoft Research.


\bibliography{example_paper}

@inproceedings{li2024finegrainedanalysisincontextlinear,
 author = {Li, Yingcong and Rawat, Ankit Singh and Oymak, Samet},
 booktitle = {Advances in Neural Information Processing Systems},
 editor = {A. Globerson and L. Mackey and D. Belgrave and A. Fan and U. Paquet and J. Tomczak and C. Zhang},
 pages = {138324--138364},
 publisher = {Curran Associates, Inc.},
 title = {Fine-grained Analysis of In-context Linear Estimation: Data, Architecture, and Beyond},
 url = {https://proceedings.neurips.cc/paper_files/paper/2024/file/f9dc462382fef56d58279e75de2438f3-Paper-Conference.pdf},
 volume = {37},
 year = {2024}
}

@misc{ahn2023transformerslearnimplementpreconditioned,
      title={Transformers learn to implement preconditioned gradient descent for in-context learning}, 
      author={Kwangjun Ahn and Xiang Cheng and Hadi Daneshmand and Suvrit Sra},
      year={2023},
      eprint={2306.00297},
      archivePrefix={arXiv},
      primaryClass={cs.LG},
      url={https://arxiv.org/abs/2306.00297}, 
}

@InProceedings{pmlr-v202-li23l,
  title = 	 {Transformers as Algorithms: Generalization and Stability in In-context Learning},
  author =       {Li, Yingcong and Ildiz, Muhammed Emrullah and Papailiopoulos, Dimitris and Oymak, Samet},
  booktitle = 	 {Proceedings of the 40th International Conference on Machine Learning},
  pages = 	 {19565--19594},
  year = 	 {2023},
  volume = 	 {202},
  series = 	 {Proceedings of Machine Learning Research},
  month = 	 {23--29 Jul},
  publisher =    {PMLR},
  pdf = 	 {https://proceedings.mlr.press/v202/li23l/li23l.pdf},
  url = 	 {https://proceedings.mlr.press/v202/li23l.html}
}

@misc{zhang2023trainedtransformerslearnlinear,
      title={Trained Transformers Learn Linear Models In-Context}, 
      author={Ruiqi Zhang and Spencer Frei and Peter L. Bartlett},
      year={2023},
      eprint={2306.09927},
      archivePrefix={arXiv},
      primaryClass={stat.ML},
      url={https://arxiv.org/abs/2306.09927}, 
}

@misc{mahankali2023stepgradientdescentprovably,
      title={One Step of Gradient Descent is Provably the Optimal In-Context Learner with One Layer of Linear Self-Attention}, 
      author={Arvind Mahankali and Tatsunori B. Hashimoto and Tengyu Ma},
      year={2023},
      eprint={2307.03576},
      archivePrefix={arXiv},
      primaryClass={cs.LG},
      url={https://arxiv.org/abs/2307.03576}, 
}

@article{tabpfnv2,
	abstract = {Tabular data, spreadsheets organized in rows and columns, are ubiquitous across scientific fields, from biomedicine to particle physics to economics and climate science1,2. The fundamental prediction task of filling in missing values of a label column based on the rest of the columns is essential for various applications as diverse as biomedical risk models, drug discovery and materials science. Although deep learning has revolutionized learning from raw data and led to numerous high-profile success stories3--5, gradient-boosted decision trees6--9 have dominated tabular data for the past 20 years. Here we present the Tabular Prior-data Fitted Network (TabPFN), a tabular foundation model that outperforms all previous methods on datasets with up to 10,000 samples by a wide margin, using substantially less training time. In 2.8 s, TabPFN outperforms an ensemble of the strongest baselines tuned for 4 h in a classification setting. As a generative transformer-based foundation model, this model also allows fine-tuning, data generation, density estimation and learning reusable embeddings. TabPFN is a learning algorithm that is itself learned across millions of synthetic datasets, demonstrating the power of this approach for algorithm development. By improving modelling abilities across diverse fields, TabPFN has the potential to accelerate scientific discovery and enhance important decision-making in various domains.},
	author = {Hollmann, Noah and M{\"u}ller, Samuel and Purucker, Lennart and Krishnakumar, Arjun and K{\"o}rfer, Max and Hoo, Shi Bin and Schirrmeister, Robin Tibor and Hutter, Frank},
	da = {2025/01/01},
	date-added = {2025-01-29 23:19:59 -0500},
	date-modified = {2025-01-29 23:19:59 -0500},
	doi = {10.1038/s41586-024-08328-6},
	id = {Hollmann2025},
	isbn = {1476-4687},
	journal = {Nature},
	number = {8045},
	pages = {319--326},
	title = {Accurate predictions on small data with a tabular foundation model},
	ty = {JOUR},
	url = {https://doi.org/10.1038/s41586-024-08328-6},
	volume = {637},
	year = {2025},
	Bdsk-Url-1 = {https://doi.org/10.1038/s41586-024-08328-6}}

@article{gardner2024large,
  title={Large Scale Transfer Learning for Tabular Data via Language Modeling},
  author={Gardner, Josh and Perdomo, Juan C and Schmidt, Ludwig},
  journal={arXiv preprint arXiv:2406.12031},
  year={2024}
}

@article{li2025gating,
  title={Gating is Weighting: Understanding Gated Linear Attention through In-context Learning},
  author={Li, Yingcong and Tarzanagh, Davoud Ataee and Rawat, Ankit Singh and Fazel, Maryam and Oymak, Samet},
  journal={arXiv preprint arXiv:2504.04308},
  year={2025}
}

@article{gandelsman2022test,
  title={Test-time training with masked autoencoders},
  author={Gandelsman, Yossi and Sun, Yu and Chen, Xinlei and Efros, Alexei},
  journal={Advances in Neural Information Processing Systems},
  volume={35},
  pages={29374--29385},
  year={2022}
}

@inproceedings{yuan2023robust,
  title={Robust test-time adaptation in dynamic scenarios},
  author={Yuan, Longhui and Xie, Binhui and Li, Shuang},
  booktitle={Proceedings of the IEEE/CVF Conference on Computer Vision and Pattern Recognition},
  pages={15922--15932},
  year={2023}
}

@article{sun2024learning,
  title={Learning to (learn at test time): Rnns with expressive hidden states},
  author={Sun, Yu and Li, Xinhao and Dalal, Karan and Xu, Jiarui and Vikram, Arjun and Zhang, Genghan and Dubois, Yann and Chen, Xinlei and Wang, Xiaolong and Koyejo, Sanmi and others},
  journal={arXiv preprint arXiv:2407.04620},
  year={2024}
}

@article{kojima2022large,
  title={Large language models are zero-shot reasoners},
  author={Kojima, Takeshi and Gu, Shixiang Shane and Reid, Machel and Matsuo, Yutaka and Iwasawa, Yusuke},
  journal={Advances in neural information processing systems},
  volume={35},
  pages={22199--22213},
  year={2022}
}

@article{akyurek2024surprising,
  title={The surprising effectiveness of test-time training for abstract reasoning},
  author={Aky{\"u}rek, Ekin and Damani, Mehul and Qiu, Linlu and Guo, Han and Kim, Yoon and Andreas, Jacob},
  journal={arXiv preprint arXiv:2411.07279},
  year={2024}
}

@inproceedings{weifinetuned,
  title={Finetuned Language Models are Zero-Shot Learners},
  author={Wei, Jason and Bosma, Maarten and Zhao, Vincent and Guu, Kelvin and Yu, Adams Wei and Lester, Brian and Du, Nan and Dai, Andrew M and Le, Quoc V},
  booktitle={International Conference on Learning Representations},
  year={2022}
}

@inproceedings{finn2017model,
  title={Model-agnostic meta-learning for fast adaptation of deep networks},
  author={Finn, Chelsea and Abbeel, Pieter and Levine, Sergey},
  booktitle={International conference on machine learning},
  pages={1126--1135},
  year={2017},
  organization={PMLR}
}

@article{min2022rethinking,
  title={Rethinking the role of demonstrations: What makes in-context learning work?},
  author={Min, Sewon and Lyu, Xinxi and Holtzman, Ari and Artetxe, Mikel and Lewis, Mike and Hajishirzi, Hannaneh and Zettlemoyer, Luke},
  journal={arXiv preprint arXiv:2202.12837},
  year={2022}
}

@inproceedings{sun2020test,
  title={Test-time training with self-supervision for generalization under distribution shifts},
  author={Sun, Yu and Wang, Xiaolong and Liu, Zhuang and Miller, John and Efros, Alexei and Hardt, Moritz},
  booktitle={International conference on machine learning},
  pages={9229--9248},
  year={2020},
  organization={PMLR}
}

@article{brown2020language,
  title={Language models are few-shot learners},
  author={Brown, Tom and Mann, Benjamin and Ryder, Nick and Subbiah, Melanie and Kaplan, Jared D and Dhariwal, Prafulla and Neelakantan, Arvind and Shyam, Pranav and Sastry, Girish and Askell, Amanda and others},
  journal={Advances in neural information processing systems},
  volume={33},
  pages={1877--1901},
  year={2020}
}

@article{jaech2024openai,
  title={Openai o1 system card},
  author={Jaech, Aaron and Kalai, Adam and Lerer, Adam and Richardson, Adam and El-Kishky, Ahmed and Low, Aiden and Helyar, Alec and Madry, Aleksander and Beutel, Alex and Carney, Alex and others},
  journal={arXiv preprint arXiv:2412.16720},
  year={2024}
}

@article{snell2024scaling,
  title={Scaling llm test-time compute optimally can be more effective than scaling model parameters},
  author={Snell, Charlie and Lee, Jaehoon and Xu, Kelvin and Kumar, Aviral},
  journal={arXiv preprint arXiv:2408.03314},
  year={2024}
}

@inproceedings{niu2022efficient,
  title={Efficient test-time model adaptation without forgetting},
  author={Niu, Shuaicheng and Wu, Jiaxiang and Zhang, Yifan and Chen, Yaofo and Zheng, Shijian and Zhao, Peilin and Tan, Mingkui},
  booktitle={International conference on machine learning},
  pages={16888--16905},
  year={2022},
  organization={PMLR}
}

@article{liu2021ttt,
  title={Ttt++: When does self-supervised test-time training fail or thrive?},
  author={Liu, Yuejiang and Kothari, Parth and Van Delft, Bastien and Bellot-Gurlet, Baptiste and Mordan, Taylor and Alahi, Alexandre},
  journal={Advances in Neural Information Processing Systems},
  volume={34},
  pages={21808--21820},
  year={2021}
}

@article{wu2023many,
  title={How Many Pretraining Tasks Are Needed for In-Context Learning of Linear Regression?},
  author={Wu, Jingfeng and Zou, Difan and Chen, Zixiang and Braverman, Vladimir and Gu, Quanquan and Bartlett, Peter L},
  journal={arXiv preprint arXiv:2310.08391},
  year={2023}
}

@inproceedings{lu2024context,
  title={In-Context Learning by Linear Attention: Exact Asymptotics and Experiments},
  author={Lu, Yue and Letey, Mary and Zavatone-Veth, Jacob A and Maiti, Anindita and Pehlevan, Cengiz},
  booktitle={NeurIPS 2024 Workshop on Mathematics of Modern Machine Learning},
  year={2024}
}

@inproceedings{xie2022an,
title={An Explanation of In-context Learning as Implicit Bayesian Inference},
author={Sang Michael Xie and Aditi Raghunathan and Percy Liang and Tengyu Ma},
booktitle={International Conference on Learning Representations},
year={2022},
url={https://openreview.net/forum?id=RdJVFCHjUMI}
}

@article{ahn2024transformers,
  title={Transformers learn to implement preconditioned gradient descent for in-context learning},
  author={Ahn, Kwangjun and Cheng, Xiang and Daneshmand, Hadi and Sra, Suvrit},
  journal={Advances in Neural Information Processing Systems},
  volume={36},
  year={2023}
}

@article{zhang2024trained,
  title={Trained Transformers Learn Linear Models In-Context},
  author={Zhang, Ruiqi and Frei, Spencer and Bartlett, Peter L},
  journal={Journal of Machine Learning Research},
  volume={25},
  number={49},
  pages={1--55},
  year={2024}
}

@inproceedings{mahankali2024one,
title={One Step of Gradient Descent is Provably the Optimal In-Context Learner with One Layer of Linear Self-Attention},
author={Arvind V. Mahankali and Tatsunori Hashimoto and Tengyu Ma},
booktitle={The Twelfth International Conference on Learning Representations},
year={2024},
url={https://openreview.net/forum?id=8p3fu56lKc}
}

@article{liu2023pre,
  title={Pre-train, prompt, and predict: A systematic survey of prompting methods in natural language processing},
  author={Liu, Pengfei and Yuan, Weizhe and Fu, Jinlan and Jiang, Zhengbao and Hayashi, Hiroaki and Neubig, Graham},
  journal={ACM Computing Surveys},
  volume={55},
  number={9},
  pages={1--35},
  year={2023},
  publisher={ACM New York, NY}
}

@article{wangtent,
  title={tent: fully test-time adaptation by entropy minimization},
  author={Wang, Dequan and Shelhamer, Evan and Liu, Shaoteng and Olshausen, Bruno and Darrell, Trevor},
  year=2021
}

@article{garg2022can,
  title={What can transformers learn in-context? a case study of simple function classes},
  author={Garg, Shivam and Tsipras, Dimitris and Liang, Percy S and Valiant, Gregory},
  journal={Advances in Neural Information Processing Systems},
  volume={35},
  pages={30583--30598},
  year={2022}
}

@article{agarwal2024many,
  title={Many-shot in-context learning},
  author={Agarwal, Rishabh and Singh, Avi and Zhang, Lei M and Bohnet, Bernd and Rosias, Luis and Chan, Stephanie and Zhang, Biao and Anand, Ankesh and Abbas, Zaheer and Nova, Azade and others},
  journal={arXiv preprint arXiv:2404.11018},
  year={2024}
}

@misc{tabpfnv1,
      title={TabPFN: A Transformer That Solves Small Tabular Classification Problems in a Second}, 
      author={Noah Hollmann and Samuel Müller and Katharina Eggensperger and Frank Hutter},
      year={2023},
      eprint={2207.01848},
      archivePrefix={arXiv},
      primaryClass={cs.LG},
      url={https://arxiv.org/abs/2207.01848}, 
}

@misc{Cookbook,
    author       = "K. B. Petersen and M. S. Pedersen",
    title        = "The Matrix Cookbook",
    year         = "2012",
    month        = "nov",
    keywords     = "Matrix identity, matrix relations, inverse, matrix derivative",
    publisher    = "Technical University of Denmark",
    address      = "",
    note         = "Version 20121115",
    url          = "http://www2.compute.dtu.dk/pubdb/pubs/3274-full.html",
    abstract     = "Matrix identities, relations and approximations. A desktop reference for quick overview of mathematics of matrices."
}

@article{Chandrasekaran_2012,
   title={The Convex Geometry of Linear Inverse Problems},
   volume={12},
   ISSN={1615-3383},
   url={http://dx.doi.org/10.1007/s10208-012-9135-7},
   DOI={10.1007/s10208-012-9135-7},
   number={6},
   journal={Foundations of Computational Mathematics},
   publisher={Springer Science and Business Media LLC},
   author={Chandrasekaran, Venkat and Recht, Benjamin and Parrilo, Pablo A. and Willsky, Alan S.},
   year={2012},
   month=oct, pages={805–849} }

@article{radford2019language,
  title   = {Language Models are Unsupervised Multitask Learners},
  author  = {Radford, Alec and Wu, Jeff and Child, Rewon and Luan, David and 
             Amodei, Dario and Sutskever, Ilya},
  journal = {OpenAI Technical Report},
  year    = {2019},
  url     = {https://cdn.openai.com/better-language-models/language_models_are_unsupervised_multitask_learners.pdf}
}

@misc{thomas2024retrievalfinetuningincontext,
      title={Retrieval \& Fine-Tuning for In-Context Tabular Models}, 
      author={Valentin Thomas and Junwei Ma and Rasa Hosseinzadeh and Keyvan Golestan and Guangwei Yu and Maksims Volkovs and Anthony Caterini},
      year={2024},
      eprint={2406.05207},
      archivePrefix={arXiv},
      primaryClass={cs.LG},
      url={https://arxiv.org/abs/2406.05207}, 
}

@misc{hardt2024testtimetrainingnearestneighbors,
      title={Test-Time Training on Nearest Neighbors for Large Language Models}, 
      author={Moritz Hardt and Yu Sun},
      year={2024},
      eprint={2305.18466},
      archivePrefix={arXiv},
      primaryClass={cs.CL},
      url={https://arxiv.org/abs/2305.18466}, 
}

@misc{hübotter2025efficientlylearningtesttimeactive,
      title={Efficiently Learning at Test-Time: Active Fine-Tuning of LLMs}, 
      author={Jonas Hübotter and Sascha Bongni and Ido Hakimi and Andreas Krause},
      year={2025},
      eprint={2410.08020},
      archivePrefix={arXiv},
      primaryClass={cs.LG},
      url={https://arxiv.org/abs/2410.08020}, 
}
\bibliographystyle{icml2025}

\newpage
\appendix
\onecolumn

\section{Proofs for Section \ref{sect problem setup}}\label{app sect two}

\onestepgradientupdate*

\begin{proof}
We derive the update on the weight matrix \(\W\) via a single gradient step using the loss function defined over the set \(\Ztrain = [\Xtrain \; \Ytrain] \in \R^{k \times (d+1)}\). 
Note that $\SMi{\Z_{i}}{\W} = \xb_{n+i}^\top \, \W \Xprompt^\top \Yprompt = \xb_{n+i}^\top \, \W \uprompt$. Then, for the training set $\Ztrain$, the vector of predicted values under $\W$ is:
\[
  \qquad \hat{\y}_{\text{train}}(\W) = \Xtrain \W \uprompt \in \R^{k}.
\]
Recall the objective given by \eqref{eq:train-loss}:
\[
\hat{\Lc}(\W) \;=\;\sum_{i=n+1}^{n+k} \left(\ybb_i -  \SMi{\Z_{i}}{\W} \right)^2 = \sum_{i=n+1}^{n+k} (\ybb_i - \xb_{i}^\top \W \uprompt)^2 = \| \Ytrain - \Xtrain \W \uprompt \|_2^2.
\]
The gradient of \(\mathcal{\hat{L}}\) w.r.t.\ \(\W\) is then obtained by differentiating the summation:
\[
\nabla_{\W}\,\hat{\Lc}
\;=\;-2 \sum_{i=n+1}^{n+k} \left(\ybb_i  - \xb_i^\top\,\W\,\uprompt\right)\;\xb_i\,\uprompt^\top = -2 \Xtrain^\top (\Ytrain - \Xtrain\W \uprompt) \, \uprompt^\top.
\]
This is a rank-$1$ update as $\left( \Xtrain^\top (\Ytrain - \Xtrain\W \uprompt) \right) \uprompt^\top = \boldsymbol{p} \boldsymbol{q}^\top$ is a rank-$1$ matrix. Evaluated at \(\W\), the update after one gradient step with step size \(\eta>0\) becomes:
\begin{align*}
\Wnew & = \W - \eta\,\nabla_{\W}\,\hat{\Lc}(\W) \\
& = \W + 2\eta \, \Xtrain^\top (\Ytrain - \Xtrain\W \uprompt) \, \uprompt^\top.
\end{align*}
This completes the proof.
\end{proof}

\section{Proofs for Section \ref{sect:main results}}\label{app sect isotropic}

\begin{lemma}[Validity of Gaussian Approximation -- Isotropic Case]\label{lem:gaussian_approximation}
Let $\X\in \R^{n\times d}$ have i.i.d $\Nc(0,1)$ entries and let $\tn{\w}=1$. Define $\qb=\frac{1}{n}\X^\top\X\w$. $\qb$ can be written as $\qb=\w+\g+\eb$ such that $\g\sim\Nn(0,\Iden_d/n)$ and $\eb$ is a residual random variable that obeys 
\[ 
\E[\tn{\eb}^2]\leq 9\cdot\frac{n+d}{n^2}.
\]
Note that $\eb$ represents a lower order term as we have $\tn{\w}=1$ and $\E[\tn{\g}^2]=d/n$.
\end{lemma}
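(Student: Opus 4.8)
The plan is to decompose $\qb = \frac{1}{n}\X^\top\X\w$ by isolating the ``diagonal'' contribution that behaves like Gaussian fluctuations around $\w$. Write $\X^\top\X = \sum_{i=1}^n \x_i\x_i^\top$ where $\x_i$ are the i.i.d.\ rows, so that $\qb = \frac1n\sum_{i=1}^n (\x_i^\top\w)\,\x_i$. Conditioning on the scalars $a_i := \x_i^\top\w \sim \Nn(0,1)$, decompose each row as $\x_i = a_i\w + \x_i^{\perp}$, where $\x_i^{\perp}$ is the component of $\x_i$ orthogonal to $\w$; crucially $\x_i^\perp$ is independent of $a_i$ and is an $(d-1)$-dimensional standard Gaussian living in $\w^\perp$. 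Substituting gives
\begin{align*}
\qb = \Big(\frac1n\sum_{i=1}^n a_i^2\Big)\w + \frac1n\sum_{i=1}^n a_i\,\x_i^{\perp}.
\end{align*}
The first term is $\w$ plus $\big(\frac1n\sum a_i^2 - 1\big)\w$, a scalar fluctuation of order $1/\sqrt n$; the second term is, conditionally on $(a_i)$, a Gaussian vector in $\w^\perp$ with covariance $\big(\frac1{n^2}\sum a_i^2\big)\Iden_{\w^\perp}$.

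Next I would engineer the exact split $\qb = \w + \g + \eb$ with $\g\sim\Nn(0,\Iden_d/n)$. The natural choice is to let $\g$ absorb a ``clean'' Gaussian piece: take $\g = g_{\parallel}\w + \g_{\perp}$ where $g_\parallel\sim\Nn(0,1/n)$ is constructed from the $a_i$'s (e.g.\ via a monotone coupling / quantile transform of $\frac1n\sum a_i^2-1$ to a centered Gaussian, or simply by adding an independent correction) and $\g_\perp$ is a genuine $\Nn(0,\Iden_{\w^\perp}/n)$ coupled to $\frac1n\sum a_i\x_i^\perp$. Then $\eb := \qb - \w - \g$ collects (i) the discrepancy between the empirical scalar $\frac1n\sum a_i^2-1$ and its Gaussian surrogate $g_\parallel$, and (ii) the discrepancy between the conditionally-Gaussian vector $\frac1n\sum a_i\x_i^\perp$ (whose conditional variance is the random quantity $\frac1{n^2}\sum a_i^2$) and the fixed-variance Gaussian $\g_\perp$. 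For (i), a Berry--Esseen / Wasserstein bound on the $\chi^2_n$-type sum gives $\E[(\text{discrepancy})^2] = \order{1/n^2}\cdot\tn{\w}^2$. For (ii), conditionally on $(a_i)$ one couples two Gaussians with covariances $\frac1{n^2}(\sum a_i^2)\Iden$ and $\frac1n\Iden$ on a space of dimension $d-1$; the optimal ($W_2$) coupling yields conditional expected squared error $(d-1)\big(\sqrt{\tfrac1{n^2}\sum a_i^2}-\sqrt{\tfrac1n}\big)^2 = \tfrac{d-1}{n}\big(\sqrt{\tfrac1n\sum a_i^2}-1\big)^2$, and taking expectation over $(a_i)$ and using $\E\big[(\tfrac1n\sum a_i^2-1)^2\big]=2/n$ together with $(\sqrt u-1)^2\le (u-1)^2$ gives a contribution of order $\tfrac{d}{n^2}$.

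Summing the two contributions, $\E[\tn{\eb}^2] \le \order{1/n^2} + \order{d/n^2} = \order{(n+d)/n^2}$, and a careful tracking of the constants (the $2/n$ variance of the $\chi^2$ fluctuation, plus the factor from $(\sqrt u - 1)^2 \le (u-1)^2$, plus cross terms bounded by Cauchy--Schwarz) yields the stated constant $9$. I would present this by first stating the decomposition, then bounding the parallel and perpendicular residuals separately as two displayed lemMath-style estimates, then combining via $\tn{\eb}^2 \le 2\tn{\eb_\parallel}^2 + 2\tn{\eb_\perp}^2$ (or directly, since the two pieces are orthogonal by construction, $\tn{\eb}^2 = \tn{\eb_\parallel}^2+\tn{\eb_\perp}^2$, which is cleaner and gives a smaller constant). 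The main obstacle is the coupling in step (ii): one must couple the conditionally-Gaussian vector to a fixed-variance Gaussian in a way that is jointly measurable and yields a clean second-moment bound — the cleanest route is to define $\g_\perp := \sqrt{\tfrac1n}\,\big/\,\sqrt{\tfrac1{n^2}\sum a_i^2}\ \cdot\ \frac1n\sum a_i\x_i^\perp$ (a rescaling that is exactly $\Nn(0,\Iden_{\w^\perp}/n)$ conditionally on $(a_i)$, hence unconditionally too), so that $\eb_\perp = \big(1-\sqrt{\tfrac1n}/\sqrt{\tfrac1{n^2}\sum a_i^2}\big)\frac1n\sum a_i\x_i^\perp$ and its squared norm factorizes into the scalar gap times $\tn{\frac1n\sum a_i\x_i^\perp}^2$, the latter having conditional expectation $\tfrac{d-1}{n^2}\sum a_i^2$; then everything reduces to moment computations for sums of i.i.d.\ $\Nn(0,1)$ squares. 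A secondary technical point is handling the $g_\parallel$ coupling while keeping $\g$ exactly $\Nn(0,\Iden_d/n)$ and independent across the $\parallel/\perp$ directions — this is arranged by drawing $g_\parallel$ via an independent auxiliary randomization (inverse-CDF coupling of the empirical fluctuation to a Gaussian), which does not affect the second-moment bookkeeping.
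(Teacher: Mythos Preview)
Your decomposition is essentially the paper's, recast in row notation: writing $a_i=\x_i^\top\w$ and $\x_i^\perp$ is exactly the paper's split $\X^\top=\w\hb^\top+\Pb\X^\top$ with $\hb=\X\w$ and $\Pb=\Iden-\w\w^\top$. Your rescaled perpendicular Gaussian $\g_\perp=\frac{\sqrt{n}}{\|a\|}\cdot\frac{1}{n}\sum_i a_i\x_i^\perp$ is literally the perpendicular component of the paper's $\g$, obtained by introducing an auxiliary $\vb\sim\Nn(0,\Iden_n)$, setting $\X'^\top=\Pb\X^\top+\w\vb^\top$, and defining $\g=\sqrt{n}\,\X'^\top\hb/\|\hb\|$ (before dividing by $n$).

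The only substantive difference is the parallel direction. The paper simply takes the parallel part of $\g$ to be the auxiliary piece $\frac{1}{\sqrt n}\w\cdot(\vb^\top\hb/\|\hb\|)$, an \emph{independent} $\Nn(0,1/n)$ scalar times $\w$, and dumps both this and the full chi-square fluctuation $(\frac{1}{n}\|\hb\|^2-1)\w$ into $\eb$; no Berry--Esseen is needed. This is your ``or simply by adding an independent correction'' alternative. Your primary proposal, a quantile coupling of $\frac{1}{n}\sum a_i^2-1$ to $\Nn(0,1/n)$, also works but your rate claim is off by one order: the chi-square fluctuation has variance $2/n$, not $1/n$, so even the optimal coupling to $\Nn(0,1/n)$ incurs squared error $\Theta(1/n)$, not $O(1/n^2)$. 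This is harmless for the final bound since $1/n=n/n^2$ sits inside $9(n+d)/n^2$, but the parallel contribution is not lower order as you suggest.

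On constants: the paper gets $9$ via the crude splitting $\|\eb'\|^2\le 3\|\rb_1\|^2+3\|\rb_2\|^2+3(\|\hb\|^2-n)^2$. Your observation that the parallel and perpendicular residuals live in orthogonal subspaces (and, in the paper's version, that $\rb_2$ is independent of $\|\hb\|^2$) would in fact give a sharper constant than the paper records.
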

\begin{proof} Set $\hb=\X\w\sim\Nn(0,\Iden_n)$. Let $\Pb=\Iden-\w\w^\top$. Decompose $\X^\top=\w\w^\top \X^\top+\Pb\X^\top$ and observe that $\Pb\X^\top$ is independent of $\w\w^\top \X^\top=\w\hb^\top$. Additionally, introduce independent $\vb\sim\Nn(0,\Iden_{n})$ and set $\X'^\top=\Pb\X^\top+\w\vb^\top$. Finally, set $\g=\sqrt{n}\X'^\top\hb/\tn{\hb}$ and 
\[
\rb=\Pb\X^\top\hb-\g=\underbrace{\Pb\X^\top(\hb-\sqrt{n}\hb/\tn{\hb})}_{\rb_1}-\underbrace{\sqrt{n}\w\vb^\top\hb/\tn{\hb}}_{\rb_2}.
\] 
To proceed, observe that $\X'$ has i.i.d $\Nn(0,1)$ entries and hence $\g\sim\Nn(0,n\Iden_d)$. We can write 
\begin{align}
\X^\top\X\w&=\X^\top\hb=\w\tn{\hb}^2+\Pb\X^\top\hb=\w\tn{\hb}^2+\g+\rb\\
&=n\w+\g+\rb+(\tn{\hb}^2-n)\w.
\end{align}
Let us now set $\eb'=\rb+(\tn{\hb}^2-n)\w$ and investigate $\E[\tn{\eb'}^2]$. Recalling $\rb=\rb_1-\rb_2$, we can apply the standard upper bounds
\begin{align}
\E[\tn{\eb'}^{2}]&\leq 3\E[(\tn{\hb}^2-n)^2]+3\E[\tn{\rb_1}^2]+3\E[\tn{\rb_2}^2].
\end{align}
Next, from standard facts about chi-square random variable with $n$ degrees of freedom, we have $\E[(\tn{\hb}^2-n)^2]=2n$. Similarly, set $\tilde{\hb}=\hb-\sqrt{n}\hb/\tn{\hb}$. Note that $\E[\tn{\tilde{\hb}}^2]=\E[(\tn{\hb}-\sqrt{n})^2]=2n-2\sqrt{n}\E[\tn{\hb}]\leq 2$. This step uses the fact that $\E[\tn{\hb}]= \frac{\sqrt{2} \, \Gamma(\frac{n+1}{2})}{\Gamma(\frac{n}{2})} \geq \frac{n}{\sqrt{n} + 1}$, which can be obtained by induction as argued in Section 3.1 of \citet{Chandrasekaran_2012}. Furthermore, considering that $\X'$ is the sum of two orthogonal vectors, we obtain
\[ 
\E[\tn{\rb_1}^2]=\E[\tn{\Pb\X^\top(\hb-\sqrt{n}\hb/\tn{\hb})}^2]\leq\E[\tn{\X'^\top(\hb-\sqrt{n}\hb/\tn{\hb})}^2]=d\E[\tn{\tilde{\hb}}^2]\leq 2d.
\] 
Finally, set $z=\vb^\top\hb/\tn{\hb}\sim\Nn(0,1)$. We bound 
\[ 
\E[\tn{\rb_2}^2]\leq \E[\tn{\sqrt{n}\w z}^2]=n.
\] 
Aggregating these, we obtain 
\[ 
\E[\tn{\eb'}^2]\leq 3(2n+2d+n)=9n+6d.
\] 

To conclude, set $\eb=\eb'/n$. After normalizing $\X^\top\X\w$ by $n$ and the change of variable $\g\gets \g/n$ with $\g\sim\Nn(0,\Iden_d/n)$, we obtain
\[ 
n^{-1}\X^\top\X\w=\w+\g+\eb,
\] 
such that $\E[\tn{\eb}^2]=\E[\tn{\eb'}^2]/n^2\leq 9/n+6d/n^2$.
\end{proof}

\begin{restatable}[Population Loss Expression]{lemma}{poplossexpression}\label{lem:pop-loss-expression}
Consider the linear attention model as the sequence model described in Section \ref{sect problem setup} with weight matrix $\W$. Suppose that the new task during the test-time training is $\betatt$. In that case, the population loss of the model in  \eqref{eq:pop-loss-new-task} with respect to this task is given by:
\begin{align*}
\Lc(\W) & = \E_{(\x,y), (\x_i, y_i) \sim \Dcztt(\betatt)}\left[ \left( y - \x^\top \W \X^\top \Y \right)^2 \right] \\
& = \betatt^\top \left[ \bSix
\;-\;
n \bSix \,\W\, \bSix
\;-\;
n \bSix \W^{\top} \,\,\bSix
\;+\; n (n+1) \bSix \W^{\top} \bSix \W \bSix + n \tr{\W^{\top} \bSix \W \bSix} \bSix \right] \betatt + \sigma^2 n \tr{\W^{\top} \bSix \W \bSix} + \sigma^2.
\end{align*}
\end{restatable}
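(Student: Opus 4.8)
The statement is a direct moment computation, so the plan is to expand the square $(y - \x^\top \W \X^\top \Y)^2$ and take the expectation over all the randomness: the query pair $(\x,y)$, the context inputs $\X$, and the context noise (while treating $\betatt$ as fixed). Writing $y = \x^\top \betatt + \xi$ and $\Y = \X\betatt + \bxi$ with $\bxi \sim \Nn(\boldsymbol 0, \sigma^2 \Iden_n)$, the prediction becomes $\x^\top \W \X^\top(\X\betatt + \bxi)$, so the residual splits as $\xi + \x^\top\betatt - \x^\top \W \X^\top\X\betatt - \x^\top\W\X^\top\bxi$. Expanding the square gives three groups of terms: pure-signal terms (no $\xi$, no $\bxi$), terms linear in the noises (which vanish in expectation since $\xi$ and $\bxi$ are zero-mean and independent of $\x,\X$), and pure-noise terms. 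I would organize the proof around these groups.

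\textbf{Key steps.} First, handle the trivial pieces: $\E[\xi^2] = \sigma^2$ contributes the additive $\sigma^2$; the cross term $\E[\xi^2]$ (appearing when $\bxi$ multiplies itself through $\X^\top\bxi$) and the term $\sigma^2\,\E[\x^\top\W\X^\top\bxi\,\bxi^\top\X\W^\top\x] = \sigma^2 \E[\x^\top\W\X^\top\X\W^\top\x]$ gives $\sigma^2 n\,\tr{\W^\top\bSix\W\bSix}$ after using $\E[\X^\top\X] = n\bSix$ and $\E[\x\x^\top]=\bSix$. Second, for the pure-signal part, I need $\E\big[(\x^\top\betatt - \x^\top\W\X^\top\X\betatt)^2\big]$. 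Conditioning on $\X$, this is $\betatt^\top(\Iden - n^{-1}... )$—more precisely, using $\E_\x[\x\x^\top]=\bSix$, it equals $\betatt^\top \bSix \betatt - 2n\,\betatt^\top\bSix\W\bSix\betatt$ (from the cross term, using $\E[\X^\top\X]=n\bSix$) plus the quadratic term $\E[\betatt^\top\X^\top\X\W^\top\x\x^\top\W\X^\top\X\betatt] = \betatt^\top\,\E[\X^\top\X\,\W^\top\bSix\W\,\X^\top\X]\,\betatt$. The symmetrization $\bSix\W + \bSix\W^\top$ in the displayed formula comes from writing the cross term symmetrically.

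\textbf{Main obstacle.} The crux is the fourth-moment computation $\E[\X^\top\X\,\M\,\X^\top\X]$ for a fixed matrix $\M = \W^\top\bSix\W$, where $\X$ has i.i.d.\ rows $\x_i \sim \Nn(\boldsymbol 0, \bSix)$. Writing $\X^\top\X = \sum_i \x_i\x_i^\top$, one gets $\sum_{i,j}\E[\x_i\x_i^\top\M\x_j\x_j^\top]$; the $i\neq j$ terms ($n(n-1)$ of them) each give $\bSix\M\bSix$, while the $n$ diagonal terms require the Gaussian fourth-moment identity $\E[\x\x^\top\M\x\x^\top] = \bSix\M\bSix + \bSix\M^\top\bSix + \tr{\M\bSix}\bSix$ (Isserlis/Wick). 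Summing yields $n(n+1)\bSix\M\bSix + n\bSix\M^\top\bSix + n\tr{\M\bSix}\bSix$; since $\M=\W^\top\bSix\W$ is symmetric here this simplifies, and sandwiching with $\betatt$ produces exactly the $n(n+1)\betatt^\top\bSix\W^\top\bSix\W\bSix\betatt + n\,\tr{\W^\top\bSix\W\bSix}\,\betatt^\top\bSix\betatt$ terms in the claim. I would state the fourth-moment identity explicitly (or cite it) and then just collect terms; the bookkeeping is routine once that identity is in hand.
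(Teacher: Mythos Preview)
Your proposal is correct and follows essentially the same route as the paper: expand the residual into signal and noise pieces, note the linear-in-noise cross terms vanish, and reduce the only nontrivial computation to the Gaussian fourth moment $\E[\X^\top\X\,\M\,\X^\top\X]$ via Isserlis/Wick (the paper isolates this as a standalone lemma, while you derive it inline). One small slip: before invoking symmetry of $\M=\W^\top\bSix\W$, the sum of the $n(n-1)$ off-diagonal terms and $n$ diagonal terms gives $n^2\,\bSix\M\bSix + n\,\bSix\M^\top\bSix + n\,\tr{\M\bSix}\bSix$, not $n(n+1)$ in the first coefficient; the $n(n+1)$ appears only after you use $\M=\M^\top$ to merge the first two terms, so your final expression is still right.
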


\begin{proof}
Since the target function \( f \) is linear, the labels are given by:
\[
y_i = f(\x_i) + \xi_i = \betatt^\top \x_i + \xi_i.
\]
We collect the samples into \( \X \in \R^{n \times d} \) and \(\Y \in \R^n\) as before, so that:
\[
\Y = \X \betatt + \boldsymbol{\xi},
\]
where \(\boldsymbol{\xi} = (\xi_1,\dots,\xi_n)^\top\) represents noise on the context labels. Recall the prediction by the model:
\[
\SMi{\Z}{\W} = \x^\top \W \X^\top \Y.
\]
Then, we have the following population loss for the new task $\betatt$:
\[
\Lc(\W) = \E_{(\x,y), (\x_i, y_i) \sim \Dcztt(\betatt)} \left[ (\x^\top \betatt - \x^\top \W \X^\top \Y )^2 \right] + \sigma^2.
\]
Since \(\Y = \X \betatt + \boldsymbol{\xi}\), we have:
\[
\SMi{\Z}{\W} = \x^\top \W \X^\top \X \betatt + \x^\top \W \X^\top \boldsymbol{\xi}.
\]
The error is:
\begin{align*}
\x^\top \betatt - \x^\top \W \X^\top \X \betatt - \x^\top \W \X^\top \boldsymbol{\xi} & = \x^\top (\betatt - \W \X^\top \X \betatt) - \x^\top \W \X^\top \boldsymbol{\xi} \\
& = \x^\top (\Iden - \W \X^\top \X) \, \betatt - \x^\top \W \X^\top \boldsymbol{\xi}.
\end{align*}
Therefore, the population loss is equal to:
\begin{align*}
\Lc(\W) & = \E_{(\x,y), (\x_i, y_i) \sim \Dcztt(\betatt)} \left[ \left( \x^\top (\Iden - \W \X^\top \X) \, \betatt - \x^\top \W \X^\top \boldsymbol{\xi} \right)^2 \right] + \sigma^2 \\
& = \E_{\X} \left[ \E_{\x, \boldsymbol{\xi}} \left[ \left( \x^\top (\Iden - \W \X^\top \X) \, \betatt - \x^\top \W \X^\top \boldsymbol{\xi} \right)^2 \right] \right] + \sigma^2
\end{align*}
We will first take the expectation with respect to query $\x$ and noise vector $\boldsymbol{\xi}$. Then, using independence, we will take the expectation of the resulting expression with respect to the context matrix $\X$. Expanding the square gives
\[
\left( \x^\top (\Iden - \W \X^\top \X) \, \betatt - \x^\top \W \X^\top \boldsymbol{\xi} \right)^2 = (\x^\top (\Iden - \W \X^\top \X) \, \betatt)^2 
- 2 (\x^\top (\Iden - \W \X^\top \X) \, \betatt)(\x^\top \W \X^\top \boldsymbol{\xi}) 
+ (\x^\top \W \X^\top \boldsymbol{\xi})^2.
\]
Now, let's consider each of these three terms individually. First of all, the expectation of cross-term is:
\[
\E_{\x,\boldsymbol{\xi}}[- 2 (\x^\top (\Iden - \W \X^\top \X) \, \betatt)(\x^\top \W \X^\top \boldsymbol{\xi})] = 0.
\]
Besides, since the first term does not depend on $\boldsymbol{\xi}$, we have:
\[
\E_{\x,\boldsymbol{\xi}}[(\x^\top (\Iden - \W \X^\top \X) \, \betatt)^2] = \betatt^\top (\Iden - \W \X^\top \X)^\top \bSix (\Iden - \W \X^\top \X) \, \betatt.
\]
Consider now the last term \((\x^\top \W \X^\top \boldsymbol{\xi})^2\). Recall that \(\boldsymbol{\xi}\) is zero-mean noise with \(\E[\boldsymbol{\xi} \boldsymbol{\xi}^\top] = \sigma^2 I_n\), and \(\E[\x \x^\top] = \bSix\). First, condition on \(\x\):
\[
(\x^\top \W \X^\top \boldsymbol{\xi})^2 = \boldsymbol{\xi}^\top (\X \W^\top \x \x^\top \W \X^\top) \boldsymbol{\xi}.
\]
Taking expectation over \(\boldsymbol{\xi}\):
\[
\E_{\boldsymbol{\xi}}[(\x^\top \W \X^\top \boldsymbol{\xi})^2 \mid \x] = \sigma^2 \tr{\X \W^\top \x \x^\top \W \X^\top}.
\]
Use the cyclic property of trace and the fact that \(\x \x^\top\) is rank one:
\[
\tr{\X \W^\top \x \x^\top \W \X^\top} = \tr{\x \x^\top \W \X^\top \X \W^\top} = \x^\top (\W \X^\top \X \W^\top) \x.
\]
Thus, we obtain:
\[
\E_{\boldsymbol{\xi}} [(\x^\top \W \X^\top \boldsymbol{\xi})^2 \mid \x] = \sigma^2 \x^\top (\W \X^\top \X \W^\top) \x.
\]
Taking expectation over \(\x\) yields:
\[
\E_{\x}[\x^\top (\W \X^\top \X \W^\top) \x] = \tr{\W \X^\top \X \W^\top \bSix},
\]
since \(\E[\x \x^\top] = \bSix\). Combining both expectations:
\[
\E_{\x, \boldsymbol{\xi}} [(\x^\top \W\X^\top \boldsymbol{\xi})^2]= \sigma^2 \tr{\W\X^\top \X\W^\top \bSix}.
\]
Thus, combining the three terms yields the following expression:
\begin{align*}
\betatt^\top (\Iden - \W \X^\top \X)^\top \bSix (\Iden - \W \X^\top \X) \, \betatt + \sigma^2 \tr{\W\X^\top \X\W^\top \bSix}
\end{align*}
Now, we aim to compute the loss function averaged over all possible in-context example sets $\X$:
\[
\Lc(\W) = \E_\X \left[\,
\betatt^\top 
\,\left(\Iden - \W\, \X^\top \X\right)^\top
\,\bSix
\,\left(\Iden - \W\, \X^\top \X\right)
\,\betatt
+ \sigma^2 \tr{\W\X^\top \X\W^\top \bSix} \right] + \sigma^2.
\]
Rewrite the first expression inside the expectation as
\[
\betatt^\top \, \A^\top \,\bSix \, \A \,\betatt
\quad\text{where}\quad
\A = \Iden - \W\,\X^\top \X.
\]
Then
\[
\E_\X\left[\betatt^\top \A^\top \bSix \, \A\,\betatt\right]
\;=\;
\betatt^\top \left(\E_\X[\A^\top \bSix \A]\right) \betatt.
\]
We can write
\[
\A^\top \bSix \A 
\;=\;
\left(\Iden - \X^\top \X \W^{\top} \right) \bSix \left(\Iden - \W \X^\top \X \right).
\]
Hence
\[
\E_\X[\A\,\bSix\,\A]
\;=\;
\bSix
\;-\;
\bSix \,\W \,\E[\X^\top \X]
\;-\;
\E [\X^\top \X] \W^{\top} \,\,\bSix
\;+\;
\E[(\X^\top \X)\,\W^{\top} \bSix \W \,(\X^\top \X)].
\]
We know that \(\E [\X^\top \X] = n\,\bSix\), thus, our expression becomes:
\[
\E_\X[\A\,\bSix\,\A]
\;=\;
\bSix
\;-\;
n \bSix \,\W \, \bSix
\;-\;
n \bSix \W^{\top} \,\,\bSix
\;+\;\E[(\X^\top \X)\,\W^{\top} \bSix \W \,(\X^\top \X)].
\]
Using Lemma \ref{lem:XX-M-XX}, we know the following fact:
\begin{align*}
\E_\X \left[( \X^\top \X)\,\M \,(\X^\top \X) \right] = n (n+1) \bSix \M \bSix + n \tr{\M \bSix} \bSix.
\end{align*}
Finally, the overall expression becomes:
\begin{align*}
\E_\X[\A\,\bSix\,\A]
\;=\;
\bSix
\;-\;
n \bSix \,\W\, \bSix
\;-\;
n \bSix \W^{\top} \,\,\bSix
\;+\;
 n (n+1) \bSix \W^{\top} \bSix \W \bSix + n \tr{\W^{\top} \bSix \W \bSix} \bSix. 
\end{align*}
Adding the constant terms to the expectation gives:
\begin{align*}
\betatt^\top \left[ \bSix
\;-\;
n \bSix \,\W\, \bSix
\;-\;
n \bSix \W^{\top} \,\,\bSix
\;+\; n (n+1) \bSix \W^{\top} \bSix \W \bSix + n \tr{\W^{\top} \bSix \W \bSix} \bSix \right] \betatt.   
\end{align*}
Now that we have calculated the expectation of the first term, let's focus on the noise term involving trace inside the expectation. We have:
\begin{align*}
\E_\X \left[ \tr{\W \X^\top \X\W^{\top} \bSix} \right] =  n \tr{\W^{\top} \bSix \W \bSix}.
\end{align*}
As a result, we arrive at the final expression:
\begin{align*}
\Lc(\W) & = \E_{\X, (\x,y)\sim\mathcal{D}}\left[ \left( (\x^\top \betatt + \xi - \x^\top \W \X^\top \Y \right)^2 \right] \\
& = \betatt^\top \left[ \bSix
\;-\;
n \bSix \,\W\, \bSix
\;-\;
n \bSix \W^{\top} \,\,\bSix
\;+\; n (n+1) \bSix \W^{\top} \bSix \W \bSix + n \tr{\W^{\top} \bSix \W \bSix} \bSix \right] \betatt \\
& \quad + \sigma^2 n \tr{\W^{\top} \bSix \W \bSix} + \sigma^2.
\end{align*}
\end{proof}

\begin{restatable}{lemma}{momentproof}\label{lem:XX-M-XX}
Let \(\X \in \mathbb{R}^{n \times d}\) be a random matrix whose rows \(\x_i\in \mathbb{R}^d\), for \(i=1,\dots,n\), are i.i.d.\ drawn from \(\mathcal{N}\left(\boldsymbol{0},\bSi\right)\).  Let \(\M \in \mathbb{R}^{d \times d}\) be any fixed symmetric matrix. Then, under these assumptions,
\[
\E_\X \left[ (\X^\top \X)\,\M \,(\X^\top \X)\right]
~=~
n (n+1)\,\bSi\,\M\,\bSi
~+~
n\,\tr{\M\,\bSi}\,\bSi.
\]
\end{restatable}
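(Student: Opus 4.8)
The plan is to compute the fourth-moment tensor of the Gaussian rows directly by expanding the product into a sum over row indices. Writing $\X^\top\X = \sum_{i=1}^n \x_i\x_i^\top$, we have
\[
(\X^\top\X)\,\M\,(\X^\top\X) = \sum_{i=1}^n\sum_{j=1}^n \x_i\,(\x_i^\top \M \x_j)\,\x_j^\top,
\]
so that $\E[(\X^\top\X)\M(\X^\top\X)] = \sum_{i,j}\E[\x_i(\x_i^\top\M\x_j)\x_j^\top]$. The summand depends only on whether $i=j$ or $i\neq j$, so I would split the double sum into the $n$ diagonal terms ($i=j$) and the $n(n-1)$ off-diagonal terms ($i\neq j$), and evaluate each type of expectation separately using independence and Isserlis' (Wick's) theorem for Gaussian vectors.

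For the off-diagonal terms, $\x_i$ and $\x_j$ are independent and zero-mean, so $\E[\x_i(\x_i^\top\M\x_j)\x_j^\top] = \E[\x_i\x_i^\top]\,\M\,\E[\x_j\x_j^\top] = \bSi\M\bSi$; summing over the $n(n-1)$ such pairs gives $n(n-1)\bSi\M\bSi$. For a diagonal term, I need $\E[\x(\x^\top\M\x)\x^\top]$ for a single $\x\sim\Nn(\boldsymbol{0},\bSi)$. This is the only genuinely non-trivial computation: it is a fourth moment of a Gaussian vector. I would handle it either by the standard reduction to the isotropic case (write $\x=\bSi^{1/2}\g$ with $\g\sim\Nn(\boldsymbol{0},\Iden)$ and use $\E[\g\g^\top C\g\g^\top] = C + C^\top + \tr{C}\,\Iden$ for fixed $C$, applied with $C = \bSi^{1/2}\M\bSi^{1/2}$), or directly via Isserlis in coordinates, $\E[x_a x_b x_c x_d] = \Sigma_{ab}\Sigma_{cd} + \Sigma_{ac}\Sigma_{bd} + \Sigma_{ad}\Sigma_{bc}$, and contracting against $M_{bc}$. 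Using symmetry of $\M$, both routes yield $\E[\x(\x^\top\M\x)\x^\top] = 2\,\bSi\M\bSi + \tr{\M\bSi}\,\bSi$; summing over the $n$ diagonal terms gives $2n\,\bSi\M\bSi + n\,\tr{\M\bSi}\,\bSi$.

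Adding the two contributions gives $\big(n(n-1) + 2n\big)\bSi\M\bSi + n\,\tr{\M\bSi}\,\bSi = n(n+1)\,\bSi\M\bSi + n\,\tr{\M\bSi}\,\bSi$, which is the claim. The main obstacle is purely the single-vector fourth-moment identity $\E[\x(\x^\top\M\x)\x^\top] = 2\bSi\M\bSi + \tr{\M\bSi}\bSi$; everything else is bookkeeping of which index pairs coincide. I would present the reduction $\x = \bSi^{1/2}\g$ explicitly so that the only Wick computation needed is the isotropic one, and remark that symmetry of $\M$ is what collapses the two cross terms $C$ and $C^\top$ into the factor $2$.
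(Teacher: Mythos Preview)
Your proposal is correct and follows essentially the same approach as the paper: expand $\X^\top\X = \sum_i \x_i\x_i^\top$, split the double sum into diagonal and off-diagonal terms, handle the off-diagonal terms by independence, and invoke the Gaussian fourth-moment identity $\E[\x(\x^\top\M\x)\x^\top] = 2\bSi\M\bSi + \tr{\M\bSi}\bSi$ for the diagonal terms. The only cosmetic difference is that the paper simply cites this identity as a standard consequence of Wick's theorem, whereas you outline how to derive it via the reduction $\x = \bSi^{1/2}\g$ or coordinate-wise Isserlis; either way the argument is the same.
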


\begin{proof}
We can write the matrix $\X^\top \X$ as:
\[
\X^\top \X
~=~
\sum_{i=1}^n \x_i\,\x_i^\top.
\]
Therefore,
\[
(\X^\top \X)\,\M\,(\X^\top \X)
~=~
\left(\sum_{i=1}^n \x_i \x_i^\top\right) \M \left(\sum_{j=1}^n \x_j \x_j^\top\right)
~=\;
\sum_{i=1}^n \sum_{j=1}^n \x_i\,\left(\x_i^\top \M\,\x_j\right)\,\x_j^\top.
\]
Taking expectation over \(\X\), we split the sum into the diagonal part (\(i=j\)) and the off-diagonal part (\(i\neq j\)):
\[
\E_\X \left[(\X^\top\X)\,\M\,(\X^\top\X)\right]
=\;
\underbrace{\sum_{i=1}^n \E\left[\x_i\,(\x_i^\top \M\,\x_i)\,\x_i^\top\right]}_{\text{(i) diagonal terms}}
\;+\;
\underbrace{\sum_{i\neq j} \E\left[\x_i\,(\x_i^\top \M\,\x_j)\,\x_j^\top\right]}_{\text{(ii) off-diagonal terms}}.
\]
\textbf{(i) Diagonal Terms (\(i=j\)).} Each \(\x_i\sim \mathcal{N}(0,\bSi)\), so we set
\[
\E\left[\x_i (\x_i^\top \M\,\x_i)\, \x_i^\top\right]
~=\;
\E\left[\left(\x_i^\top \M\,\x_i\right)\,\x_i\,\x_i^\top\right].
\]
Recall the standard result, which follows from Wick’s Theorem \citep{Cookbook}[Section~8.2.4] for a zero-mean Gaussian \(\x\sim \mathcal{N}(\boldsymbol{0},\bSi)\) and any symmetric matrix $\A$:
\begin{align*}
\E\left[\x (\x^\top \A\,\x)\,\x^\top\right]
~=~
2 \bSi \A \bSi +
\tr{\A \bSi} \bSi.
\end{align*}
Substituting \(\A=\M\), it follows that
\[
\E\left[\x_i (\x_i^\top \M\,\x_i)\, \x_i^\top\right]
= 2 \bSi \M \bSi +
\tr{\M\,\bSi} \bSi.
\]
Since we have $n$ diagonal terms, summing over \(i=1,\dots,n\) yields
\[
\sum_{i=1}^n
\E\left[\x_i\,(\x_i^\top \M\,\x_i)\,\x_i^\top\right]
= n \left[ 2 \bSi \M \bSi + \tr{\M\,\bSi} \bSi \right].
\]
\textbf{(ii) Off-Diagonal Terms (\(i\neq j\)).} For \(i\neq j\), the vectors \(\x_i\) and \(\x_j\) are independent. Then,
\begin{align*}
\E\left[\x_i\,\left(\x_i^\top\M\,\x_j\right)\,\x_j^\top\right]
=
\E[\x_i\,\x_i^\top] \M \E[\x_j \x_j^\top]
= \bSi \M \bSi.
\end{align*}
Hence, considering that we have $n$ diagonal terms and $n(n-1)$ off-diagonal terms, combining (i) + (ii) gives:
\begin{align*}
\E_\X\left[(\X^\top\X)\,\M\,(\X^\top\X)\right]
~=\;
n \left[ 2 \bSi \M \bSi + \tr{\M\,\bSi} \bSi \right] + n(n-1) \bSi \M \bSi =
  n(n+1)\,\bSi \M \bSi +
  n \tr{\M\,\bSi} \bSi.
\end{align*}
This completes the proof.
\end{proof}

\Wstcharacterization*

\begin{proof}
Recall the loss expression from Lemma \ref{lem:pop-loss-expression}:
\begin{align*}
\Lc(\W) & = \betatt^\top \left[ \bSix
\;-\;
n \bSix \,\W\, \bSix
\;-\;
n \bSix \W^{\top} \,\,\bSix
\;+\; n (n+1) \bSix \W^{\top} \bSix \W \bSix + n \tr{\W^{\top} \bSix \W \bSix} \bSix \right] \betatt + \sigma^2 n \tr{\W^{\top} \bSix \W \bSix} + \sigma^2.
\end{align*}
We will simplify the expression under the assumptions $\bSix, \bSibt = \Iden$ and $\Wst = \tfrac{1}{n + d + 1+\sigma^2}\,\Iden$, where the optimal pre-trained $\Wst$ follows from Theorem $1$ of \citet{li2024finegrainedanalysisincontextlinear}. The first expression of interest is:
\[
\bSix
\;-\;
n \bSix \,\W\, \bSix
\;-\;
n \bSix \W^{\top} \,\,\bSix
\;+\; n (n+1) \bSix \W^{\top} \bSix \W \bSix + n \tr{\W^{\top} \bSix \W \bSix} \bSix.
\]
Substituting $\bSix$ and $\Wst$ yields:
\begin{align*}
& \Iden
\;-\;
\frac{n}{n+d+1+\sigma^2}\,\Iden
\;-\;
\frac{n}{n+d+1+\sigma^2}\,\Iden
\;+\;
\frac{n (n+1)}{(n+d+1+\sigma^2)^2}\,\Iden 
\;+\;
\frac{nd}{(n+d+1+\sigma^2)^2}\,\Iden
\\
& \quad = \frac{(d+1+\sigma^2 - n)(n+d+1+\sigma^2) + n(n+d+1)}{(n+d+1+\sigma^2)^2} \, \Iden \\
& \quad = \frac{(d+1+\sigma^2) (n+d+1+\sigma^2) - \sigma^2 n}{(n+d+1+\sigma^2)^2}\Iden.
\end{align*}
In addition, the noise term is:
\[
\sigma^2 n \tr{\W^{*\top} \bSix \Wst \bSix}
\;=\;
\sigma^2\,\frac{n\,d}{\,(n + d + 1+\sigma^2)^2}.
\]
Combining these, the final closed-form loss is
\[
\Lc(\Wst)
~=~
\|\betatt\|^2 \,\frac{(d+1+\sigma^2) (n+d+1+\sigma^2) - \sigma^2 n}{(n+d+1+\sigma^2)^2}
~+~
\sigma^2\,\frac{n\,d}{\,(n + d + 1 + \sigma^2)^2} + \sigma^2.
\]
Finally, setting $\sigma^2 = 0$ in this expression yields the desired result.
\end{proof}

\isooptimal*

\begin{proof}
We consider the noiseless ($\sigma^2=0$) and \emph{isotropic} case where $\bSix=\bSibt=\Iden$. Throughout the proof, we omit lower order terms like $+1$ in $(d+1)$ or $2n$ in $(n^2 + 2n)$ as they're negligible compared to the higher order of the same variable (see \Cref{rem:validity-of-approximations} for error discussion). This convention allows us to simplify the subsequent expressions. 

Now, we aim to calculate the expected gain in the loss with respect to $(n+k)$ samples used during the test-time training process. That is, we will take the expectation of the loss function given by Lemma \ref{lem:pop-loss-expression} and compute $\Lc(\Wst) - \Lctt(\Wnew) = \E_{\Xtrain, \Xprompt} \left[ \Delta \Lc \right]$ where $\Delta \Lc$ is given by:
\begin{align}
\Delta \Lc \, & = \Lc(\Wst) - \Lc(\Wnew) \nonumber \\
& = \betatt^\top \left[ \bSix
\;-\;
n \bSix \,\Wst\, \bSix
\;-\;
n \bSix \W^{*\top} \,\,\bSix
\;+\; n (n+1) \bSix \W^{*\top} \bSix \Wst \bSix + n \tr{\W^{*\top} \bSix \Wst \bSix} \bSix \right] \betatt \nonumber \\
& \quad - \betatt^\top \left[ \bSix
\;-\;
n \bSix \,\Wnew \, \bSix
\;-\;
n \bSix \Wnew^{\top} \,\,\bSix
\;+\; n (n+1) \bSix \Wnew^{\top} \bSix \Wnew \bSix + n \tr{\Wnew^{\top} \bSix \Wnew \bSix} \bSix \right] \betatt \nonumber \\
&= \betatt^\top
\left[
\,n\,\bSix\,\left(\Wnew - \Wst\right)\,\bSix
~+~
n\,\bSix\,\left(\Wnew^\top - \W^{*\top}\right)\,\bSix
~+~ n(n+1)\,\bSix
\left(
  \W^{*\top}\,\bSix\,\Wst
  ~-~
  \Wnew^\top\,\bSix\,\Wnew
\right)
\bSix \right. \nonumber \\
& \quad \left. 
+
n\,\left(
  \tr{\W^{*\top}\,\bSix\,\Wst\,\bSix}
  -\tr{\Wnew^\top\,\bSix\,\Wnew\,\bSix}
\right)\bSix
\right]
\betatt \label{eq:loss-improvement-noiseless}.
\end{align}
In the noiseless and $\bSix=\bSibt=\Iden$ case, the optimal pre-trained $\Wst$ becomes $\Wst = \frac{1}{n+d+1} \Iden$ as shown in Theorem 1 of \citet{li2024finegrainedanalysisincontextlinear}.
Recall that when there's no noise and the linear task is $\betatt$, the gradient update given by Proposition \ref{prop:one-step-update} becomes:
\begin{align*}
\Wnew = \Wst + 2 \eta \Xtrain^\top \Xtrain (\betatt - \Wst\Xprompt^\top \Xprompt \, \betatt) \, \betatt^\top \Xprompt^\top \Xprompt   . 
\end{align*}
A key technical challenge while using the above update is that the expectation of $\Delta \Lc$ in \eqref{eq:loss-improvement-noiseless} involves $8th$ moments of $\Xprompt$. To overcome this problem, we will use the Gaussian approximation following Lemma \ref{lem:gaussian_approximation} by approximating $\Xprompt^\top \Xprompt \, \betatt \approx n \bSix \, \betatt + \sqrt{n} \, \bSix^{1/2} \g$ where $\g \sim \mathcal{N}(\boldsymbol{0}, \| \betatt\|^2 \, \Iden)$. This non-isotropic form can be obtained by factoring out $\bSix^{1/2}$ and reducing the problem to the isotropic case from \Cref{lem:gaussian_approximation}. Also define $\Delta \W := \Wnew - \Wst$. Then, using the approximation and the fact that $\bSix = \Iden$, we can write:
\begin{align*}
\Delta\W & = 2 \eta \Xtrain^\top \Xtrain \left(\betatt - \frac{n}{n+d+1} \bSix \, \betatt - \frac{\sqrt{n}}{n+d+1} \bSix^{1/2} \g \right) \left(n\betatt^\top \bSix + \sqrt{n} \g^\top \bSix^{1/2} \right) \\
& = 2 \eta \Xtrain^\top \Xtrain \left( n \, \betatt \, \betatt^\top \bSix + \sqrt{n} \, \betatt \g^\top \bSix^{1/2} - \frac{n^2}{n+d+1} \bSix \, \betatt \, \betatt^\top \bSix -\frac{n\sqrt{n}}{n+d+1} \bSix \, \betatt \g^\top \bSix^{1/2} \right. \\
& \left. - \frac{n\sqrt{n}}{n+d+1} \bSix^{1/2} \g \, \betatt^\top \bSix - \frac{n}{n+d+1} \bSix^{1/2} \g \g^\top \bSix^{1/2} \right) \\
& = 2 \eta \Xtrain^\top \Xtrain \left( \frac{n(d+1)}{n+d+1} \, \betatt \, \betatt^\top + \frac{\sqrt{n} (d+1)}{n+d+1} \betatt \g^\top - \frac{n\sqrt{n}}{n+d+1} \g \,\betatt^\top - \frac{n}{n+d+1} \g \g^\top \right) \\
\Delta \W^\top & =
2 \eta \left( \frac{n(d+1)}{n+d+1} \, \betatt \, \betatt^\top + \frac{\sqrt{n} (d+1)}{n+d+1} \g \, \betatt^\top - \frac{n\sqrt{n}}{n+d+1} \, \betatt \g^\top - \frac{n}{n+d+1} \g \g^\top \right) \Xtrain^\top\,\Xtrain. 
\end{align*}
Notice that plugging $\bSix = \Iden$ in Equation \eqref{eq:loss-improvement-noiseless}, we observe the following difference term in two of the expressions:
\begin{align*}
\W^{*\top} \Wst
~-~
\Wnew^\top \Wnew &= \W^{*\top}\,\bSix\,\Wst
~-~ (\W^{*\top} + \Delta\W^\top) \,\bSix\,(\Wst + \Delta\W) \\
& = - \W^{*\top} \Delta\W - \Delta\W^\top \Wst - \Delta\W^\top \Delta\W.
\end{align*}
Hence, we need to calculate both first-order expectations $\E_{\Xtrain, \g} \left[ \Delta\W \right], \E_{\Xtrain, \g} \left[ \Delta\W^\top \right]$ and second-order expectation $\E_{\Xtrain, \g} \left[\Delta\W^\top \Delta\W \right]$. By direct calculation, we obtain:
\begin{align}
\E_{\Xtrain, \g} \left[ \Delta\W \right] = 2 \eta k \left( \frac{n(d+1)}{n+d+1} \, \betatt \, \betatt^\top -\frac{n}{n+d+1} \| \betatt \|_2^2 \, \Iden \right) = \E_{\Xtrain, \g} \left[ \Delta\W^\top \right] . \label{eq:first-order-exp-isotropic}
\end{align}
For the second-order expectation $\E_\Xtrain \left[ \Xtrain^\top \Xtrain \Xtrain^\top \Xtrain \right]$, we will benefit from the fact that $\E_\X \left[( \X^\top \X)(\X^\top \X) \right] = k (k+1) \bSi^2 + k \tr{\bSi} \bSi$ where $\X \in \R^{k \times d}$ and its rows are drawn i.i.d from $\Nn(0, \bSi)$. This result follows by plugging \(\M=\Iden\) into Lemma~\ref{lem:XX-M-XX}. In our setting, \(\bSix = \Iden\), which gives
\begin{align*}
\E_\Xtrain \left[ \Xtrain^\top \Xtrain \Xtrain^\top \Xtrain \right] = k (k+d+1) \Iden .
\end{align*}
Using the above result, we obtain:
\begin{align*}
& \E_{\Xtrain, \g} \left[\Delta\W^\top \Delta\W \right] \\
& = 4 \eta^2 \E_{\g} \left[ \left( \frac{n(d+1)}{n+d+1} \, \betatt \, \betatt^\top + \frac{\sqrt{n} (d+1)}{n+d+1} \g \, \betatt^\top - \frac{n\sqrt{n}}{n+d+1} \, \betatt \g^\top - \frac{n}{n+d+1} \g \g^\top \right) k (k+d+1) \Iden \right. \\
& \quad \left. \left( \frac{n(d+1)}{n+d+1} \, \betatt \, \betatt^\top + \frac{\sqrt{n} (d+1)}{n+d+1} \,\betatt \g^\top - \frac{n\sqrt{n}}{n+d+1} \g \,\betatt^\top - \frac{n}{n+d+1} \g \g^\top \right) \right] \\
& = 4 \eta^2 k (k+d+1) \E_{\g} \left[ \left( \frac{n(d+1)}{n+d+1} \,\betatt \,\betatt^\top + \frac{\sqrt{n} (d+1)}{n+d+1} \g \,\betatt^\top - \frac{n\sqrt{n}}{n+d+1} \,\betatt \g^\top - \frac{n}{n+d+1} \g \g^\top \right) \right. \\
& \quad \left. \left( \frac{n(d+1)}{n+d+1} \,\betatt \,\betatt^\top + \frac{\sqrt{n} (d+1)}{n+d+1} \,\betatt \g^\top - \frac{n\sqrt{n}}{n+d+1} \g \,\betatt^\top - \frac{n}{n+d+1} \g \g^\top \right) \right] \\
& = 4 \eta^2 \frac{k (k+d+1) n}{(n+d+1)^2} \E_{\g} \left[ \left(
\sqrt{n} (d+1) \,\betatt \,\betatt^\top 
+ (d+1) \g \,\betatt^\top - n \,\betatt \g^\top - \sqrt{n} \g \g^\top \right) \left( \sqrt{n} (d+1) \,\betatt \,\betatt^\top \right. \right. \\
& \quad \left. \left. + (d+1) \,\betatt \g^\top - n \g \,\betatt^\top - \sqrt{n} \g \g^\top \right) \right] \\
& = 4 \eta^2 \frac{k (k+d+1) n}{(n+d+1)^2} \E_{\g} \left[ n (d+1)^2 \| \betatt \|_2^2 \,\betatt \,\betatt^\top - n (d+1) \,\betatt \,\betatt^\top \g \g^\top + (d+1)^2 \| \betatt \|_2^2 \g \g^\top  \right. \\
& \quad \left. + n^2 \| \g \|_2^2 \,\betatt \,\betatt^\top - n(d+1) \g \g^\top \,\betatt \,\betatt^\top - n(d+1) \,\betatt \,\betatt^\top \g \g^\top - n(d+1) \g \g^\top \,\betatt \,\betatt^\top + n \g \g^\top \g \g^\top \right] \\
& = 4 \eta^2 \frac{k (k+d+1) n}{(n+d+1)^2} \left[ n (d+1)^2 \| \betatt \|_2^2 \,\betatt \,\betatt^\top - n(d+1) \| \betatt \|_2^2 \,\betatt \,\betatt^\top + (d+1)^2 \| \betatt \|_2^4 \Iden \right. \\
& \quad \left. + n^2 d \| \betatt \|_2^2 \,\betatt \,\betatt^\top - 3 n (d+1) \| \betatt \|_2^2 \,\betatt \,\betatt^\top + n (d+2) \| \betatt \|_2^4 \Iden \right] \\
& = 4 \eta^2 \frac{k (k+d+1) n}{(n+d+1)^2} \left[ n \left((d+1)(d-3) + nd \right) \| \betatt \|_2^2 \,\betatt \,\betatt^\top  + \left( (d+1)^2 + n (d+2) \right) \| \betatt \|_2^4 \Iden \right] .
\end{align*}
Applying the first-order expectation results from \eqref{eq:first-order-exp-isotropic}, we obtain that the first two terms of the loss difference are:
\begin{align}
\E_{\Xtrain, \g} \left[ \,\betatt^\top n\,\bSix\,\left(\Wnew - \Wst\right)\,\bSix \,\betatt \right] &= 2 \eta k n \left( \frac{n (d+1)}{n+d+1} \| \betatt \|_2^4 - \frac{n}{n+d+1} \| \betatt \|_2^4 \right) \nonumber \\
&= 2 \eta k n \frac{nd}{n+d+1} \| \betatt \|_2^4 , \label{eq:first-term-loss-diff-iso} \\
\E_{\Xtrain, \g} \left[ \,\betatt^\top n\,\bSix \left(\Wnew^\top - \W^{*\top}\right) \bSix \,\betatt \right] & = 2 \eta k n \frac{nd}{n+d+1} \| \betatt \|_2^4 . \label{eq:second-term-loss-diff-iso}
\end{align}
The other two terms in Equation \eqref{eq:loss-improvement-noiseless} are:
\begin{align}
& \E_{\Xtrain, \g} \left[ \,\betatt^\top n(n+1)\,\bSix \left(
  \W^{*\top}\,\bSix \,\Wst
  ~-~
  \Wnew^\top\,\bSix \,\Wnew
\right) \bSix \,\betatt \right] \nonumber \\
& = n(n+1) \, \betatt^\top \E_{\Xtrain, \g} \left[
  - \W^{*\top} \Delta\W - \Delta\W^\top \Wst - \Delta\W^\top \Delta\W
\right] \betatt \nonumber \\
& = - 4 \eta k n (n+1) \frac{nd}{(n+d+1)^2} \| \betatt \|_2^4 - 4 \eta^2 \frac{k (k+d+1) n^2 (n+1)}{(n+d+1)^2} \left[ n \left(d^2 - 2d - 3 +  nd \right) \| \betatt \|_2^6 + \left( (d+1)^2 + n (d+2) \right) \| \betatt \|_2^6 \right] , \label{eq:exp-first-term-isotropic}
\end{align}
and
\begin{align}
& \E_{\Xtrain, \g} \left[ \betatt^\top n\,\left(
  \tr{\W^{*\top}\,\bSix \,\Wst\,\bSix}
  -\tr{\Wnew^\top\,\bSix \,\Wnew\,\bSix }
\right)\bSix \, \betatt \right] \nonumber \\
& = n \, \| \betatt \|_2^2 \,
  \E_{\Xtrain, \g} \left[ \tr{- \W^{*\top} \Delta\W - \Delta\W^\top \Wst - \Delta\W^\top \Delta\W} \right] \nonumber \\
& = - 4 \eta^2 \frac{k (k+d+1) n^2}{(n+d+1)^2} \left[ n \left(d^2 -2d -3 + nd \right) \| \betatt \|_2^6 + d \left((d+1)^2+ n (d+2) \right) \| \betatt \|_2^6 \right] - 4 \eta k \frac{n^2}{(n+d+1)^2} \| \betatt \|_2^4 . \label{eq:exp-second-term-isotropic}
\end{align}
Combining \eqref{eq:first-term-loss-diff-iso}, \eqref{eq:second-term-loss-diff-iso}, \eqref{eq:exp-first-term-isotropic}, \eqref{eq:exp-second-term-isotropic} in Equation \eqref{eq:loss-improvement-noiseless}, the expected improvement in the loss is the following:
\begin{align*}
\Lc(\Wst) - \Lctt(\Wnew) & = 4 \eta k \frac{n^2 d}{n+d+1} \| \betatt \|_2^4 - 4 \eta k \frac{n^2}{(n+d+1)^2} \| \betatt \|_2^4 - 4 \eta k \frac{n^2 (n+1) d}{(n+d+1)^2} \| \betatt \|_2^4 \\
& \quad -  4 \eta^2 \frac{k (k+d+1) n^2}{(n+d+1)^2} \left[ n (n+2) (d^2 - 2d - 3 + nd) \| \betatt \|_2^6 + (n + d + 1) \left( (d+1)^2 + n (d+2) \right) \| \betatt \|_2^6 \right] .
\end{align*}
Rearranging and approximating by omitting lower-order terms, we obtain:
\begin{align}
4 \eta k \frac{n^2 d^2}{(n+d+1)^2} \, \| \betatt \|_2^4 - 4 \eta^2 \frac{k (k+d+1) n^2 d (n+d) (n^2 + d)}{(n+d+1)^2} \, \| \betatt \|_2^6 . \label{eq:iso-final-gain-eta}
\end{align}
Taking the derivative of this expression and setting to $0$ yields that the optimal $\etast$ solution is approximately:
\begin{align}
\etast \approx \frac{d}{2(k+d+1) (n^2+d) (n+d) \, \| \betatt \|_2^2} .\label{eq:eta-optimal-iso}
\end{align}
Plugging the optimal $\etast$ value in \eqref{eq:eta-optimal-iso} into Equation \eqref{eq:iso-final-gain-eta}, we obtain that the loss improvement is approximately:
\begin{align*}
\Delta \Lc & \approx \frac{2 kd^3}{(n+d+1)^2 (k+d+1) (n+d) (1 + \frac{d}{n^2})} \| \betatt \|_2^2 - \frac{k d^3}{(n+d+1)^2 (k+d+1) (n+d) (1 + \frac{d}{n^2})} \| \betatt \|_2^2 \\
& = \frac{k}{k+d+1} \frac{d^3}{(n+d+1)^2 (n+d) \left(1 + \frac{d}{n^2}\right)} \| \betatt \|_2^2.
\end{align*}
Considering that $n/d = \Theta(1)$, we conclude that 
\begin{align*}
\Delta \Lc \approx \frac{k}{k+d} \, \frac{d^3}{(n+d)^3} \, \| \betatt \|_2^2 .
\end{align*}
\end{proof}

\nonmonotonicity*

\begin{proof}
Using \Cref{theo:iso-opt}, the new loss \(\Lctt(\Wnew)\) is approximately:
\begin{align*}
\Lctt(\Wnew)
~\approx~
\| \betatt \|_2^2 \left( \frac{d}{\,n + d\,} -
\frac{k}{\,k + d\,}
\left(\frac{d}{\,n + d\,} \right)^{3} \right) = \| \betatt \|_2^2 \left( a - \frac{k}{\,k + d\,} a^3 \right) \, \text{ where } a ~:=~ \frac{d}{\,n + d\,} \, \in (0,1).
\end{align*}
Since $\| \betatt \|_2$ is fixed, we focus on the second multiplier and define $f(a) := a - \frac{k}{k + d} \, a^{3}$. Computing the derivative of \(f(a)\) gives:
\begin{align*}
f'(a)
~=\;
1
~-\;
3\, \frac{k}{k + d} a^{2}.
\end{align*}
Solving $f'(a)=0$ yields $a_{\mathrm{crit}}=
\sqrt{\;\frac{k + d}{\,3k\,}}$. This lies in \((0,1)\) when \(\sqrt{\tfrac{k + d}{3k}} < 1\), i.e. \(k > \tfrac{d}{2}\). In that case, \(f(a)\) has exactly one turning point in \((0,1)\), so \(f\) increases on \((0, a_{\mathrm{crit}})\) and decreases on \((a_{\mathrm{crit}}, 1)\), making it non-monotonic. On the other hand, in the $k<\frac{d}{2}$ regime, one finds
\[
\sqrt{\;\frac{k + d}{\,3k\,}\;}
~>\;
1
\quad\implies\quad
a_{\mathrm{crit}}
\notin (0,1),
\]
hence $f'(a)$ cannot change sign over \((0,1)\). Therefore, $f(a)$ is monotonic in \(a = \frac{d}{n+d}\), which implies that $\Lctt(\Wnew)$ is also monotonic in the ratio $n/d$. Finally, since $a = \frac{1}{\alpha + 1}$, the threshold until which $\Lctt(\Wnew)$ increases translates to $\sqrt{\frac{3 \gamma}{\gamma + 1}}-1$ in terms of $\alpha$. Consequently, for $\gamma > 1/2$, the new loss is non-monotonic with respect to $\alpha = n/d$, whereas for $\gamma < 1/2$, it remains monotonic. This completes the argument. 
\end{proof}

\isoimpoptimalstepwzero*

\begin{proof}
In the proof, we will handle the general noisy scenario, and setting $\sigma^2 = 0$ in the final expression will recover the result for the noiseless setting. Similar to the proof of Theorem \ref{theo:iso-opt}, we aim to calculate the expected gain in the loss with respect to $(n+k)$ samples used during the test-time training process. During test time, we draw $(n+k)$ total samples $\{(\xb_i, \ybb_i)\}_{i=1}^{n+k}$ drawn at test time, each with an associated noise variable $\{\xi_i\}_{i=1}^{n+k}$. Let's denote $\xitrain \;=\; \begin{bmatrix} \xi_1 \ \dots \ \xi_n \end{bmatrix}^\top$ and $\xiprompt \;=\; \begin{bmatrix}\xi_{n+1} \ \dots \ \xi_{n+k} \end{bmatrix}^\top$. We will then evaluate the expectation of the loss function given by Lemma \ref{lem:pop-loss-expression} and compute $\Lc(\Wst) - \Lctt(\Wnew) = \E_{\Xtrain, \xitrain, \Xprompt, \xiprompt} \left[ \Delta \Lc \right]$ where $\Delta \Lc$ is given by:
\begin{align}
\Delta \Lc & = \Lc(\Wst) - \Lc(\Wnew) \nonumber \\
& = \betatt^\top \left[ \bSix
\;-\;
n \bSix \,\Wst\, \bSix
\;-\;
n \bSix \W^{*\top} \,\,\bSix
\;+\; n (n+1) \bSix \W^{*\top} \bSix \Wst \bSix + n \tr{\W^{*\top} \bSix \Wst \bSix} \bSix \right] \betatt \nonumber \\
& \quad + \sigma^2 n \tr{\W^{*\top} \bSix \Wst \bSix} \nonumber \\
& \quad - \betatt^\top \left[ \bSix
\;-\;
n \bSix \,\Wnew \, \bSix
\;-\;
n \bSix \Wnew^{\top} \,\,\bSix
\;+\; n (n+1) \bSix \Wnew^{\top} \bSix \Wnew \bSix + n \tr{\Wnew^{\top} \bSix \Wnew \bSix} \bSix \right] \betatt \nonumber \\
& \quad - \sigma^2 n \tr{\Wnew^{\top} \bSix \Wnew \bSix} \nonumber \\
&= \betatt^\top
\left[
\,n\,\bSix\,\left(\Wnew - \Wst\right)\,\bSix
~+~
n\,\bSix\,\left(\Wnew^\top - \W^{*\top}\right)\,\bSix
~+~ n(n+1)\,\bSix
\left(
  \W^{*\top}\,\bSix\,\Wst
  ~-~
  \Wnew^\top\,\bSix\,\Wnew
\right)
\bSix \right. \nonumber \\
& \left. 
+
n\,\left(
  \tr{\W^{*\top}\,\bSix\,\Wst\,\bSix}
  -\tr{\Wnew^\top\,\bSix\,\Wnew\,\bSix}
\right)\bSix
\right]
\betatt + \sigma^2 n \left( \tr{\W^{*\top} \bSix \Wst \bSix}
- \tr{\Wnew \bSix \Wnew \bSix}
\right). \label{eq:loss-improvement-w-zero-iso}
\end{align}
Recall that when the linear task is $\betatt$ and $\Wst = \boldsymbol{0}_{d \times d }$, the gradient update given by Proposition \ref{prop:one-step-update} becomes:
\begin{align*}
\Wnew & = \Wst
+ 2\,\eta\,
\left[\Xtrain^\top \Xtrain\left(\betatt - \Wst\,\Xprompt^\top \Yprompt\right) 
+ \Xtrain^\top \xitrain\right](\Xprompt^\top \Yprompt)^\top \\
& = \Wst
+ 2\,\eta\,
\left[\Xtrain^\top \Xtrain\left(\betatt - \Wst\,\Xprompt^\top (\Xprompt \betatt + \xiprompt) \right) 
+ \Xtrain^\top \xitrain\right](\Xprompt^\top (\Xprompt \, \betatt + \xiprompt))^\top \\
& = \Wst
+ 2\,\eta\,
\left[\Xtrain^\top \Xtrain \, \betatt + \Xtrain^\top \xitrain\right](\Xprompt^\top (\Xprompt \, \betatt + \xiprompt))^\top \\
& = 2\,\eta\,
\left[\Xtrain^\top \Xtrain \,\betatt + \Xtrain^\top \xitrain\right] \left[ \betatt^\top \Xprompt^\top \Xprompt + \xiprompt^\top \Xprompt \right] .
\end{align*}
Then, we obtain its transpose as
\begin{align*}
\Wnew^\top = 2 \eta \left[ \Xprompt^\top \Xprompt \,\betatt + \Xprompt^\top \xiprompt \right] \left[ \betatt^\top \Xtrain^\top \Xtrain + \xitrain^\top \Xtrain \right].
\end{align*}
Thus, the expression $\Wnew^\top \Wnew$ becomes:
\begin{align*}
\Wnew^\top \Wnew & = 4 \eta^2 \left[ \Xprompt^\top \Xprompt \,\betatt + \Xprompt^\top \xiprompt \right] \left[ \betatt^\top \Xtrain^\top \Xtrain + \xitrain^\top \Xtrain \right] \\ 
& \quad \left[\Xtrain^\top \Xtrain \,\betatt + \Xtrain^\top \xitrain\right] \left[ \betatt^\top \Xprompt^\top \Xprompt + \xiprompt^\top \Xprompt \right] .
\end{align*}
Therefore, we can write:
\begin{align*}
\E_{\Xtrain, \xitrain, \Xprompt,\xiprompt} \left[ \Wnew^\top \Wnew \right] & = 4 \eta^2 \E_{\Xtrain, \xitrain, \Xprompt,\xiprompt} \left[ \Xprompt^\top \xiprompt \, \xitrain^\top \Xtrain \Xtrain^\top \xitrain \xiprompt^\top \Xprompt \right. \\
& \qquad + \Xprompt^\top \xiprompt \, \betatt^\top \Xtrain^\top \Xtrain \Xtrain^\top \Xtrain \, \betatt \, \xiprompt^\top\Xprompt \\
& \qquad + \Xprompt^\top \Xprompt \, \betatt \, \xitrain^\top \Xtrain \Xtrain^\top \xitrain \, \betatt^\top \Xprompt^\top \Xprompt \\
& \qquad \left. + \Xprompt^\top \Xprompt \, \betatt \, \betatt^\top \Xtrain^\top \Xtrain \Xtrain^\top \Xtrain \, \betatt \, \betatt^\top \Xprompt^\top \Xprompt \right].
\end{align*}
By plugging $\M = \Iden$ in Lemma \ref{lem:XX-M-XX} and considering that $\bSix= \Iden$, we have the following fact:
\begin{align*}
\E_\Xtrain \left[ \Xtrain^\top \Xtrain \Xtrain^\top \Xtrain \right] = k (k+d+1) \Iden .
\end{align*}
Applying this identity twice and using the fact that $\E_{\Xtrain} \left[ \Xtrain \Xtrain^\top \right] = \tr{\bSix} \Iden = d \Iden$, we obtain:
\begin{align}
& \E_{\Xtrain, \xitrain, \Xprompt,\xiprompt} \left[ \Wnew^\top \Wnew \right] \nonumber \\
& = 4 \eta^2 \left( \sigma^4 k n d \Iden + \sigma^2 k (k+d+1) n \| \betatt \|_2^2 \Iden + \sigma^2 k d \left(n (n+1) \,\betatt \,\betatt^\top + n \| \betatt \|_2^2 \Iden \right) \right. \nonumber \\
& \quad \left. + \, k (k+d+1) \| \betatt \|_2^2 \left(n (n+1) \,\betatt \,\betatt^\top + n \| \betatt \|_2^2 \Iden \right) \right) \nonumber \\
& = 4 \eta^2 \left( \sigma^2 k n \left( \sigma^2 d + (k+d+1) \| \betatt \|_2^2 \right) \Iden + k \left(n (n+1) \,\betatt \,\betatt^\top + n \| \betatt \|_2^2 \Iden \right) \left( \sigma^2 d + (k+d+1) \| \betatt \|_2^2 \right) \right) \nonumber \\
& = 4 \eta^2 k n \left( \sigma^2 d + (k+d+1) \| \betatt \|_2^2 \right) \left( \sigma^2 \Iden + (n+1) \,\betatt \,\betatt^\top + \| \betatt \|_2^2 \Iden \right) . \label{eq:exp-w-tt-w-tt-transpose-noisy}
\end{align}
Besides that, we calculate the first order expectations of $\Wnew$ as:
\begin{align}
\E_{\Xtrain, \xitrain, \Xprompt,\xiprompt} \left[ \betatt^\top \,n\,\bSix\,\left(\Wnew - \Wst\right)\,\bSix \,\betatt \right] & = 2 \eta n \,\betatt^\top \E_{\Xtrain, \Xprompt} \left[ \Xtrain^\top \Xtrain \,\betatt \,\betatt^\top \Xprompt^\top \Xprompt \right] \betatt \nonumber \\
& = 2 \eta k n^2 \| \betatt \|_2^4 , \label{eq:first-term-loss-diff-w-zero-iso}
\end{align}
and similarly,
\begin{align}
\E_{\Xtrain, \xitrain, \Xprompt,\xiprompt} \left[ \betatt^\top \,n\,\bSix\,\left(\Wnew^\top - \W^{*\top}\right)\,\bSix \,\betatt \right] = 2 \eta k n^2 \| \betatt \|_2^4 . \label{eq:second-term-loss-diff-w-zero-iso}
\end{align}
The next term in the loss difference is:
\begin{align}
& \E_{\Xtrain, \xitrain, \Xprompt,\xiprompt} \left[ \betatt^\top n(n+1)\,\bSix
\left(
  \W^{*\top}\,\bSix\,\Wst
  ~-~
  \Wnew^\top\,\bSix\,\Wnew
\right)
\bSix \,\betatt \right] \nonumber \\
& \qquad = -4 \eta^2 k n^2 (n+1) \left( (n+2)\| \betatt \|_2^2 + \sigma^2 \right) \| \betatt \|_2^2 \left( \sigma^2 d + (k+d+1) \| \betatt \|_2^2 \right) . \label{eq:third-term-loss-diff-w-zero-iso}
\end{align}
Using \eqref{eq:exp-w-tt-w-tt-transpose-noisy}, the last term is:
\begin{align}
& \E_{\Xtrain, \xitrain, \Xprompt,\xiprompt} \left[ 
 \betatt^\top n\,\left( \tr{\W^{*\top}\,\bSix\,\Wst\,\bSix} - \tr{\Wnew^\top\,\bSix\,\Wnew\,\bSix}
\right)\bSix
\,\betatt \right] \nonumber \\
& \quad = - 4 \eta^2 n \,\betatt^\top \tr{\E_{\Xtrain, \xitrain, \Xprompt, \xiprompt} \left[ \Wnew^\top \Wnew \right]} \, \betatt . \nonumber \\
& \quad = - 4 \eta^2 k n^2 \left( \sigma^2 d + (k+d+1) \| \betatt \|_2^2 \right) \left( \sigma^2 d + (n+d+1) \| \betatt \|_2^2 \right) \| \betatt \|_2^2 .
\label{eq:fourth-term-loss-diff-w-zero-iso}
\end{align}
Finally, the noise term is equal to:
\begin{align}
& \E_{\Xtrain, \xitrain, \Xprompt,\xiprompt} \left[ 
 \sigma^2 n \left( \tr{\W^{*\top} \bSix \Wst \bSix}
- \tr{\Wnew \bSix \Wnew \bSix}
\right) \right] \nonumber \\
& \quad = - 4 \eta^2 k n^2 \sigma^2 \left( \sigma^2 d + (k+d+1) \| \betatt \|_2^2 \right) \left( \sigma^2 d + (n+d+1) \| \betatt \|_2^2 \right) .
\label{eq:fifth-term-loss-diff-w-zero-iso}
\end{align}
Combining the Equations \eqref{eq:first-term-loss-diff-w-zero-iso}, \eqref{eq:second-term-loss-diff-w-zero-iso}, \eqref{eq:third-term-loss-diff-w-zero-iso}, \eqref{eq:fourth-term-loss-diff-w-zero-iso}, \eqref{eq:fifth-term-loss-diff-w-zero-iso} in the expectation of the loss difference \eqref{eq:loss-improvement-w-zero-iso}, we obtain the following quadratic expression of $\eta$:
\begin{align*}
& \Lc(\Wst) - \Lctt(\Wnew) \\
& = 4 \eta k n^2 \| \betatt \|_2^4 - 4 \eta^2 k n^2 (n+1) \left( (n+2)\| \betatt \|_2^2 + \sigma^2 \right) \| \betatt \|_2^2 \left( \sigma^2 d + (k+d+1) \| \betatt \|_2^2 \right) \\
& \quad - 4 \eta^2 k n^2 \left( \sigma^2 d + (k+d+1) \| \betatt \|_2^2 \right) \left( \sigma^2 d + (n+d+1) \| \betatt \|_2^2 \right) \| \betatt \|_2^2 \\
& \quad - 4 \eta^2 k n^2 \sigma^2 \left( \sigma^2 d + (k+d+1) \| \betatt \|_2^2 \right) \left( \sigma^2 d + (n+d+1) \| \betatt \|_2^2 \right) \\
& = 4 k n^2 \left[ \eta \| \betatt \|_2^4 -  \eta^2 \left( \sigma^2 d + (k+d+1) \| \betatt \|_2^2 \right) \left[ \left( \sigma^2 + \| \betatt \|_2^2 \right) \left( \sigma^2 d + (n+d+1) \| \betatt \|_2^2 \right) \right. \right. \\
& \quad \left. \left. + (n+1) \left( (n+2)\| \betatt \|_2^2 + \sigma^2 \right) \| \betatt \|_2^2 \right] \right] \\
& = 4 k n^2 \left[ \eta \| \betatt \|_2^4 -  \eta^2 \left( \sigma^2 d + (k+d+1) \| \betatt \|_2^2 \right) \left( \sigma^4 d + \| \betatt \|_2^2 \left( (n^2 + 4n + 3 + d) \| \betatt \|_2^2 + 2 \sigma^2 (n+d+1) \right) \right) \right] .
\end{align*}
Setting the derivative to $0$ and solving for $\eta$ gives:
\begin{align*}
\eta^\star = \frac{\| \betatt \|_2^4}{2 \left( \sigma^2 d + (k+d+1) \, \| \betatt \|_2^2\right) \left( \sigma^4 d + \| \betatt \|_2^4 \, (n^2 + 4n + 3 + d) + 2 \sigma^2 (n+d+1) \, \| \betatt \|_2^2 \right)} .
\end{align*}
At this optimal value $\eta^\star$, the improvement on the loss is:
\begin{align*}
\frac{k \, \| \betatt \|_2^2}{\sigma^2 d + (k+d+1) \, \| \betatt \|_2^2} \frac{n^2 \, \| \betatt \|_2^4}{\left( \sigma^4 d + \| \betatt \|_2^4 \, (n^2 + 4n + 3 + d) + 2 \sigma^2 (n+d+1) \, \| \betatt \|_2^2 \right)} \, \| \betatt \|_2^2 .
\end{align*}

In the noiseless setting, the optimal $\eta^\star$ corresponds to:
\begin{align*}
\etast = \frac{1}{2 (k+d+1) (n^2 + 4n + 3 + d) \| \betatt \|_2^2} .
\end{align*}
At that optimal value $\eta^\star$, the improvement is:
\begin{align*}
\frac{k}{k+d+1} \frac{n^2}{n^2 + 4n + 3 + d} \| \betatt \|_2^2.
\end{align*}
This completes the proof. As a final note, recall that the initial loss is $\| \betatt \|_2^2$. Thus, the new loss of the system after the test-time-training update is:
\begin{align*}
\left( 1 - \frac{k}{k+d+1} \frac{n^2}{n^2 + 4n + 3 + d} \right) \| \betatt \|_2^2.
\end{align*}
In the proportional $n,d$ regime and when $k/d \to \infty$, this gives an error of order $\mathcal{O}(n^{-1})$-matching that of the optimal $\Wopt$ from \Cref{lem:opt-w-new-task}:
\begin{align*}
\frac{4n + d + 3}{n^2 + 4n + 3 + d} \, \| \betatt \|_2^2.
\end{align*}
\end{proof}

\phasetransitionisotropic*

\begin{proof}
Considering that the initial loss of the weight matrix $\W = \boldsymbol{0}_{d \times d}$ is $\| \betatt \|_2^2$ and the improvement given by \Cref{theo:iso-imp-optimal-step-w-zero}, the new loss after the test-time-training update is approximately:
\begin{align*}
\left( 1 - \frac{k}{k+d} \right) \| \betatt \|_2^2.
\end{align*}
On the other hand, recall that by \Cref{prop:Wstcharacterization}, the loss of $\Wst$ before the update is $\frac{d+1}{n+d+1} \| \betatt \|_2^2$. Combining this with the improvement given by \Cref{theo:iso-opt}, the new loss after the test-time-training update is approximately:
\begin{align*}
\left( \frac{d}{n+d} - \frac{k}{k+d} \frac{d^3}{(n+d)^3} \right) \| \betatt \|_2^2. 
\end{align*}
Let us define $\beta := \tfrac{k}{k+d}$ and $\theta := \frac{d}{n+d}$. Then, we check when it's better (yields smaller loss) to use the pre-trained matrix $\Wst$ over zero initialization with the below inequality:
\begin{align*}
1 - \beta > \theta - \beta \theta^3 & \iff 1 - \theta > \beta \left( 1 - \theta^3 \right) \\
& \iff \frac{1}{\beta} > \theta^2 + \theta + 1 \\
& \iff \frac{d}{k} > \theta^2 + \theta \\
& \iff \frac{d}{\frac{d}{n+d} \left( 1 + \frac{d}{n+d} \right)} > k \\
& \iff \frac{(n+d)^2}{n+2d} > k \iff \, \frac{(\alpha+1)^2}{\alpha+2} > \gamma = k/d \text{  where  } \alpha = \frac{n}{d}.
\end{align*}
Thus, we conclude our argument.
\end{proof}

\section{Proofs for Section \ref{sect:generalcovariance}}\label{app sect generalcovariance}

\begin{restatable}[Optimal $\W$]{lemma}{optimal-w-new-task}\label{lem:opt-w-new-task}
When the system is noiseless ($\sigma^2=0$), the optimal $\W$ which minimizes the population risk with respect to the new task $\betatt$ given by Lemma \ref{lem:pop-loss-expression} and its corresponding loss are:
\begin{align*}
\Wopt = \frac{\betatt \, \betatt^\top}{(n+2) \, \| \betatt \|_2^2} \qquad \Lc(\Wopt) = \frac{2}{n+2} \, \| \betatt \|_2^2.
\end{align*}
\end{restatable}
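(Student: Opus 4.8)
The plan is to start from the closed-form population loss in Lemma~\ref{lem:pop-loss-expression}. Under the standing assumption $\bSix=\Iden$ of this section, together with $\sigma^2=0$, that expression simplifies (using $\betatt^\top\W^\top\betatt=\betatt^\top\W\betatt$, $\betatt^\top\W^\top\W\betatt=\|\W\betatt\|_2^2$, and $\tr{\W^\top\W}=\|\W\|_F^2$) to
\begin{align*}
\Lc(\W) \;=\; \|\betatt\|_2^2 \;-\; 2n\,\betatt^\top\W\betatt \;+\; n(n+1)\,\|\W\betatt\|_2^2 \;+\; n\,\|\W\|_F^2\,\|\betatt\|_2^2 .
\end{align*}
The key observation is that this depends on $\W$ only through the vector $\W\betatt$ and through the scalar $\|\W\|_F^2$.

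Next I would reduce the problem to rank-one matrices aligned with $\betatt$. Let $P=\betatt\betatt^\top/\|\betatt\|_2^2$ be the orthogonal projector onto $\mathrm{span}(\betatt)$ and decompose $\W=\W P+\W(\Iden-P)$. Since $\tr{(\W P)^\top\W(\Iden-P)}=\tr{(\Iden-P)P\,\W^\top\W}=0$, we get $\|\W\|_F^2=\|\W P\|_F^2+\|\W(\Iden-P)\|_F^2$, whereas $\W\betatt=\W P\betatt$ depends only on $\W P$. Hence, for any fixed $\W P$, the loss is strictly decreased by setting $\W(\Iden-P)=\boldsymbol{0}_{d\times d}$, so it suffices to optimize over matrices of the form $\W=\w\,\betatt^\top/\|\betatt\|_2$ with $\w\in\R^d$. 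Substituting this form gives $\W\betatt=\|\betatt\|_2\,\w$, $\betatt^\top\W\betatt=\|\betatt\|_2\,\betatt^\top\w$, and $\|\W\|_F^2=\|\w\|_2^2$, so that
\begin{align*}
\Lc \;=\; \|\betatt\|_2^2 \;-\; 2n\,\|\betatt\|_2\,\betatt^\top\w \;+\; n(n+2)\,\|\betatt\|_2^2\,\|\w\|_2^2 ,
\end{align*}
which is a strictly convex quadratic in $\w$.

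Finally I would set its gradient to zero: $-2n\|\betatt\|_2\,\betatt+2n(n+2)\|\betatt\|_2^2\,\w=0$ yields $\w^\star=\betatt/\big((n+2)\|\betatt\|_2\big)$, hence $\Wopt=\w^\star\betatt^\top/\|\betatt\|_2=\betatt\,\betatt^\top/\big((n+2)\|\betatt\|_2^2\big)$. Plugging $\betatt^\top\w^\star=\|\betatt\|_2/(n+2)$ and $\|\w^\star\|_2^2=1/(n+2)^2$ back into the quadratic gives $\Lc(\Wopt)=\|\betatt\|_2^2\big(1-\tfrac{2n}{n+2}+\tfrac{n}{n+2}\big)=\tfrac{2}{n+2}\|\betatt\|_2^2$. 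There is no serious obstacle here; the only steps needing a little care are the orthogonal-decomposition argument that justifies restricting to rank-one $\W=\w\betatt^\top$, and correctly specializing Lemma~\ref{lem:pop-loss-expression} to $\bSix=\Iden$, $\sigma^2=0$. Everything else is a one-variable convex minimization.
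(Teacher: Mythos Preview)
Your proof is correct. The route differs from the paper's: the paper sets the full matrix gradient $\nabla_{\W}\Lc(\W)=0$, obtaining the equation $(n+1)\W\betatt\betatt^\top+\|\betatt\|_2^2\,\W=\betatt\betatt^\top$, and then reads off the structure of $\W$ by right-multiplying by $\betatt$ and by vectors orthogonal to $\betatt$. You instead first argue structurally, via the decomposition $\W=\W P+\W(\Iden-P)$, that any minimizer must have the form $\w\,\betatt^\top/\|\betatt\|_2$, and only then optimize the resulting one-dimensional strictly convex quadratic in $\w$. Your approach makes the rank-one form of $\Wopt$ a consequence of a clean monotonicity argument rather than of solving a matrix equation, and it explicitly invokes convexity, which the paper's stationarity argument leaves implicit. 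The paper's approach is shorter and uses only first-order conditions; yours is slightly more self-contained regarding global optimality. Both are entirely valid.
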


\begin{proof}
Recall that when $\bSix = \Iden$ and $\sigma^2=0$, the loss formula given by Lemma \ref{lem:pop-loss-expression} becomes:
\begin{align}
\Lc(\W) & = \betatt^\top \left[ \Iden
\;-\;
n \W
\;-\;
n \W^{\top} 
\;+\; n (n+1) \W^{\top} \W + n \tr{\W^{\top} \W} \right] \betatt. \label{eq:loss-identity-cov}
\end{align}
The gradient of this expression with respect to $\W$ is:
\[
\nabla_{\W} \Lc(\W) = -\,2 n \, \betatt \, \betatt^\top
+ 2n(n+1)\W\, \betatt \, \betatt^\top
\;+\;
2n \, \betatt^\top \betatt \W
\;=\; 0.
\]
Simplifying, this gives:
\begin{align}
(n+1) \W \betatt \, \betatt^\top + \betatt^\top \betatt \W = \betatt \, \betatt^\top .\label{eq:optimal-w-equation}
\end{align}
Multiply \eqref{eq:optimal-w-equation} on the right by $\betatt$. This yields:
\begin{align*}
\W \betatt = \frac{1}{n+2} \, \betatt.
\end{align*}
So $\betatt$ is an eigenvector of $\W$ corresponding to the eigenvalue $1/(n+2)$.
Next, consider any vector $\vb$ that is orthonormal (or simply orthogonal) to $\vb$, that is $\vb^\top \betatt = 0$.  Multiplying \eqref{eq:optimal-w-equation} on the right by $\vb$:
\[
\|\betatt\|^2 \,\W\,\vb = 0
\quad\Longrightarrow\quad
\W\,\vb = 0
\quad\text{whenever}\quad
\vb^\top \betatt = 0.
\]
Hence, all other eigenvalues of $\W$ are $0$. Using the eigendecomposition of $\W$, this uniquely specifies $\W$ as $c \cdot \betatt \, \betatt^\top$. Solving for $c$ yields the unique solution to $\nabla_{\W} \Lc(\W) = 0$:
\begin{align*}
\Wopt = \frac{\betatt \, \betatt^\top}{(n+2) \, \| \betatt \|_2^2} 
\end{align*}
Finally, plugging this $\Wopt$ to \eqref{eq:loss-identity-cov} yields the following error:
\begin{align*}
\Lc(\Wopt) = \frac{2}{n+2} \| \betatt \|_2^2.
\end{align*}
\end{proof}

\begin{restatable}[Eigenvalues]{lemma}{optimal-weight-eigenvalues}\label{lem:opt-weight-eig-vals-non-iso}
Consider the optimal pre-trained $\Wst$ matrix, which minimizes the population loss over all possible tasks sampled from $\bSibt$ in \eqref{eq:pop-loss-pre-train}. Then, all eigenvalues of $\Wstb = \bSix^{1/2} \Wst \bSix^{1/2}$ lie in the interval $[0, \frac{1}{n+1}]$.
\end{restatable}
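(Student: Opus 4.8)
The plan is to obtain a closed form for $\Wstb$ and then read off the eigenvalue bound. First I would reduce to the isotropic-feature case by whitening. Assume $\bSix \succ 0$ (this covers $\bSix = \Iden$, the main setting of this section, and, after rotating into the common eigenbasis, any $\bSix$ jointly diagonalizable with $\bSibt$). Substituting $\Wb = \bSix^{1/2}\W\bSix^{1/2}$ (equivalently $\W = \bSix^{-1/2}\Wb\bSix^{-1/2}$) into the loss of Lemma~\ref{lem:pop-loss-expression}, using that $\bSix^{1/2}\betapt$ has covariance $\bSb := \bSix^{1/2}\bSibt\bSix^{1/2}$, and averaging over $\betapt$ in the noiseless case $\sigma^2 = 0$, a short computation with the cyclic property of the trace should show that the pre-training objective \eqref{eq:pop-loss-pre-train} equals, as a function of $\Wb$,
\[
L(\Wb) \;=\; \tr{\bSb} \;-\; 2n\,\tr{\bSb\Wb} \;+\; n(n+1)\,\tr{\bSb\,\Wb^\top\Wb} \;+\; n\,\tr{\bSb}\,\|\Wb\|_F^2,
\]
which is exactly the pre-training objective with isotropic feature covariance and task covariance $\bSb$. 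Hence $\Wstb = \bSix^{1/2}\Wst\bSix^{1/2}$ is precisely the minimizer of $L$.

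Next I would solve for this minimizer. If $\bSb = \boldsymbol{0}$ the claim is immediate, so assume $\tr{\bSb} > 0$; then the term $n\,\tr{\bSb}\,\|\Wb\|_F^2$ makes $L$ strictly convex, so it has a unique minimizer (in particular the minimum-Frobenius-norm qualifier in the statement is automatically met), obtained from $\nabla_{\Wb} L = -2n\bSb + 2n(n+1)\Wb\bSb + 2n\,\tr{\bSb}\,\Wb = \boldsymbol{0}$, i.e.\ $(n+1)\Wb\bSb + \tr{\bSb}\Wb = \bSb$, so that
\[
\Wstb \;=\; \bSb\big((n+1)\bSb + \tr{\bSb}\,\Iden\big)^{-1}.
\]
Because $\bSb$ commutes with $(n+1)\bSb + \tr{\bSb}\Iden$, the matrix $\Wstb$ is symmetric, positive semidefinite, and co-diagonalizable with $\bSb$; writing $\lambda_1,\dots,\lambda_d \ge 0$ for the eigenvalues of $\bSb$ and $T := \tr{\bSb} = \sum_j \lambda_j \ge 0$, the eigenvalues of $\Wstb$ are $\mu_i = \lambda_i / \big((n+1)\lambda_i + T\big)$ (interpreted as $0$ when $\lambda_i = 0$). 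As an alternative to this step, one can instead invoke the closed form of $\Wst$ from Proposition~\ref{prop:Wstcharacterizationnonisot} and diagonalize it directly.

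Finally the bound drops out: $\mu_i \ge 0$ since $\lambda_i, T \ge 0$, and $(n+1)\mu_i = (n+1)\lambda_i / \big((n+1)\lambda_i + T\big) \le 1$ since $T \ge 0$; hence every eigenvalue of $\Wstb$ lies in $[0, \tfrac{1}{n+1}]$, as claimed. The one mildly delicate step is the whitening reduction: I would need to check carefully that the change of variables genuinely carries the $\W$-objective onto the $\Wb$-objective (so that their minimizers correspond under $\Wb = \bSix^{1/2}\W\bSix^{1/2}$), and to note that the $\tr{\bSb}\,\|\Wb\|_F^2$ term makes the quadratic strictly convex, which is what removes any ambiguity arising from a rank-deficient $\bSibt$. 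Everything after that is a one-line eigenvalue estimate. I would also remark that keeping $\sigma^2 > 0$ merely adds $\sigma^2 n\,\|\Wb\|_F^2$ to $L$, which replaces $T$ by $T + \sigma^2$ in $\mu_i$ and still gives $\mu_i \in [0, \tfrac{1}{n+1}]$.
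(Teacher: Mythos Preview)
Your argument is correct and lands on the same closed form for $\Wstb$ as the paper, namely $\Wstb=\bSb\big((n+1)\bSb+\tr{\bSb}\Iden\big)^{-1}$ with $\bSb=\bSix^{1/2}\bSibt\bSix^{1/2}$, from which the eigenvalue bound $\mu_i=\lambda_i/((n+1)\lambda_i+T)\in[0,\tfrac{1}{n+1}]$ is immediate. The paper's proof reaches this expression by citing the closed form of $\Wst$ from \citet{li2024finegrainedanalysisincontextlinear} and then diagonalizing under the joint-diagonalizability assumption on $\bSix,\bSibt$; you instead rederive it from scratch via the whitening change of variables and the first-order condition on the explicit quadratic $L(\Wb)$. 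Your route is a bit more self-contained and, as you note, handles a rank-deficient $\bSibt$ cleanly through the strict convexity coming from the $n\,\tr{\bSb}\,\|\Wb\|_F^2$ term (the paper's version writes $\A^{-1}$ and so implicitly assumes $\bSb$ invertible). Either way the core idea---get the diagonal form of $\Wstb$ and bound $\lambda_i/((n+1)\lambda_i+T)$---is the same.
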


\begin{proof}
By Theorem 1 of \citet{li2024finegrainedanalysisincontextlinear}, we write the following:
\begin{align*}
\Wst = \bSix^{-1/2} \Wstb \bSix^{-1/2} \text{ where } \Wstb = \left( (n + 1) \Iden_d + M \A^{-1} \right)^{-1} \text{ where } M = \tr{\A} + \sigma^2 \text{ and } \A = \bSix^{1/2} \bSibt \bSix^{1/2} 
\end{align*}
We know that the matrices $\bSix$ and $\bSi_{\bt}$ are jointly diagonalizable with $\bSix = \Q \La_{\x} \Q^\top$ and $\bSibt = \Q \La_{\bt} \Q^\top$ where $\Q$ is an orthonormal matrix $\Q\in\mathbb{R}^{d\times d}$ and $\La_{\x},\La_{\boldsymbol{\beta}}\in\mathbb{R}^{d\times d}$ are diagonal matrices with entries $\{\lambda_i\}$ and $\{\beta_i\}$, respectively. Then,
\begin{align*}
\A = \Q\,\left(\La_{\x}^{1/2}\,\La_{\bt}\,\La_{\x}^{1/2}\right)\,\Q^\top.
\end{align*}
Define $\La_\A := \La_{\x}^{1/2}\,\La_{\bt}\,\La_{\x}^{1/2}$, which is also diagonal. Hence
\begin{align*}
\A^{-1} = \Q\,\La_\A^{-1}\,\Q^\top,
\end{align*}
where $\La_A^{-1} = \mathrm{diag}\left(1/(\lambda_i \beta_i)\right)$. Inside $\Wstb$, we have
\begin{align*}
(n+1)\,\Iden + M\,\A^{-1} =
\Q\,(n+1)\,\Iden\,\Q^\top +
M\,
\left(\Q\,\La_\A^{-1}\,\Q^\top\right) =
\Q\,\left[(n+1)\,\Iden + M\,\La_\A^{-1}\right]\,\Q^\top.
\end{align*}
We know that $\La_{\text{diag}} ~:=~ (n+1)\,\Iden + M\,\La_\A^{-1}$ is also diagonal with diagonal entries $(n+1) + \tfrac{M}{\lambda_i\beta_i}$. It follows that
\begin{align*}
\Wstb = \left((n+1)\,\Iden + M\,\A^{-1}\right)^{-1} = \Q\;\mathrm{diag}\!\left(
   \frac{\lambda_i\,\beta_i}{(n+1) \lambda_i\,\beta_i + M}
\right)\;\Q^\top ,
\end{align*}
where $M = \tr{\A} + \sigma^2 = \tr{\La_{\boldsymbol{\beta}} \La_{\x}} = \sigma^2 + \sum_{i=1}^{d} \lambda_i \beta_i$. This concludes the proof.
\end{proof}

\begin{lemma}[Covariance Shift]
\label{lem:covariance-shift}
Consider the setting in Section \ref{sect problem setup} where
\begin{itemize}
    \item The true labels are generated by $y = \x^\top \bt + \xi$ with $\x \sim \mathcal{N}(\boldsymbol{0}, \bSix)$, $\bt \sim \mathcal{N}(\boldsymbol{0}, \bSibt)$, and noise $\xi \sim \mathcal{N}(0,\sigma^2)$.
    \item The prediction of a linear attention model is of the form
    \[
    \hat{y} \;=\; \x^\top \W \X^\top \y,
    \]
    where the context $\X$ in $\R^{n \times d}$ collects previous samples $\x_i^\top$ as rows, $\y$ is the corresponding label vector, and $\W \in \R^{d\times d}$ is the model parameter matrix.
\end{itemize}
Then, the invertible transformation
\begin{align*}
(\x, \X, \bt, \W) \mapsto (\bar{\x}, \bar{\X}, \bar{\bt}, \bar{\W}) \text{ where }
  \xb \;=\; \bSix^{-1/2} \x, 
  \quad
  \Xb \;=\; \X\,\bSix^{-1/2},
  \quad
  \bar{\bt} \;=\; \bSix^{1/2}\,\bt, 
  \quad
  \bar{\W}=\bSix^{1/2}\,\W \,\bSix^{1/2},
\end{align*}
preserves the population risks in \eqref{eq:pop-loss-pre-train} and \eqref{eq:pop-loss-new-task}. More precisely, defining $\bar{\Lc}(\bar{\W})$ to be the population loss \eqref{eq:pop-loss-new-task} evaluated under the transformed setting $(\bar{\x}, \bar{\X}, \bar{\bt}, \bar{\W})$, we have
\[
\Lc(\W)
=
\bar{\Lc}(\bar{\W}). 
\]
\end{lemma}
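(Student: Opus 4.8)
The plan is to exhibit the stated change of variables as an exact bijection between the original data model and a ``whitened'' one in which the feature covariance is $\Iden$ and the task covariance is $\bSix^{1/2}\bSibt\bSix^{1/2}$, and then to check that (i) the per-sample squared error is invariant as an algebraic identity and (ii) the sampling measures match under the bijection, so that taking expectations of the pointwise identity gives the claim. Since $\bSix\succ 0$, the matrices $\bSix^{\pm 1/2}$ exist, the map $(\x,\X,\bt,\W)\mapsto(\xb,\Xb,\bar{\bt},\bar{\W})$ is invertible, and a single transformation simultaneously handles both \eqref{eq:pop-loss-pre-train} and \eqref{eq:pop-loss-new-task} because the same $\bSix$ governs the features in both cases.

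First I would check that the labels are literally unchanged. From $y_i=\x_i^\top\bt+\xi_i$ together with $\x_i=\bSix^{1/2}\xb_i$ and $\bt=\bSix^{-1/2}\bar{\bt}$ we get $y_i=\xb_i^\top\bar{\bt}+\xi_i$; stacking rows, $\y=\X\bt+\bxi=\Xb\bar{\bt}+\bxi$, i.e.\ the label vector $\y$ in the original parametrization coincides with the label vector $\bar{\y}$ of the transformed model, and identically $y=\x^\top\bt+\xi=\xb^\top\bar{\bt}+\xi=\bar{y}$ for the query. Second, the prediction is invariant: using $\x^\top=\xb^\top\bSix^{1/2}$ and $\X^\top=\bSix^{1/2}\Xb^\top$, $\SMi{\Z}{\W}=\x^\top\W\X^\top\y=\xb^\top\bigl(\bSix^{1/2}\W\bSix^{1/2}\bigr)\Xb^\top\bar{\y}=\xb^\top\bar{\W}\Xb^\top\bar{\y}$. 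Combining the two observations, the integrand satisfies $(y-\SMi{\Z}{\W})^2=(\bar{y}-\xb^\top\bar{\W}\Xb^\top\bar{\y})^2$ pointwise, i.e.\ before any averaging.

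Third, I would match the distributions. In the original model $\x,\x_i\distas\Nn(\boldsymbol{0},\bSix)$, so their whitened versions $\xb=\bSix^{-1/2}\x$ and $\xb_i=\bSix^{-1/2}\x_i$ are i.i.d.\ $\Nn(\boldsymbol{0},\Iden)$; in the pretraining case $\bt\sim\Nn(\boldsymbol{0},\bSibt)$ pushes forward to $\bar{\bt}=\bSix^{1/2}\bt\sim\Nn(\boldsymbol{0},\bSix^{1/2}\bSibt\bSix^{1/2})$, and the noises $\xi,\xi_i$ are untouched. Hence the pushforward of the joint law of $(\x,\X,\bt,\bxi,\xi)$ under the (linear, invertible) map is exactly the joint law of $(\xb,\Xb,\bar{\bt},\bxi,\xi)$ in the transformed model, which is the data model of Section~\ref{sect problem setup} with $\bSix$ replaced by $\Iden$ and $\bSibt$ by $\bSix^{1/2}\bSibt\bSix^{1/2}$. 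Taking expectations of the pointwise identity from the previous paragraph over these matched measures yields $\Lc(\W)=\bar{\Lc}(\bar{\W})$ for \eqref{eq:pop-loss-new-task}; running the same computation with $\bt$ random (rather than the fixed test task) gives the corresponding statement for \eqref{eq:pop-loss-pre-train}.

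This argument has no real technical obstacle; it is essentially bookkeeping. The one point to be careful about is consistency of the transformation across all the objects at once: the context inputs $\x_i$ must be transformed exactly as the query $\x$ (so that $\Xb=\X\bSix^{-1/2}$ is the correct row-wise stacking), the noise terms must be left alone, and the two-sided conjugation $\bar{\W}=\bSix^{1/2}\W\bSix^{1/2}$ must be paired with the half-inverse acting on $\x,\X$ on one side and the half-power acting on $\bt$ on the other, so that all the $\bSix^{\pm 1/2}$ factors cancel in $\x^\top\W\X^\top\y$. Noting invertibility explicitly makes the equality symmetric, so one may invoke it in either direction when reducing a general-covariance computation to the isotropic-feature case.
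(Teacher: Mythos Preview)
Your proposal is correct and follows essentially the same approach as the paper: verify that labels and predictions are pointwise invariant under the change of variables, identify the induced distributions on the whitened side, and conclude that the population risks coincide. If anything, you are slightly more explicit than the paper about the pushforward-of-measures step, but the argument is the same.
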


\begin{proof}
Since $\x \sim \Nn(\boldsymbol{0}, \bSix)$, multiplying by $\bSix^{-1/2}$ yields $\xb = \bSix^{-1/2} \x \sim \Nn(\mathbf{0}, I_d)$.  Likewise, because $\bt \sim \Nn(\mathbf{0},\bSibt)$, we have $\bar{\bt} = \bSix^{1/2}\,\bt \sim \Nn\!\left(\mathbf{0},\bSix^{1/2}\,\bSibt\,\bSix^{1/2}\right)$. Next, we observe
\[
\Xb
~:=~
\X\,\bSix^{-1/2}
\quad\Longrightarrow\quad
\Xb\,\bar{\bt}
~=~
\left(\X\,\bSix^{-1/2}\right)\left(\bSix^{1/2}\,\bt\right)
~=~
\X\,\bt,
\]
ensuring that label vector corresponding to context data $\y = \X \bt + \boldsymbol{\xi} = \Xb \bar{\bt} + \boldsymbol{\xi} $ stays the same. Also, note that the label is preserved for the query token $y = \x^\top \bt + \xi = \xb^\top \bar{\bt} + \xi$, as well. Under the map $\x \mapsto \xb=\bSix^{-1/2} \x$, $\X \mapsto \Xb= \X \bSix^{-1/2}$, $\bt \mapsto \bar{\bt} \;=\; \bSix^{1/2}\,\bt$, $\W \mapsto \bar{\W}=\bSix^{1/2}\,\W\,\bSix^{1/2}$, the prediction of the linear attention model is also preserved:
\[
\x^\top \W \X^\top \y 
~=\;
(\bSix^{-1/2}\x)^\top \bSix^{1/2} \W \bSix^{1/2} \left(\bSix^{-1/2} \X^\top \y \right)
~=\;
\xb^\top\left(\bSix^{1/2} \W \bSix^{1/2}\right)\,\left(\Xb^\top \y\right) = \xb^\top \bar{\W} \bar{\X}^\top \y.
\] 
As a result, the errors in the labels remain numerically unchanged as 
\[
(y - \x^\top \W \X^\top \y)^2 = (y - \xb^\top \bar{\W} \Xb^\top \y)^2.
\] 
Hence, this implies that the population losses described in \eqref{eq:pop-loss-pre-train}, \eqref{eq:pop-loss-new-task} are preserved.
\[
\Lc(\W) = \bar{\Lc}(\bar{\W}). 
\]
In particular, if $\W^*$ is the unique minimizer of the pre-training loss in \eqref{eq:pop-loss-pre-train} in the original coordinates, its counterpart in the transformed system is precisely
\[
\Wstb
~=~
\bSix^{1/2}\,\W^*\,\bSix^{1/2}.
\]
This completes the proof.
\end{proof}

\begin{restatable}{theorem}{nonisogeneralcovoptimalstep}\label{theo:non-iso-general-cov-opt-step}
Let $n/d = \Theta(1)$ and $\sigma^2 = 0$. Suppose the covariance matrices are jointly diagonalizable by an orthogonal matrix $\Q \in \R^{d \times d}$, so that $\bSix = \Q \La_{\x} \Q^\top$ and $\bSibt = \Q \La_{\beta} \Q^\top$. Let $\W$ be any matrix that's also jointly diagonalizable by $\Q$ such that if $\bSix^{1/2} \, \W \, \bSix^{1/2} = \Q \La \Q^\top$, then all eigenvalues of $\La$ lie in the interval $[0, \frac{1}{n+1}]$. Define $\betattt := \Q^\top \bSix^{1/2} \betatt$, $A := \betattt^\top \left(\Iden - n\La \right)^2 \betattt$, and $B := n \| \betattt \|_2^2 \tr{\La^2}$. Then, the optimal step size that minimizes the population loss given in \eqref{eq:pop-loss-new-task} after the test-time training update is
\begin{align*}
\etast \approx \frac{A}{2 (k+d) \, n^2 \, \| \betattt \|_2^2 \left( A + B \right)}.
\end{align*}
With this optimal step-size $\etast$, the improvement in the loss 
due to test-time training and the initial loss are approximately
\begin{align*}
\Lc(\W) - \Lctt(\Wnew) \approx \frac{k}{k+d} \frac{A^2}{A + B} \ , \quad \Lc(\W) \approx A + B.
\end{align*}
\end{restatable}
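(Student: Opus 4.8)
The plan is to deduce \Cref{theo:non-iso-general-cov-opt-step} from the already-established diagonal case \Cref{theo:non-iso-opt-step} via a distribution- and loss-preserving change of variables. First I would apply \Cref{lem:covariance-shift}: the map $\x\mapsto\bSix^{-1/2}\x$, $\X\mapsto\X\bSix^{-1/2}$, $\bt\mapsto\bSix^{1/2}\bt$, $\W\mapsto\bSix^{1/2}\W\bSix^{1/2}$ renders the feature covariance isotropic while leaving the pre-training loss \eqref{eq:pop-loss-pre-train}, the test loss \eqref{eq:pop-loss-new-task}, and --- because it acts directly on the raw data --- the rank-one TTT update of \Cref{prop:one-step-update} and hence $\Lctt(\Wnew)$ unchanged. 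Composing with the orthogonal change of basis $\x\mapsto\Q^\top\x$, $\X\mapsto\X\Q$, $\W\mapsto\Q^\top\W\Q$, which preserves the isotropic Gaussians and the bilinear prediction $\x^\top\W\X^\top\y$, simultaneously diagonalizes the weight matrix and $\bSibt$. Under the composed transformation the test task becomes $\betattt=\Q^\top\bSix^{1/2}\betatt$ and the weight becomes the diagonal matrix $\La$ with $\bSix^{1/2}\W\bSix^{1/2}=\Q\La\Q^\top$, whose eigenvalues lie in $[0,\tfrac{1}{n+1}]$ by hypothesis. This is exactly the setting of \Cref{theo:non-iso-opt-step} with task $\betattt$ and diagonal weight $\La$; since there $A=\betattt^\top(\Iden-n\La)^2\betattt$ and $\|\La\|_F^2=\tr{\La^2}$ (so $B=n\|\betattt\|_2^2\tr{\La^2}$), transporting its conclusion back through the change of variables gives the claimed optimal step size, the improvement $\tfrac{k}{k+d}\tfrac{A^2}{A+B}$, and the initial loss $A+B$. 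Taking $\bSix=\Iden$, $\Q=\Iden$ recovers \Cref{theo:non-iso-opt-step} itself, which is why the two statements are equivalent.

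For completeness --- and this is where the genuine work sits, inside the proof of \Cref{theo:non-iso-opt-step} --- the core computation mirrors the proof of \Cref{theo:iso-opt} but with a general diagonal $\W$ instead of a scalar multiple of $\Iden$. By \Cref{lem:pop-loss-expression} with $\bSix=\Iden$, $\sigma^2=0$ and symmetric $\W$, one has $\Lc(\W)=\betattt^\top\bigl[(\Iden-n\W)^2+n\tr{\W^2}\,\Iden\bigr]\betattt+n\,\betattt^\top\W^2\betattt=A+B+\text{(lower order)}$. Substituting the noiseless labels $\Ytrain=\Xtrain\betattt$, $\Yprompt=\Xprompt\betattt$ into \Cref{prop:one-step-update} gives $\Wnew=\W+2\eta\,\Xtrain^\top\Xtrain\bigl(\betattt-\W\Xprompt^\top\Xprompt\betattt\bigr)\betattt^\top\Xprompt^\top\Xprompt$, and the Gaussian approximation of \Cref{lem:gaussian_approximation} replaces $\Xprompt^\top\Xprompt\betattt$ by $n\betattt+\sqrt{n}\,\g$ with $\g\sim\Nn(\boldsymbol{0},\|\betattt\|_2^2\Iden)$. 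One then expands $\Delta\Lc=\Lc(\W)-\Lc(\Wnew)$ as in \eqref{eq:loss-improvement-noiseless}, takes expectations over $\Xtrain$ and $\g$ using $\E_{\Xtrain}[\Xtrain^\top\Xtrain\Xtrain^\top\Xtrain]=k(k+d+1)\Iden$ (\Cref{lem:XX-M-XX} with $\M=\Iden$) together with the Gaussian moments of $\g$, and obtains a quadratic $\Delta\Lc\approx c_1\eta-c_2\eta^2$ with $c_1,c_2$ expressible through $A$, $B$, $n$, $d$, $k$, $\|\betattt\|_2$; optimizing gives $\etast=c_1/(2c_2)$, $\Delta\Lc=c_1^2/(4c_2)$, which simplify to the stated forms after dropping subleading terms.

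The main obstacle is the bookkeeping in the second-moment term $\E_{\Xtrain,\g}[\Delta\W^\top\Delta\W]$: with $\W$ a general diagonal matrix one must carry $\W$ through every cross term between $\betattt$ and $\g$, evaluate the fourth-order Gaussian expectations (terms of the form $\E[\g\g^\top\M\g\g^\top]$), and then show that after sandwiching by $\betattt^\top(\cdot)\betattt$ only the combinations $A$ and $B$ survive at leading order --- an organizing principle that was absent in the isotropic proof, where $\W\propto\Iden$ made everything collapse automatically. One also has to check that the discarded pieces --- the term $n\betattt^\top\W^2\betattt$ in $\Lc(\W)$ (which is bounded by $B$), the $\mathcal{O}(n^{-1})$ residual of the Gaussian approximation, and the lower-order terms dropped en route --- are genuinely subleading when $n/d=\Theta(1)$, which is exactly the content of \Cref{rem:validity-of-approximations}. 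No new conceptual ingredient beyond the isotropic analysis is needed; the difficulty is entirely in keeping the non-scalar $\W$ organized.
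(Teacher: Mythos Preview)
Your approach is essentially the same as the paper's: both apply the covariance shift of \Cref{lem:covariance-shift} to reduce to isotropic features and then carry out the moment computation with a general diagonal weight $\La$, using the Gaussian approximation (\Cref{lem:gaussian_approximation}), the fourth-moment identity (\Cref{lem:XX-M-XX}), and the lower-order bookkeeping of \Cref{rem:validity-of-approximations}. The only differences are cosmetic --- you rotate by $\Q$ upfront and phrase the computation as an invocation of \Cref{theo:non-iso-opt-step}, whereas the paper carries $\Q\La\Q^\top$ through the algebra and presents this single computation as the proof of both theorems.
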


\begin{proof}
We consider the general non-isotropic covariance scenario where $\bSix$ and $\bSibt$ may be arbitrary covariance matrices. Our analysis will hold for any initial weight matrix $\W$ in $\R^{d\times d}$ such that if $\bSix^{1/2} \, \W \, \bSix^{1/2} = \Q \La \Q^\top$, all eigenvalues  of $\La$ are in $\left[0, \frac{1}{n+1} \right]$. Now, we aim to calculate the expected gain in the loss with respect to $(n+k)$ samples used during the test-time training process. That is, we will take the expectation of the loss function given by Lemma \ref{lem:pop-loss-expression} and compute $\Lc(\W) - \Lctt(\Wnew) = \E_{\Xtrain, \Xprompt} \left[ \Delta \Lc \right]$ where $\Delta \Lc$ is given by:
\begin{align}
\Delta \Lc \, & = \Lc(\W) - \Lc(\Wnew) \nonumber \\
&= \betatt^\top
\left[
\,n\,\bSix\,\left(\Wnew - \W\right)\,\bSix
~+~
n\,\bSix\,\left(\Wnew^\top - \W^{\top}\right)\,\bSix
~+~ n(n+1)\,\bSix
\left(
  \W^{\top}\,\bSix\,\W
  ~-~
  \Wnew^\top\,\bSix\,\Wnew
\right)
\bSix \right. \nonumber \\
& \left.
\quad +n\,\left(
  \tr{\W^{\top}\,\bSix\,\W\,\bSix}
  -\tr{\Wnew^\top\,\bSix\,\Wnew\,\bSix}
\right)\bSix
\right]
\betatt. \label{eq:loss-improvement-non-iso}
\end{align}
Now, recall the test-time training update from Proposition \ref{prop:one-step-update} when the new task is $\betatt$: 
\begin{align*}
\Wnew - \W 
= 2\,\eta\, \Xtrain^\top \Xtrain\left(\Iden - \W\,\Xprompt^\top \Xprompt\right) \betatt \, \betatt^\top \Xprompt^\top \Xprompt , \\
\Wnew^\top - \W^{\top} = 2 \eta \Xprompt^\top \Xprompt \, \betatt \, \betatt^\top \left(\Iden - \Xprompt^\top \Xprompt \, \W^{\top} \right) \Xtrain^\top \Xtrain .
\end{align*}
As in previous theorems, we will use the Gaussian approximation following Lemma \ref{lem:gaussian_approximation} by approximating $\Xprompt^\top \Xprompt \, \betatt \approx n \bSix \, \betatt + \sqrt{n} \, \bSix^{1/2} \g$ where $\g \sim \Nn(\boldsymbol{0}, \| \betatt^2\| \, \Iden)$. This non-isotropic form can be obtained by factoring out $\bSix^{1/2}$ and reducing the problem to the isotropic case from \Cref{lem:gaussian_approximation}. Before going forward, let's apply the covariance shift discussed in Lemma \ref{lem:covariance-shift}, which transforms:
\begin{align*}
\xb = \bSix^{-1/2} \x \sim \Nn(\mathbf{0}, \Iden_d), \quad \betattb = \bSix^{1/2} \betatt \, \sim \Nn(\mathbf{0}, \bSix^{1/2} \bSibt \bSix^{1/2}), \;\, \text{and} \;\, \Wb = \bSix^{1/2} \W \bSix^{1/2}.
\end{align*}
Likewise, we define the transformed versions of training and context data as $\Xtrainb := \Xtrain\,\bSix^{-1/2}$ and $\Xpromptb := \Xprompt\,\bSix^{-1/2}$. Now, suppose that $\Wb = \Q \La  \Q^\top$ and let $\Delta \W := \Wnewb - \Wb = \bSix^{1/2} \Wnew \bSix^{1/2} - \bSix^{1/2} \W \bSix^{1/2}$. Then,
\begin{align*}
\Delta\W &= 2 \eta \Xtrainb^\top \Xtrainb \left(\betattb - \Q \La \Q^\top \Xpromptb^\top \Xpromptb \, \betattb \right) \betattb^\top \Xpromptb^\top \Xpromptb \\
&\approx 2 \eta \Xtrainb^\top \Xtrainb \left( \betattb - n \Q \La \Q^\top \betattb - \sqrt{n} \Q \La \Q^\top \g \right) \left(n \, \betattb^\top + \sqrt{n} \g^\top \right) \\
& = 2 \eta \Xtrainb^\top \Xtrainb \left( 
n \, \betattb \, \betattb^\top + \sqrt{n} \, \betattb \g^\top - n^2 \Q \La \Q^\top \betattb \, \betattb^\top -n\sqrt{n} \, \Q \La \Q^\top \betattb \, \g^\top - n\sqrt{n} \, \Q \La \Q^\top \g \, \betattb^\top -n \Q \La \Q^\top \g \g^\top \right) \\
& = 2 \eta \Xtrainb^\top \Xtrainb \left( n \left(\Iden - n \Q \La \Q^\top \right) \betattb \, \betattb^\top + \sqrt{n} \left( \Iden - n \Q \La \Q^\top \right) \betattb \, \g^\top - n\sqrt{n}\Q \La \Q^\top \g \, \betattb^\top -n \Q \La \Q^\top \g \g^\top \right) \\
\Delta\W^\top &= 2\eta \left( n \, \betattb \, \betattb^\top \left(\Iden - n \Q \La \Q^\top \right) + \sqrt{n} \g \, \betattb^\top \left( \Iden - n \Q \La \Q^\top \right) - n\sqrt{n} \, \betattb \, \g^\top \Q \La \Q^\top - n \g \g^\top \Q \La \Q^\top \right) \Xtrainb^\top \Xtrainb 
\end{align*}
By plugging the above definitions of $\Wnewb$, $\Wb$ into Equation \eqref{eq:loss-improvement-non-iso}, we encounter the following difference term in two of the expressions:
\begin{align}
\bar{\W}^{\top} \Wb
~-~
\Wnewb^\top \Wnewb &= \bar{\W}^{\top} \Wb
- (\bar{\W}^{\top} + \Delta\W^\top) (\Wb + \Delta\W) \nonumber \\
& = - \bar{\W}^{\top} \Delta\W - \Delta\W^\top \Wb - \Delta\W^\top \Delta\W \label{eq:second-order-diff-non-iso}.
\end{align}
Hence, we need to calculate both the first-order expectations $\E_{\Xtrainb, \g} \left[ \Delta\W \right], \E_{\Xtrainb, \g} \left[ \Delta\W^\top \right]$ and the second-order expectation $\E_{\Xtrainb, \g} \left[\Delta\W^\top \Delta\W \right]$. Calculating the first-order expectations gives:
\begin{align}
\E_{\Xtrainb, \g} \left[ \Delta\W \right] & = 2 \eta k n \left( 
\left( \Iden - n \Q \La \Q^\top \right) \betattb \, \betattb^\top - \| \betattb \|_2^2 \, \Q \La \Q^\top \right), \label{eq:first-order-expectation-delta-w} \\
\E_{\Xtrainb, \g} \left[ \Delta\W^\top \right] & = 2 \eta k n \left( \betattb \, \betattb^\top \left( \Iden - n \Q \La \Q^\top \right) - \| \betattb \|_2^2 \, \Q \La \Q^\top \right) .\label{eq:first-order-expectation-delta-w-transpose}
\end{align}
Using \eqref{eq:first-order-expectation-delta-w} and \eqref{eq:first-order-expectation-delta-w-transpose}, the first two terms of the loss difference are:
\begin{align}
\E_{\Xtrainb, \g} \left[ \betattb^\top \, n \,\left(\Wnewb - \Wb\right) \, \betattb \right] &= 2 \eta k n \left( n \| \betattb \|_2^2 \, \betattb^\top \left( \Iden - n \Q \La \Q^\top \right) \betattb - n \| \betattb \|_2^2 \, \betattb^\top \, \Q \La \Q^\top \betattb \right) ,\label{eq:first-term-loss-diff-non-iso} \\
\E_{\Xtrainb, \g} \left[ \betattb^\top \, n \left(\Wnewb^\top - \bar{\W}^{\top}\right) \, \betattb \right] & = 2 \eta k n \left( n \| \betattb \|_2^2 \, \betattb^\top \left( \Iden - n \Q \La \Q^\top \right) \betattb - n \| \betattb \|_2^2 \, \betattb^\top \, \Q \La \Q^\top \betattb \right) .\label{eq:second-term-loss-diff-non-iso}
\end{align}
Similar to proofs of previous theorems, by plugging $\M = \Iden$ in Lemma \ref{lem:XX-M-XX}, we know that $\E_\X \left[( \X^\top \X)(\X^\top \X) \right] = k (k+1) \bSi^2 + k \tr{\bSi} \bSi$ where $\X \in \R^{k \times d}$ has its rows drawn i.i.d from $\Nn(0, \bSi)$. Therefore, 
\begin{align*}
\E_\Xtrainb \left[ \Xtrainb^\top \Xtrainb \Xtrainb^\top \Xtrainb \right] = k (k+d+1) \Iden .
\end{align*}
Utilizing the above fact, we compute:
\begin{align*}
& \E_{\Xtrainb, \g} \left[\Delta\W^\top \Delta\W \right] \\
& = 4 \eta^2 \E_{\g} \left[ \left( n \, \betattb  \betattb^\top \left(\Iden - n \Q \La \Q^\top \right) + \sqrt{n} \g \, \betattb^\top \left( \Iden - n \Q \La \Q^\top \right) - n\sqrt{n} \, \betattb \, \g^\top \Q \La \Q^\top - n \g \g^\top \Q \La \Q^\top \right)  \right. \\
& \quad \left. k (k+d+1) \Iden \left( n \left(\Iden - n \Q \La \Q^\top \right) \betattb \, \betattb^\top + \sqrt{n} \left( \Iden - n \Q \La \Q^\top \right) \betattb \, \g^\top - n\sqrt{n} \, \Q \La \Q^\top \g \, \betattb^\top -n \Q \La \Q^\top \g \g^\top \right) \right] \\
& = 4 \eta^2 k (k+d+1) \E_{\g} \left[ n^2 \betattb \, \betattb^\top \left(\Iden - n \Q \La \Q^\top \right) \left(\Iden - n \Q \La \Q^\top \right) \betattb \, \betattb^\top - n^2 \betattb \, \betattb^\top \left(\Iden - n \Q \La \Q^\top \right) \Q \La \Q^\top \g \g^\top \right. \\
& \quad -n^2 \g \g^\top \Q \La \Q^\top \left(\Iden - n \Q \La \Q^\top \right) \betattb \, \betattb^\top + n^2 \g \g^\top \Q \La \Q^\top \Q \La \Q^\top \g \g^\top + n \g \, \betattb^\top \left(\Iden - n \Q \La \Q^\top \right) \left(\Iden - n \Q \La \Q^\top \right) \betattb \, \g^\top \\
& \quad \left. + n^3 \betattb \, \g^\top \Q \La \Q^\top \Q \La \Q^\top \g \, \betattb^\top - n^2 \g \, \betattb^\top \left(\Iden - n \Q \La \Q^\top \right) \Q \La \Q^\top \g \, \betattb^\top - n^2 \betattb \, \g^\top \Q \La \Q^\top \left(\Iden - n \Q \La \Q^\top \right) \betattb \, \g^\top \right] .
\end{align*}
Let's inspect each term inside the expectation in the above sum:
\begin{align*}
\E_{\g} \left[ n^3 \betattb \g^\top \Q \La \Q^\top \Q \La \Q^\top \g \betattb^\top \right]& = n^3 \betattb \E_{\g} \left[ \g^\top \Q \La^2 \Q^\top \g \right] \betattb^\top \\
& = n^3 \betattb \E_{\z} \left[ \z^\top \La^2 \z \right] \betattb^\top \text{ where } \z \sim \mathcal{N} \left( \boldsymbol{0}, \| \betattb \|_2^2 \, \Iden \right) \\
& = n^3 \| \betattb \|_2^2 \, \tr{\La^2} \, \betattb \, \betattb^\top \\
\E_{\g} \left[ n^2 \g \betattb^\top \left(\Iden - n \Q \La \Q^\top \right) \Q \La \Q^\top \g \betattb^\top \right] &= n^2 \E_{\g} \left[ \g \left(\betattb^\top \left(\Iden - n \Q \La \Q^\top \right) \Q \La \Q^\top \g \right) \right] \betattb^\top \\
&= n^2 \E_{\g} \left[ \g \left(\g^\top \Q \La \Q^\top \left(\Iden - n \Q \La \Q^\top \right) \betattb \right) \right] \betattb^\top \\
&= n^2  \| \betattb \|_2^2 \, \Q \La \Q^\top \left(\Iden - n \Q \La \Q^\top \right) \betattb \, \betattb^\top \\
\E_{\g} \left[ n^2 \betattb \g^\top \Q \La \Q^\top \left(\Iden - n \Q \La \Q^\top \right) \betattb \g^\top \right] &= n^2 \betattb \E_{\g} \left[ \left(\g^\top \Q \La \Q^\top \left(\Iden - n \Q \La \Q^\top \right) \betattb \right) \g^\top \right] \\
&= n^2 \betattb \E_{\g} \left[ \left(\betattb^\top \left(\Iden - n \Q \La \Q^\top \right) \Q \La \Q^\top \g \right) \g^\top \right] \\
&= n^2 \| \betattb \|_2^2 \, \betattb \, \betattb^\top \left(\Iden - n \Q \La \Q^\top \right) \Q \La \Q^\top \\
\E_{\g} \left[ n \g \betattb^\top \left(\Iden - n \Q \La \Q^\top \right) \left(\Iden - n \Q \La \Q^\top \right) \betattb \g^\top \right] & = n \E_{\g} \left[ \g \left(\betattb^\top \left(\Iden - n \Q \La \Q^\top \right) \left(\Iden - n \Q \La \Q^\top \right) \betattb \right) \g^\top \right] \\
&= n \, \betattb^\top \left(\Iden - n \Q \La \Q^\top \right) \left(\Iden - n \Q \La \Q^\top \right) \betattb \E_{\g} \left[ \g \g^\top \right] \\
&= n \, \| \betattb \|_2^2 \left( \betattb^\top \left(\Iden - n \Q \La \Q^\top \right) \left(\Iden - n \Q \La \Q^\top \right) \betattb \right) \Iden \\
\E_{\g} \left[ n^2 \g \g^\top \Q \La \Q^\top \Q \La \Q^\top \g \g^\top \right] & = n^2 \E_{\g} \left[ \g \g^\top \Q \La^2 \Q^\top \g \g^\top \right] \\
& = n^2 \| \betattb \|_2^4 \left( \tr{\La^2} \Iden + 2 \Q \La^2 \Q^\top \right) \\
\E_{\g} \left[ n^2 \g \g^\top \Q \La \Q^\top \left(\Iden - n \Q \La \Q^\top \right) \betattb \betattb^\top \right] & = n^2 \| \betattb \|_2^2 \Q \La \Q^\top \left(\Iden - n \Q \La \Q^\top \right) \betattb \, \betattb^\top \\
\E_{\g} \left[ n^2 \betattb \, \betattb^\top \left(\Iden - n \Q \La \Q^\top \right) \Q \La \Q^\top \g \g^\top \right] & = n^2 \| \betattb \|_2^2 \, \betattb \, \betattb^\top \left(\Iden - n \Q \La \Q^\top \right) \Q \La \Q^\top .
\end{align*}
Hence, considering that the eigenvalues of $\La$ are smaller than $\frac{1}{n+1}$, we drop lower order terms after combining the results above:
\begin{align*}
\E_{\Xtrainb, \g} \left[\Delta\W^\top \Delta\W \right] & =  4 \eta^2 k (k+d+1) \left[ n^3 \| \betattb \|_2^2 \, \tr{\La^2} \betattb \, \betattb^\top + n^2 \betattb \, \betattb^\top \Q \left(\Iden - n \La \right)^2 \Q^\top \betattb \, \betattb^\top \right.\\
& \quad - 2 n^2  \| \betattb \|_2^2 \, \Q \left(\La - n \La^2 \right) \Q^\top \betattb \, \betattb^\top -  2 n^2 \| \betattb \|_2^2 \, \betattb \, \betattb^\top \Q \left(\La - n \La^2 \right) \Q^\top \\
& \quad \left. + n \, \| \betattb \|_2^2 \left( \betattb^\top \Q \left(\Iden - n \La \right)^2 \Q^\top \betattb \right) \Iden + n^2 \| \betattb \|_2^4 \left( \tr{\La^2} \Iden + 2 \Q \La^2 \Q^\top \right) \right] \\
& \approx 4 \eta^2 k (k+d+1) \left[ n^3 \| \betattb \|_2^2 \tr{\La^2} \betattb \betattb^\top + n^2 \betattb \, \betattb^\top \Q \left(\Iden - n\La \right)^2 \Q^\top \betattb \, \betattb^\top \right. \\
& \quad \left. + n \, \| \betattb \|_2^2 \left( \betattb^\top \Q \left(\Iden - n \La \right)^2 \Q^\top \betattb \right) \Iden \right] .
\end{align*}
Thus, we reach the following result:
\begin{align}
\E_{\Xtrainb, \g} \left[\betattb^\top \Delta\W^\top \Delta\W \betattb \right] & \approx 4 \eta^2 k (k+d+1) \left[ n^3 \| \betattb \|_2^6 \, \tr{\La^2} + (n^2+n) \, \| \betattb \|_2^4 \, \betattb^\top \Q \left(\Iden - n \La \right)^2 \Q^\top \betattb \right] \label{eq:first-order-expectation-betatt-delta-w-transpose-delta-w}.
\end{align}
Now, recall from \Cref{eq:second-order-diff-non-iso} that the second-order difference is in the form:
\begin{align*}
\bar{\W}^{\top}\,\Wb
~-~
\Wnewb^\top\,\Wnewb & = -\bar{\W}^{\top} \Delta\W - \Delta\W^\top \Wb - \Delta\W^\top \Delta\W .
\end{align*}
Therefore, we also need to calculate the expectations of the terms $\bar{\W}^{\top} \Delta\W, \Delta\W^\top \Wb$. Using previous results \eqref{eq:first-order-expectation-delta-w} and \eqref{eq:first-order-expectation-delta-w-transpose}, we get:
\begin{align*}
\E_{\Xtrainb, \g} \left[ -\bar{\W}^{\top} \Delta\W \right] & = - 2 \eta k \, \Q \La \Q^\top \left( n \left( \Iden - n \Q \La \Q^\top \right) \betattb \, \betattb^\top -n \| \betattb \|_2^2 \, \Q \La \Q^\top \right) \\
& = -2 \eta k n \left( \Q (\La - n \La^2) \Q^\top \betattb \betattb^\top - \| \betattb \|_2^2 \, \Q \La^2 \Q^\top \right) \\
\E_{\Xtrainb, \g} \left[ -\Delta\W^\top \Wb \right] & = -2 \eta k n \left( \betattb \, \betattb^\top \Q (\La - n \La^2) \Q^\top - \| \betattb \|_2^2 \, \Q \La^2 \Q^\top \right).
\end{align*}
This gives us:
\begin{align}
\E_{\Xtrainb, \g} \left[ - \betattb^\top \bar{\W}^{\top} \Delta\W \betattb \right] &= -2 \eta k n \left( \| \betattb \|_2^2 \, \betattb^\top \Q (\La - n \La^2) \Q^\top \betattb - \| \betattb \|_2^2 \, \betattb^\top \Q \La^2 \Q^\top \betattb \right) \label{eq:first-order-expectation-betatt-delta-w} \\
\E_{\Xtrainb, \g} \left[ -\betattb^\top \Delta\W^\top \Wb \betattb \right] &= -2 \eta k n \left( \| \betattb \|_2^2 \, \betattb^\top \Q (\La - n \La^2) \Q^\top \betattb - \| \betattb \|_2^2 \, \betattb^\top \Q \La^2 \Q^\top \betattb \right) \label{eq:first-order-expectation-betatt-delta-w-transpose}.
\end{align}
Exploiting \eqref{eq:first-order-expectation-betatt-delta-w-transpose-delta-w}, \eqref{eq:first-order-expectation-betatt-delta-w} and \eqref{eq:first-order-expectation-betatt-delta-w-transpose}, we get
\begin{align}
& \E_{\Xtrainb, \g} \left[ \betattb^\top \, n(n+1)\,\left(
  \bar{\W}^{\top}\,\Wb
  ~-~
  \Wnewb^\top\,\Wnewb
\right) \betattb \right] \nonumber \\
& \qquad \approx - 4 \eta^2 k (k+d+1) n (n+1) \left[ n^3 \| \betattb \|_2^6 \, \tr{\La^2} + (n^2 + n) \, \| \betattb \|_2^4 \, \betattb^\top \Q \left(\Iden - n\La \right)^2 \Q^\top \betattb \right] \nonumber \\
& \qquad \quad -4 \eta k n^2 (n+1) \left( \| \betattb \|_2^2 \, \betattb^\top \Q (\La - n \La^2) \Q^\top \betattb - \| \betattb \|_2^2 \, \betattb^\top \Q \La^2 \Q^\top \betattb \right) \label{eq:third-term-loss-diff-non-iso} \\
& \E_{\Xtrainb, \g} \left[ \betattb^\top \, n\,\left(
  \tr{\bar{\W}^{\top}\,\Wb}
  -\tr{\Wnewb^\top\,\Wnewb}
\right) \, \betattb \right] \nonumber \\
& \qquad \approx - 4 \eta^2 k (k+d+1) \left[ n^4 \| \betattb \|_2^6 \, \tr{\La^2} + n^2 (n+d) \, \| \betattb \|_2^4  \, \betattb^\top \Q \left(\Iden - n \La \right)^2 \Q^\top \betattb \right] \nonumber \\
& \qquad \quad - 4 \eta k n^2 \left[ \| \betattb \|_2^2 \, \betattb^\top \Q (\La - n \La^2) \Q^\top \betattb - \| \betattb \|_2^4 \, \tr{\La^2} \right] .\label{eq:fourth-term-loss-diff-non-iso}
\end{align}
Combining \eqref{eq:first-term-loss-diff-non-iso}, \eqref{eq:second-term-loss-diff-non-iso}, \eqref{eq:third-term-loss-diff-non-iso} and \eqref{eq:fourth-term-loss-diff-non-iso}, and plugging into the Equation \eqref{eq:loss-improvement-non-iso} gives the expected loss improvement:
\begin{align}
& \Lc(\W) - \Lctt(\Wnew) \\
& \approx 4 \eta k n^2 \left( \| \betattb \|_2^2 \, \betattb^\top \, \Q \left( \Iden - n \La \right) \Q^\top \, \betattb - \| \betattb \|_2^2 \, \betattb^\top \, \Q \La \Q^\top \, \betattb \right) \nonumber \\
& \quad -4 \eta k n^2 (n+1) \left( \| \betattb \|_2^2 \, \betattb^\top \, \Q (\La - n \La^2) \Q^\top \betattb - \| \betattb \|_2^2 \, \betattb^\top \, \Q \La^2 \Q^\top \betattb \right) \nonumber \\
& \quad - 4 \eta k n^2 \left[ \| \betattb \|_2^2 \, \betattb^\top \, \Q (\La - n \La^2) \Q^\top \betattb - \| \betattb \|_2^4 \, \tr{\La^2} \right] \nonumber \\
& \quad - 4 \eta^2 k (k+d+1) n \left[ n^3 (n+2) \| \betattb \|_2^6 \tr{\La^2} + n \left((n+1)^2 + n + d \right) \| \betattb \|_2^4 \, \betattb^\top \, \Q \left(\Iden - n \La \right)^2 \Q^\top \betattb \right] \nonumber \\
& \approx 4 \eta k n^2 \| \betattb \|_2^2 \, \betattb^\top \, \Q \left( \Iden - n \La \right)^2 \Q^\top \betattb - 4 \eta^2 k (k+d+1) n^2 \left[ n^3 \, \| \betattb \|_2^6 \, \tr{\La^2} + (n^2 + d) \, \| \betattb \|_2^4 \, \betattb^\top \, \Q \left(\Iden - n \La \right)^2 \Q^\top \betattb \right] . \label{eq:non-iso-final-gain-eta}
\end{align}
This is a quadratic expression in $\eta$, and we can solve for the optimal $\etast$ by setting the derivative to $0$. This way, the optimal step size is approximately:
\begin{align*}
\etast \approx \frac{\betattb^\top \Q \left( \Iden - n \La \right)^2 \Q^\top \betattb}{2 (k+d+1) \, \| \betattb \|_2^2 \left[ n^3 \| \betattb \|_2^2 \tr{\La^2} + (n^2+d) \, \betattb^\top \Q \left( \Iden - n\La \right)^2 \Q^\top \betattb \right]} .
\end{align*}
Plugging this optimal $\etast$ value into Equation \eqref{eq:non-iso-final-gain-eta}, the population loss gain becomes:
\begin{align*}
\frac{k}{k+d+1} \frac{\left( \betattb^\top \Q \left(\Iden - n \La \right)^2 \Q^\top \betattb \right)^2}{n \, \| \betattb \|_2^2 \, \tr{\La^2} + \left(1 + \frac{d}{n^2} \right) \betattb^\top \Q \left( \Iden - n\La \right)^2 \Q^\top \betattb} .
\end{align*}
Thus, considering the definitions $\betattt = \Q^\top \betattb$ and $A = \betattt^\top \left( \Iden - n \La \right)^2 \betattt$, $B = n \tr{\La^2} \, \| \betattt \|_2^2$ (notice that $\ell_2$ norm is unitarily-invariant, so that $\| \betattt \| = \| \betattb \| \, $) and recalling $n/d = \Theta(1)$ recovers the desired final expression. Also, by viewing the initial task as $\betattt$, we match the diagonal covariance form described in Section \ref{sect:generalcovariance}. Therefore, we conclude our argument
\begin{align*}
\Lc(\W) - \Lctt(\Wnew) \approx \frac{k}{k+d} \frac{A^2}{A + B} .
\end{align*}
For the second part of the proposition, let's write the initial loss by applying the covariance shift from Lemma \ref{lem:covariance-shift} so that the transformed features $\xb = \bSix^{-1/2}\,\x$ have identity covariance and $\Wb = \bSix^{1/2} \Wst \bSix^{1/2}$ is the corresponding minimizer of the transformed system. Again, we set $\betattt = \Q^\top \betattb$ in order to have diagonal covariances. Assuming $\sigma=0$ and plugging these in the population loss formula from Lemma \ref{lem:pop-loss-expression} with respect to the task $\betattb$ gives:
\begin{align*}
\Lc(\W) & = \betattb^\top \left[ \Iden
\;-\;
n \Iden \,\Wb \, \Iden
\;-\;
n \Iden \Wb^{\top} \,\,\Iden
\;+\; n (n+1) \Iden \Wb^{\top} \Iden \Wb \Iden + n \tr{\Wb^{\top} \Iden \Wb \Iden} \Iden \right] \betattb + \sigma^2 n \tr{\Wb^{\top} \Iden \Wb \Iden} + \sigma^2\\
& = \| \betattb \|_2^2 - 2n \, \betattb^\top \Q \La \Q^\top \betattb + n(n+1) \, \betattb^\top \Q \La^2 \Q^\top \betattb + n \tr{\La^2} \| \betattb \|_2^2  \\
& \approx \| \betattb \|_2^2 - 2n \, \betattb^\top \Q \La \Q^\top \betattb + n^2 \, \betattb^\top \Q \La^2 \Q^\top \betattb + n \tr{\La^2} \| \betattb \|_2^2 \\
& = \betattb^\top \Q \left( \Iden - n \La \right)^2 \Q^\top \betattb + n \| \betattb \|_2^2 \tr{\La^2} \\
& = \betattt^\top \left( \Iden - n \La \right)^2 \betattt + n \, \| \betattt \|_2^2 \tr{\La^2} \\
& = A + B.
\end{align*}
Hence, $\Lc(\W) \approx A + B$. The proof is complete.
\end{proof}

\Wstcharacterizationnonisot*

\begin{proof}
While \citet{li2024finegrainedanalysisincontextlinear} derives the solution form in the full-rank case, the same closed-form expression can be adapted when \(\bSibt\) is low-rank. Indeed, if $\bSibt$ has zeros in its last $d-r$ diagonal entries, those coordinates do not appear in the pre-training loss. Consequently, there is a set of minimizers $\W$ which differ only in those degenerate directions. Among these, the minimal-Frobenius-norm criterion forces all degenerate coordinates to be zero as any nonzero component in that subspace would increase $\|\W\|_F$. Consequently,
\begin{align*}
\Wst
~=\;
\left(
  (n+1)\,\Iden'
  \;+\;
  \tr{\bSibt}\,\bSibt^\dagger
\right)^{-1}
\end{align*}
zeroes out precisely the degenerate directions and is the unique Frobenius-minimal solution.

For the second part of the proposition, the argument directly follows from the proof of \Cref{theo:non-iso-opt-step}.
\end{proof}

\phasetransitionnonisotropic*

\begin{proof}
Recall the population loss for $\W$ given by \Cref{lem:pop-loss-expression}:
\begin{align*}
\Lc(\W) & = \betatt^\top \left[ \bSix
\;-\;
n \bSix \,\W\, \bSix
\;-\;
n \bSix \W^{\top} \,\,\bSix
\;+\; n (n+1) \bSix \W^{\top} \bSix \W \bSix + n \tr{\W^{\top} \bSix \W \bSix} \bSix \right] \betatt + \sigma^2 n \tr{\W^{\top} \bSix \W \bSix} + \sigma^2.
\end{align*}
First, plugging $\W = \boldsymbol{0}_{d \times d}$ yields the initial loss of $\| \betattt \|_2^2$. Also, setting $\Wstb = \boldsymbol{0}_{d \times d}$ in Theorem \ref{theo:non-iso-opt-step}, we know that the improvement by test-time-training is approximately:
\begin{align*}
\frac{k}{k+d} \, \| \betattt \|_2^2.
\end{align*}
At the same time, still by Theorem \ref{theo:non-iso-opt-step}, we know that if our initial weight matrix is the pre-trained $\Wst$ the improvement is approximately:
\begin{align*}
\frac{k}{k+d} \frac{A^2}{A + B}.
\end{align*}
Whereas, the corresponding initial loss is approximately $\Lc(\Wst) \approx A + B$. Therefore, we check when it's better to use pre-trained matrix $\Wst$ over $\boldsymbol{0}_{d \times d}$ (null) matrix with the below inequality, which compares the losses after the test-time-training update:
\begin{align*}
A + B - \frac{k}{k+d} \frac{A^2}{A + B} < \| \betattt \|_2^2 - \frac{k}{k+d} \| \betattt \|_2^2 .
\end{align*}

In lines with the proof of Corollary \ref{corol:phase-transition-isotropic}, denote $\beta := \frac{k}{k+d}$. Then, a series of algebraic manipulations give:
\begin{align*}
A + B - \beta \frac{A^2}{A + B} < \| \betattt \|_2^2 - \beta \, \| \betattt \|_2^2  & \iff c_1 + c_2 - \beta \frac{c_1^2}{c_1 + c_2} < 1 - \beta \\
& \iff \beta \left( 1 - \frac{c_1^2}{c_1 + c_2} \right) < 1 - (c_1 + c_2) \\
& \iff \frac{1}{\beta} > \frac{c_1 + c_2 - c_1^2}{(c_1 + c_2) (1-(c_1 + c_2))} \\
& \iff 1 + \frac{d}{k} > 1 + \frac{c_2 \, (2c_1 + c_2)}{c_1 + c_2 - (c_1 + c_2)^2} \\
& \iff \frac{k}{d} = \gamma < \frac{c_1 + c_2 - (c_1 + c_2)^2}{ c_2 \, (2c_1 + c_2)} .
\end{align*}
This completes our argument.
\end{proof}

\section{Further Experimental Results for Section \ref{sect:Experiments}}\label{app sect experiments}
To illustrate the improvement more clearly, we also provide a version of Figure~\ref{fig:TabPFN} with the x-axis on a log scale, shown in Figure~\ref{fig:TabPFN_log}.
\begin{figure}[!h]
  \begin{center}
    \includegraphics[width=0.48\textwidth]{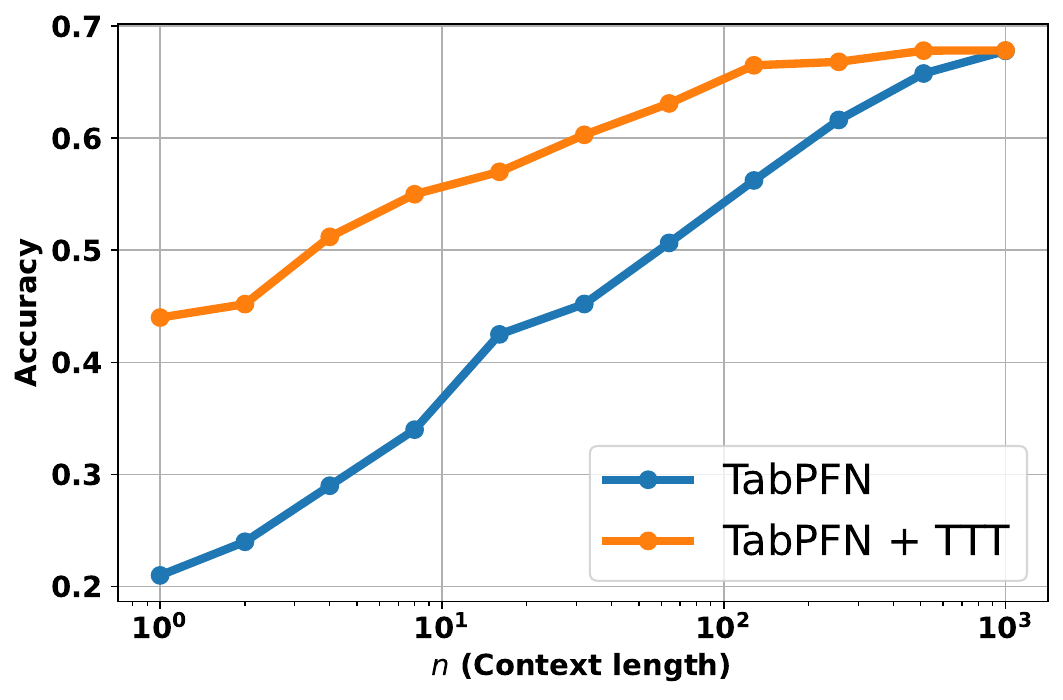}
  \end{center}
  \caption{Accuracy of TabPFN model with and without test-time-training as a function of number of in-context samples $n$ with the x-axis in log scale.}\label{fig:TabPFN_log}
\end{figure}


\end{document}